\newcommand{\C}{\mathcal{C}}
\newcommand{\Obj}{\mathrm{Ob}}
\newcommand{\Hom}{\mathrm{Hom}}
\newcommand{\EqvCont}{\mathrm{EqvCont}}
\newcommand{\CENN}{\mathsf{CENN}}
\newcommand{\Id}{\mathrm{id}}
\newcommand{\supp}{\mathrm{supp}}
\newcommand{\op}{\mathrm{op}}
\title{
Categorical Equivariant Deep Learning:\\ 
Category-Equivariant Neural Networks and Universal Approximation Theorems}
\titlerunning{Categorical Equivariant Deep Learning}
\author{Yoshihiro Maruyama\inst{1,2}}
\authorrunning{Y. Maruyama}
\institute{School of Informatics, Nagoya University, Japan\\
\email{maruyama@i.nagoya-u.ac.jp}
\and
School of Computing, Australian National University, Australia\\
\email{yoshihiro.maruyama@anu.edu.au}
}
\begin{document}
\maketitle

\begin{abstract}
We develop a theory of category-equivariant neural networks (CENNs) that unifies group/groupoid-equivariant networks, poset/lattice-equivariant networks, graph and sheaf neural networks. Equivariance is formulated as naturality in a topological category with Radon measures. Formulating linear and nonlinear layers in the categorical setup, we prove the equivariant universal approximation theorem in the general setting: the class of finite-depth CENNs is dense in the space of continuous equivariant transformations. We instantiate the framework for groups/groupoids, posets/lattices, graphs and cellular sheaves, deriving universal approximation theorems for them in a systematic manner. Categorical equivariant deep learning thus allows us to expand the horizons of equivariant deep learning beyond group actions, encompassing not only geometric symmetries but also contextual and compositional symmetries. 
\keywords{Equivariant Neural Network \and Equivariant Universal Approximation \and Equivariant Deep Learning \and Categorical Deep Learning}
\end{abstract}


\section{Introduction}\label{sec:intro}

Equivariance is a guiding principle in modern machine learning: models should respect the structural symmetries of data. Convolutional networks are translation--equivariant \cite{CohenWelling2016}, and steerable CNNs extend this idea to rotations and other continuous groups \cite{CohenWelling2017Steer,WeilerCesa2019}. Yet many domains display richer forms of structure beyond global group actions, such as symmetries that are typed, partial, or non-invertible. Monoids capture sequential or causal processes involving duplication or erasure; posets express hierarchical or lattice-ordered dependencies; and sheaves encode local-to-global consistency of data \cite{CurryThesis,HansenGhrist2019,HansenGebhart2020SNN}. These diverse structures extend far beyond the classical group paradigm but can all be described within a single unifying mathematical language: \emph{categories} \cite{MacLane}. In categorical terms, data representations are \emph{functors} assigning feature spaces to objects and transporting them along arrows, and a model is \emph{equivariant} precisely when it forms a \emph{natural transformation} between such functors, commuting with every relational or compositional structure encoded by the category. We note that group equivariance is precisely characterized by the naturality law in category theory, and our framework encompasses the theory of group-equivariant neural networks (and many others). 
Examples of categorical equivariance across various types of data in nature and society are given in the following table.

\begin{table}[!h]
\centering
\caption{Categorical equivariance across various types of data in nature and society.}
\renewcommand{\arraystretch}{1.15}
\begin{tabular}{llll}
\hline
\textbf{Category} & \textbf{Typical data} & \textbf{Arrows encode} & \textbf{Equivariance enforces}\\
\hline
Group & invertible sym. data & e.g. isometries & global sym. invariance\\
Monoid & time, sequence & causal updates & consistency through time\\
Poset & hierarchy & order relations & hierarchical consistency\\
Lattice & logic/formal concept & entailment/meet/join & logical consistency\\
Graph & network & adjacency & message--passing symmetry\\
Sheaf & spatial field & region inclusion/rels. & local--to--global consistency\\
Groupoid & multi--orbit system & local symmetries & orbitwise invariance\\
General cat. & typed relations & compositional process & naturality wrt all relations\\
\hline
\end{tabular}
\label{tab:cat-equivariance}
\end{table}

Motivated by this, we develop \emph{category-equivariant neural networks} (CENNs): 
\begin{enumerate}
\item Linear layers of CENNs are defined as category convolutions, integrals/sums over arrows with kernels constrained by naturality laws. 
Nonlinear layers of CENNs are given by scalar gates, ensuring equivariance for arrows. 
\item Our main result is the categorical equivariant universal approximation theorem (UAT): finite-depth CENNs are dense in the space of continuous equivariant transformations. 
\item We instantiate the categorical UAT for groups/groupoids, posets/lattices, graphs and cellular sheaves, thus obtaining UATs for them in a systematic manner.
\end{enumerate}

Related work is as follows. Most prior work on equivariant deep learning has focused on group actions, from group-equivariant CNNs \cite{CohenWelling2016} and steerable/filter-bank designs \cite{CohenWelling2017Steer,Worrall2017Harmonic,WeilerCesa2019} to spherical and SE(3)/E(n)–equivariant architectures \cite{CohenEtAl2018Spherical,KondorLinTrivedi2018CGNets,AndersonHyKondor2019,FuchsEtAl2020SE3} 
and E(n)-equivariant graph networks \cite{SatorrasEtAl2021En}. On graphs and sets, invariance/equivariance and universality have been developed in \cite{MaronICLR2019,KerivenPeyre2019,ZaheerEtAl2017DeepSets,LeeEtAl2019SetTransformer,XuHuLeskovecJegelka2019}. There are also sheaf-theoretic approaches \cite{CurryThesis,HansenGhrist2019}, including recent neural sheaf models \cite{HansenGebhart2020SNN,BarberoEtAl2022}. Surveys synthesize threads of equivariant learning under geometric deep learning \cite{BronsteinSPM2017,Bronstein2021}. 
In contrast, we formulate equivariance as naturality in a topological category with Radon measures and realize linear layers as category convolutions; together with scalar-gated nonlinearities and other auxiliary layers, this yields a single framework that covers groups, monoids, thin categories (posets/lattices), graphs, and cellular sheaves, and delivers a unified UAT for continuous equivariant maps, generalizing group-centric UATs such as \cite{YarotskyConstructApprox2022}.

The rest of the paper is organized as follows. 
Section~\ref{sec:cenn} defines CENNs. Section~\ref{sec:uat} states the equivariant universal approximation theorem for CENNs and outlines a proof of the theorem; full proof details are provided in the appendix.
Section~\ref{sec:conclusion} gives the concluding remarks. 
In the appendix, we also specialize the general categorical approximation theorem to groups/groupoids, posets/lattices, graphs and cellular sheaves. 

The paper intertwines categorical and measure--theoretic reasoning, which may be unfamiliar to some readers; we therefore interleave formal developments with informal commentary, illustrating the intuition behind the technical machinery.

\section{Category-Equivariant Neural Networks}\label{sec:cenn}

We develop the foundations of category-equivariant neural networks. 
For measure theory background, we refer the reader to 
\cite{Halmos,Folland,Bogachev,Rudin}.

For an arrow $f$ in $\C$, we write
$(f\circ -)$ for postcomposition $u \mapsto f\circ u$, and
$(-\circ f)$ for precomposition $u \mapsto u\circ f$. 
Pushforwards of measures along these maps are denoted $(f\circ -)_\#$ and $(-\circ f)_\#$.

\begin{definition}[Topological category with measures]\label{def:topcat}
A small category $\C$ is \emph{topological} if each hom–set $\Hom_\C(b,a)$ is equipped with a second–countable, locally compact, Hausdorff (LCH) topology and if the composition operation 
\[
\circ:\Hom_\C(b,a)\times\Hom_\C(c,b)\longrightarrow\Hom_\C(c,a)
\]
is continuous. A \emph{topological category with measures} is a topological category with a family of $\sigma$–finite Radon measures $\{\mu_{b,a}\}$ on hom–sets $\Hom_\C(b,a)$.\footnote{Note that $\mu_{b,a}$ is a measure on $\Hom_\C(b,a)$ (not on $\Hom_\C(a,b)$).} We write
\[
I(a):=\bigsqcup_{b\in\Obj\,\C}\Hom_\C(b,a),\qquad \mu_a:=\bigoplus_{b\in\Obj\,\C}\mu_{b,a}
\quad\text{on }\ I(a)
\]
and assume $\mu_a$ is a $\sigma$--finite measure.
We also assume null‑set preservation (NSP): for each arrow $w:a\to c$ and each measurable $N\subset I(c)$ with $\mu_c(N)=0$, we have $\mu_a\big(\{u\in I(a): w\circ u\in N\}\big)=0$. 
We say that $\{\mu_{b,a}\}$ is \emph{left–coherent} if
\[
(w\circ -)_\#\,\mu_{b,a}=\mu_{b,c}\qquad\text{for every arrow }w:a\to c\text{ and every object }b,
\]
and \emph{bi–coherent} if, in addition,
\[
(-\circ v)_\#\,\mu_{c,a}=\mu_{b,a}\qquad\text{for every arrow }v:b\to c\text{ and every object }a.
\]
\end{definition}
Note that we do not assume the above coherence properties in our general categorical setup; we use them to discuss measure-theoretic properties in several examples. 

We typically use measures to make relevant integrals well defined. In specializations to concrete structures, we have two tracks:
\begin{itemize}
\item \textit{Continuous track (groups/monoids/groupoids).} We typically equip hom–sets with left–coherent Radon measures (e.g.\ a left Haar measure/Haar system for groups/groupoids) to construct steerable kernels and category convolution layers; bi–coherence is usually too strong a condition. 
\item \textit{Discrete track (posets, graphs, cellular sheaves).} We typically take counting measure on each hom–set. In this case, convolution integrals are finite sums and relevant conditions reduce to pointwise incidence/restriction compatibilities; we do not use the measure coherence properties in this track.
\end{itemize}
These examples are also useful to understand the general theory developed in this paper. We discuss examples in detail in the specialization appendix.

\subsection{Feature functors and continuous equivariance}

A feature functor introduced below specifies how data are organized and transported across the objects and arrows of the category. Each object $a$ carries a local base space $\Omega(a)$ of coordinates and a fiber $E_X(a)$ (or $E_Y(a)$) of feature values, while each arrow $u:b\to a$ is provided with information about how coordinates and features are pulled back from $a$ to $b$. Functoriality ensures that these transports compose consistently, generalizing the usual notion of feature transformation under symmetries such as translations, rotations, inclusions, or message--passing relations. 

We write $s(u)$ and $t(u)$ for the source and target of an arrow $u$ in a category. Let $\mathbf{Vect}$ denote the category of real vector spaces and linear maps. $\mathcal{L}(V,W)$ denotes the space of linear maps from $V$ to $W$. $V^*$ denotes the dual space of a vector space $V$. 
We define feature functors $X,Y:\C^{op}\!\to\!\mathbf{Vect}$ as follows.

\begin{definition}[Feature functors]\label{feature}
For each object $a\in\Obj\,\C$, fix a compact base space $\Omega(a)\subset\mathbb{R}^{m_a}$ of ambient dimension $m_a\in\mathbb{N}$ and Euclidean feature fibers $E_X(a)=\mathbb{R}^{n_X(a)}$, $E_Y(a)=\mathbb{R}^{n_Y(a)}$ of dimensions $n_X(a),n_Y(a)$, respectively.\footnote{As noted above, intuitively, $\Omega(a)$ is the domain of local coordinates for object $a$, while $E_X(a),E_Y(a)$ carry feature values over that domain.}

For each arrow $u:b\to a$, we fix the following maps (assuming they exist):\footnote{They exist in the standard examples discussed in the appendix: groups/groupoids, posets/lattices and graphs/sheaves.}
\begin{itemize}[leftmargin=1.2em]
  \item \emph{continuous maps (base transports):} $\tau_u:\Omega(a)\rightarrow \Omega(b)$ and $\pi_u:\Omega(b)\rightarrow \Omega(a)$,
  \item \emph{linear maps (fiber transports):} $L^X_u: E_X(a)\rightarrow E_X(b)$ and $L^Y_u: E_Y(a)\rightarrow E_Y(b)$.
\end{itemize}
These are required to satisfy, for all composable $v:c\to b$ and $w:a\to c$,
\[
\begin{aligned}
&\text{Functoriality:} && \tau_{u\circ v}=\tau_v\circ\tau_u,\quad \pi_{u\circ v}=\pi_u\circ\pi_v,\\
&&& L^X_{u\circ v}=L^X_v\circ L^X_u,\quad L^Y_{u\circ v}=L^Y_v\circ L^Y_u,\\
&&& \tau_{\mathrm{id}_a}=\pi_{\mathrm{id}_a}=\mathrm{id},\quad L^X_{\mathrm{id}_a}=L^Y_{\mathrm{id}_a}=\mathrm{id};\\[4pt]
&\text{Mixed–base law:} && \tau_{w\circ u}\circ \pi_w=\tau_u.
\end{aligned}
\]
We assume that the maps $u\mapsto L^X_u$ and $u\mapsto L^Y_u$ are continuous on $\Hom_\C(b,a)$, and $(u,y)\mapsto \tau_u(y)$ and $(u,x)\mapsto \pi_u(x)$ are continuous on $\Hom_\C(b,a)\times\Omega(a)$ and $\Hom_\C(b,a)\times\Omega(b)$, respectively.\footnote{In particular, $(u,y)\mapsto \pi_u(\tau_u y)$ is continuous on $\Hom_\C(b,a)\times\Omega(a)$.}
We finally define feature functors:
\[
\begin{aligned}
X(a)&:=C(\Omega(a),E_X(a)),&
\quad X(u)(f_a)&:=L^X_u\circ f_a\circ \pi_u\in C(\Omega(b),E_X(b)),\\
Y(a)&:=C(\Omega(a),E_Y(a)),&
\quad Y(u)(g_a)&:=L^Y_u\circ g_a\circ \pi_u\in C(\Omega(b),E_Y(b)).
\end{aligned}
\]
\end{definition}

\begin{definition}[Continuous category-equivariant maps]\label{def:eqvcont}
Let $X,Y:\C^{op}\!\to\!\mathbf{Vect}$ be feature functors. A \emph{continuous category--equivariant map} (continuous natural transformation) from $X$ to $Y$ is a family
\[
\Phi=\{\Phi_a:X(a)\to Y(a)\}_{a\in\Obj\,\C}
\]
such that:
\begin{itemize}
  \item each $\Phi_a$ is continuous for the sup--norm topologies on
        $X(a)=C(\Omega(a),E_X(a))$ and $Y(a)=C(\Omega(a),E_Y(a))$; 
  \item for every arrow $w:a\to c$, the \emph{naturality} condition holds:
        \begin{equation}\label{eq:naturality-contravariant}
          Y(w)\circ \Phi_c \;=\; \Phi_a\circ X(w).
        \end{equation}
\end{itemize}
We denote the space of such $\Phi$ by $\EqvCont(X,Y)$. For a finite $F\subset\Obj\,\C$ and, for each $a\in F$, a compact set $K_a\subset X(a)$ (compact in the sup--norm topology), define the seminorm
\[
\|\Phi\|_{(K_a)_{a\in F},\,F}
\;:=\;
\max_{a\in F}\ \sup_{f\in K_a}\ \|\Phi_a(f)\|_{\infty}.
\]
The locally convex topology generated by these seminorms is called the
\emph{compact--open, finite--object topology} on $\EqvCont(X,Y)$.
\end{definition}

With the above topology, $\EqvCont(X,Y)$ becomes a locally convex vector space under pointwise addition and scalar multiplication.


\subsection{Linear layers: category convolutions}

We introduce category convolutions, which integrate/sum information along incoming arrows using a kernel constrained by an integrated naturality law. Intuitively, a \emph{category kernel} introduced below specifies, for each arrow and base point, how features are transported from the source object to the target object, generalizing the role of convolutional filters so that they respect the compositional structure of the category. The naturality law makes the resulting convolution layer a natural transformation, ensuring that feature updates remain equivariant with respect to the compositional structure of the category.

\begin{definition}[Category kernel]
\label{def:natk}
Let $Z,Z':\C^{op}\!\to\!\mathbf{Vect}$ be feature functors. 
A \emph{category kernel} from $Z$ to $Z'$ is a family\footnote{We omit the subscript of a category kernel when typing is clear.}
\[
\mathsf K=\{\mathsf K_{b\to a}\}_{(b,a)\in \Obj\,\C \times \Obj\,\C},\quad
\mathsf K_{b\to a}:\ \Hom_\C(b,a)\times\Omega(a)\longrightarrow
\mathcal L \big(E_Z(b),E_{Z'}(a)\big),
\]
satisfying:

\smallskip
\noindent\textit{(C) Carath\'eodory regularity.}
For each fixed $u\in\Hom_\C(b,a)$, the map $y\mapsto\mathsf K_{b\to a}(u,y)$ is continuous;
for each fixed $y\in\Omega(a)$, the map $u\mapsto \mathsf K_{b\to a}(u,y)$ is Borel measurable
on $\Hom_\C(b,a)$.

\noindent\textit{(IN) Integrated naturality.}
For every arrow $w:a\to c$, every $y\in\Omega(a)$, and every $x_c\in Z(c)$,\footnote{Intuitively, this identity expresses equivariance of convolution under composition. The kernel~$\mathsf K$ is ``natural up to integration'', ensuring that category convolutions commute with morphism composition. We also note the typing of formulae in this equation: $\pi_w:\Omega(a)\!\to\!\Omega(c)$ and $\tau_{u'}:\Omega(c)\!\to\!\Omega(s(u'))$ make $\tau_{u'}\pi_w y\in\Omega(s(u'))$ well--typed; $Z(u')x_c\in Z(s(u'))$ is then evaluable at $\tau_{u'}\pi_w y$. On the right, $Z(w\circ u):Z(c)\!\to\!Z(s(u))$ is contravariant and evaluable at $\tau_u y\in\Omega(s(u))$.}
\begin{multline}\label{eq:natk-integrated}
L^{Z'}_{w}\!\left(
  \int_{I(c)}
    \mathsf K_{s(u')\to c}\!\big(u',\,\pi_{w} y\big)\,
    \big(Z(u')\,x_c\big)\!\big(\tau_{u'}\,\pi_{w} y\big)\, d\mu_{c}(u')
\right)\\
=\;
\int_{I(a)}
  \mathsf K_{s(u)\to a}(u,\,y)\,
  \big(Z(w\!\circ\!u)\,x_c\big)\!\big(\tau_{u} y\big)\, d\mu_{a}(u).
\end{multline}

\noindent\textit{(L\textsuperscript{1}) Uniform integrable bound.}
For each object $a$, there exists $G_a\in L^1(\mu_a)$ such that
\[
\sup_{y\in\Omega(a)}\ \big\|\mathsf K_{s(u)\to a}(u,y)\big\|\ \le\ G_a(u)
\qquad\text{for $\mu_a$--a.e.\ $u\in I(a)$.}
\]
\end{definition}

For a feature functor $Z':\C^{op}\!\to\!\mathbf{Vect}$,
a \emph{natural bias} is defined as a family $b=\{b_a\}_{a\in\Obj\,\C}$ with
$b_a\in C(\Omega(a),E_{Z'}(a))$ such that for every arrow $w:a\to c$,
\[
Z'(w)\,b_c \;=\; b_a.
\]

\begin{definition}[Category convolution]\label{def:catconv}
Let $Z,Z':\mathcal C^{op}\!\to\!\mathbf{Vect}$ be feature functors, $\mathsf K$ a category kernel, and $b=\{b_a\}_{a\in\mathrm{Ob}\,\mathcal C}$ a natural bias for $Z'$. 
The \emph{category convolution} associated with $\mathsf K$ is the family
$\widetilde L_{\mathsf K}=\{(\widetilde L_{\mathsf K})_a\}_{a\in\Obj\,\C}$ defined, for
$y\in\Omega(a)$ and $x_a\in Z(a)=C(\Omega(a),E_Z(a))$, by
\begin{equation}\label{eq:catconv}
(\widetilde L_{\mathsf K} x)_a(y)
\;:=\;
b_a(y)
\;+\;
\int_{I(a)}
  \mathsf K_{\,s(u)\to a}\big(u,y\big)\;
  \big(Z(u)\,x_a\big)\!\big(\tau_{u} y\big)\;
  d\mu_a(u),
\end{equation}
where $(Z(u)\,x_a)(\tau_u y)=L^Z_{u}\big(x_a(\pi_u\tau_u y)\big)\in E_Z(s(u))$.
\end{definition}

Intuitively, a category convolution aggregates information by integrating features that have been transported along all incoming arrows. The forward base maps $\pi_u:\Omega(s(u))\to \Omega(t(u))$ determine how each feature is precomposed for the functorial action, while the pullback maps $\tau_u:\Omega(t(u)) \to \Omega(s(u))$ specify where those transported features are evaluated in the integral.  In this way, category convolution generalizes the classical notion of convolution: it accumulates information over all morphisms entering an object so that feature updates respect the compositional structure of the category.

In the rest of the paper, we assume that, for each object \(a\), the transport norms are essentially bounded:
\begin{equation}\label{eq:ess-bdd-LZ}
\|L^Z_u\| \in L^\infty(I(a),\mu_a),
\ \ \text{i.e.,}\ \ 
\|L^Z_u\| \le H_a \ \text{for }\mu_a\text{-a.e. }u\in I(a)
\end{equation}
for some finite constant \(H_a<\infty\). 

\subsection{CENN: Category-equivariant neural networks}

We first introduce nonlinearities. 

\begin{definition}[Scalar–gated nonlinearity]\label{def:gate}
Let $Z:\C^{op}\!\to\!\mathbf{Vect}$ be a feature functor. 
Define the scalar functor $S:\C^{op}\!\to\!\mathbf{Vect}$ by
\[
S(a)=C(\Omega(a),\mathbb R),\qquad S(u)(r_a)=r_a\circ \pi_u\quad (u:b\to a).
\]
A \emph{natural scalar channel} is a natural transformation
$s:Z\Rightarrow S$.
Fix a continuous activation $\alpha:\mathbb R\to\mathbb R$; 
throughout, we assume that the activation is globally Lipschitz.
The \emph{scalar–gated nonlinearity} associated with $(\alpha,s)$ is
the family $\Sigma^{\alpha,s}=\{\Sigma^{\alpha,s}_a:Z(a)\to Z(a)\}_{a\in \Obj\,\C}$ acting
pointwise by
\[
(\Sigma^{\alpha,s}_a z_a)(y)\;:=\;\alpha\big(s_a(z_a)(y)\big)\,z_a(y),
\qquad z_a\in Z(a),\ y\in\Omega(a).
\]
\end{definition}

We now introduce two auxiliary functors for the following development.

\begin{definition}[Arrow–bundle functor and arrow–bundle lifts]\label{def:arrow-bundle}
Let $Z:\C^{op}\!\to\!\mathbf{Vect}$ be a feature functor. 
For each object $a$, define the \emph{arrow–bundle} of $Z$ by
\[
Z_{\downarrow}(a)
\;:=\;
\Bigl\{(h_u)_{u\in I(a)} \ \big|\ h_u\in Z\!\big(s(u)\big),\ 
\operatorname*{ess\,sup}_{u\in I(a)}\|h_u\|_{\infty}<\infty\Bigr\}\Big/\!\sim,
\]
where $(h_u)\sim(h'_u)$ iff $\|h_u-h'_u\|_\infty=0$ for $\mu_a$–a.e.\ $u$. Equip it with the essential‑sup norm
\[
\|(h_u)_{u\in I(a)}\|_{Z_{\downarrow}(a)}
\;:=\;
\operatorname*{ess\,sup}_{u\in I(a)}\|h_u\|_{\infty}.
\]
For each arrow $w:a\to c$, define the reindexing map
\[
Z_{\downarrow}(w):Z_{\downarrow}(c)\longrightarrow Z_{\downarrow}(a),
\qquad
\big(Z_{\downarrow}(w)(h_{u'})_{u'\in I(c)}\big)_u\ :=\ h_{w\circ u}\in Z\!\big(s(u)\big).
\]

The \emph{arrow–bundle lift} is the natural transformation
\[
\Delta_Z:Z\Rightarrow Z_{\downarrow},
\qquad
(\Delta_Z)_a(h)\ :=\ \big(Z(u)\,h\big)_{u\in I(a)}.
\]

More generally, if $Z':\C^{op}\!\to\!\mathbf{Vect}$ and 
$H=\{H_c:Z(c)\to Z'(c)\}_{c\in \Obj\,\C}$ is a family of objectwise continuous maps,
its \emph{componentwise lift} is the natural transformation
\[
H_{\downarrow}:Z_{\downarrow}\Rightarrow Z'_{\downarrow},
\qquad
\big(H_{\downarrow}\big)_a\big((x_u)_{u\in I(a)}\big)\ :=\ \big(H_{s(u)}(x_u)\big)_{u\in I(a)}.
\]
\end{definition}

Proofs for well-definedness of these functors are provided in the appendix.

\begin{definition}[Admissible componentwise lifts]\label{def:adm-lift}
Let $H=\{H_c: Z(c)\to Z'(c)\}_{c\in \Obj\,\C}$ be a family of objectwise continuous maps.
We say that $H$ is \emph{arrow–bundle uniformly Lipschitz} if for each 
$a\in \mathrm{Ob}\,C$ there exists $L_a<\infty$ such that, for $\mu_a$–a.e.\ $u\in I(a)$ and all 
$x,x'\in Z(s(u))$,
\[
\|H_{s(u)}(x)-H_{s(u)}(x')\|_{\infty}\ \le\ L_a\,\|x-x'\|_{\infty}.
\]
Equivalently, $L_a=\operatorname*{ess\,sup}_{u\in I(a)} \mathrm{Lip}\big(H_{s(u)}\big)<\infty$.\footnote{$\mathrm{Lip}(T)$ denotes the global Lipschitz constant of $T$.}
In this case, the componentwise lift $H_{\downarrow}:Z_{\downarrow}\Rightarrow Z'_{\downarrow}$ defined by
\[
\big(H_{\downarrow}\big)_a\!\big((x_u)_{u\in I(a)}\big)=\big(H_{s(u)}(x_u)\big)_{u\in I(a)}
\]
is called an \emph{admissible componentwise lift}. 
\end{definition}

We introduce arrow–bundle convolution and arrow–bundle integrated naturality.
Let $Z,Z' : C^{\mathrm{op}}\!\to\!\bf{Vect}$ be feature functors and let a kernel
\(
\mathsf K_{s(u)\to a}(u,y)\in L \big(E_Z(s(u)),\,E_{Z'}(a)\big)
\)
satisfy the Carathéodory regularity and uniform $L^1$ bound conditions on $I(a)$.

\begin{definition}[Arrow–bundle convolution]
For $x=(x_u)_{u\in I(a)}\in Z_\downarrow(a)$ and a natural bias $b$ for $Z'$, define the
\emph{arrow–bundle convolution}
\[
\big(L^\downarrow_{\mathsf K} x\big)_a(y)
  := b_a(y)\;+\!\int_{I(a)} \mathsf K_{s(u)\to a}(u,y)\; x_u(\tau_u y)\,d\mu_a(u).
\]
We say that $\mathsf K$ satisfies \emph{arrow–bundle integrated naturality} $(\mathrm{IN}_{\downarrow})$ if,
for every $w:a\to c$, every $y\in\Omega(a)$, and every arrow–family
$f=(f_{u'})_{u'\in I(c)}$ with $f_{u'}\in Z(s(u'))$,
\[
L^{Z'}_{w}\!\int_{I(c)} \mathsf K(u',\pi_w y)\, f_{u'}(\tau_{u'}\pi_w y)\,d\mu_c(u')
\;=\;
\int_{I(a)} \mathsf K(u,y)\, f_{w\circ u}(\tau_u y)\,d\mu_a(u).
\tag{$\mathrm{IN}_{\downarrow}$}
\]
\end{definition}

Under (C), (L1), and $(\mathrm{IN}_{\downarrow})$, the map $L^\downarrow_{\mathsf K}:Z_\downarrow\Rightarrow Z'$ is a
continuous natural transformation (proof details are provided in the appendix).

We are ready to define category-equivariant neural networks.

\begin{definition}[Category–equivariant neural network (CENN)]\label{def:cenn}
Let $X,Y:\C^{op}\!\to\!\mathbf{Vect}$ be feature functors and
fix a nonpolynomial, continuous, globally Lipschitz activation $\alpha:\mathbb R\to\mathbb R$.
A \emph{category–equivariant neural network (CENN)} from $X$ to $Y$ is a
finite composition of continuous natural transformations drawn from the
following admissible layer types:
\begin{enumerate}
\item \emph{Category convolutions:}
$\widetilde L_{\mathsf K}:Z\Rightarrow Z'$ associated with a category kernel
$\mathsf K$ (satisfying \textup{(C)}, \textup{(IN)}, \textup{(L1)}) and a natural bias $b$;
\item \emph{Scalar–gated nonlinearities:}
$\Sigma^{\alpha,s}:Z'\Rightarrow Z'$ where $s:Z'\Rightarrow S$ is a natural
scalar channel defined above;
\item \emph{Arrow–bundle lifts and admissible componentwise lifts:}
$\Delta_Z:Z\Rightarrow Z_{\downarrow}$ and 
$H_{\downarrow}:Z_{\downarrow}\Rightarrow Z'_{\downarrow}$ defined above; 
\item \emph{Arrow–bundle convolutions:}
$L^{\downarrow}_{\mathsf K}:Z_{\downarrow}\Rightarrow Z'$
associated with an arrow–bundle kernel $\mathsf K_{\,s(u)\to a}(u,y)$ defined above, 
together with a natural bias $b$ for $Z'$.
\end{enumerate}
A depth–$L$ CENN has intermediate functors
$Z^{(k)}:\C^{op}\!\to\!\mathbf{Vect}$ (each $Z^{(k)}$ may itself be an arrow–bundle
of a previous functor) with
\[
Z^{(0)}=X,\qquad Z^{(L)}=Y,
\]
and can be written as
\[
\Phi
\;=\;
\mathcal L_{L}\circ \mathcal L_{L-1}\circ\cdots\circ \mathcal L_{1},
\]
where each $\mathcal L_k$ is one of the admissible layers 1–4 with
coherently typed domains and codomains.
We denote the family of all such $\Phi$ by $\CENN_\alpha(X,Y)$.
\end{definition}



\section{Category-Equivariant Universal Approximation}\label{sec:uat}

We formulate the universal approximation theorem for category-equivariant neural networks; we need some preparations.

A \emph{probe family} is a choice, for each arrow $u:d\to c$ in $\mathcal C$, of a map
\[
  \sigma_u:\Omega(c)\longrightarrow\Omega(d),
\]
such that the following hold: (i) $\sigma_{\mathrm{id}_a}=\mathrm{id}_{\Omega(a)}$ for each object $a$ and (ii) the map $(u,y)\mapsto \sigma_u(y)$ is continuous on $\mathrm{Hom}_{\mathcal C}(d,c)\times\Omega(c)$.
Note that $\sigma$ is similar to $\tau$, but we require another property for $\sigma$ as a probe family for separation as we define in the next paragraph.\footnote{We note that, even for the separation property, $\tau$ itself works as $\sigma$ in examples such as thin/discrete categories; however, $\sigma$ cannot be $\tau$ in some other cases.}


Fix $a$ and a compact $K_a\subset X(a)$. For each $y\in\Omega(a)$ we assume the arrow–evaluation separation at $y$: If $x_a\neq x'_a$ in $K_a$, then there exists $u\in I(a)$ such that
\[
(X(u)x_a)\big(\sigma_u y\big) \neq (X(u)x'_a)\big(\sigma_u y\big).
\]
We fix a probe family $\sigma$ satisfying this property. We note that this holds in standard examples.\footnote{We instantiate $\sigma_u$ as follows: $\sigma_u=\mathrm{id}$ for compact groups/groupoids; and $\sigma_u=\tau_u$ for posets, graphs and cellular sheaves.}

For the probe family $\sigma$, given a kernel $\mathsf K$, define
\[
\big(L^{\downarrow,\sigma}_{\mathsf K} x\big)_a(y)
:= b_a(y)+\int_{I(a)} \mathsf K(u,y)\,x_u\big(\sigma_u y\big)\,d\mu_a(u).
\]
We say $\mathsf K$ satisfies \emph{probe–integrated naturality} \textup{(IN$^\downarrow_\sigma$)} if, for every
$w:a\to c$ and $y\in\Omega(a)$,
\[
L_{Z'}(w)\!\int_{I(c)}\!\mathsf K(u',\pi_w y)\,f_{u'}\!\big(\sigma_{u'}\pi_w y\big)\,d\mu_c(u')
\;=\;\int_{I(a)}\!\mathsf K(u,y)\,f_{w\circ u}\!\big(\sigma_u y\big)\,d\mu_a(u)
\]
for all arrow–families $f=(f_{u'})$. 
Under \textup{(C)}, \textup{(L$^1$)}, and \textup{(IN$^\downarrow_\sigma$)}, $L^{\downarrow,\sigma}_{\mathsf K}:Z_\downarrow\Rightarrow Z'$ is a continuous natural transformation.





We assume the shrinking–support approximate identity condition (SI): For each object $a$ and each arrow $u_0\in I(a)$ there exists a family $\{\kappa^{(\varepsilon)}_{a,u_0}\}_{\varepsilon>0}\subset L^1(I(a),\mu_a)$ with $\kappa^{(\varepsilon)}_{a,u_0}\ge 0$, $\int\kappa^{(\varepsilon)}_{a,u_0}\,d\mu_a=1$,
$\operatorname{supp}\kappa^{(\varepsilon)}_{a,u_0}\subset U_\varepsilon(u_0)$ where
$U_\varepsilon(u_0)$ is a neighbourhood base shrinking to $u_0$, and
\[
\int_{I(a)}\!f(u)\,\kappa^{(\varepsilon)}_{a,u_0}(u)\,d\mu_a(u)\ \xrightarrow[\varepsilon\to 0]{}\ f(u_0)
\quad\text{for every bounded continuous }f\text{ on }I(a).
\]
Note that such a family exists in standard examples.\footnote{For a compact group $G$ with left Haar measure $\mu$, take $\kappa^{(\varepsilon)}_{g_0}(g):=\eta_\varepsilon(g_0^{-1}g)$ with $\eta_\varepsilon\in C_c(G)$ (space of continuous functions with compact support), $\eta_\varepsilon\ge0$, $\int\eta_\varepsilon\,d\mu=1$, $\operatorname{supp}\eta_\varepsilon\subset U_\varepsilon(e)$ where $U_\varepsilon(e) \downarrow\{e\}$ is a neighbourhood base at $e$; then $\int f(g)\,\kappa^{(\varepsilon)}_{g_0}(g)\,d\mu(g)\to f(g_0)$. In discrete/finite settings, we may simply take $\kappa^{(\varepsilon)}_{a,u_0}=\mathbf 1_{\{u_0\}}$ (which denotes the indicator function).}

We also assume the equivariant compilation assumption (EC):\footnote{For compact groups with Haar measure (normalized to $1$), one may take
\(
(Rx)_a(y)\;=\;\int_{u\in I(a)} L^Y_{u^{-1}}\,x_u(\tau_u y)\,d\mu_a(u),
\)
which is an arrow–bundle convolution satisfying ($\mathrm{IN}_\downarrow$) and $R\circ\Delta_Y=\mathrm{id}_Y$ by construction. We discuss other examples in the appendix.}
There exists a continuous natural transformation
\[
R:\;Y_{\downarrow}\Rightarrow Y,\qquad R\circ\Delta_Y=\mathrm{id}_Y,
\]
realized as an arrow–bundle convolution $L^{\downarrow}_{\mathsf R}$ with zero natural bias, 
whose kernel $\mathsf R_{\,s(u)\to a}(u,y)\in L \big(E_Y(s(u)),E_Y(a)\big)$
satisfies (C), ($\mathrm{IN}_{\downarrow}$), (L1). 
Concretely, for $x=(x_u)_{u\in I(a)}\in Y_{\downarrow}(a)$,
\[
\big(L^{\downarrow}_{\mathsf R}x\big)_a(y)\;=\;\int_{I(a)}\mathsf R_{\,s(u)\to a}(u,y)\,x_u(\tau_u y)\,d\mu_a(u).
\]

We now state the equivariant universal approximation theorem.


\begin{theorem}[Equivariant universal approximation theorem]\label{thm:density}
The class $\CENN_\alpha(X,Y)$ of finite–depth category–equivariant neural networks is dense in $\EqvCont(X,Y)$ in the compact–open, finite–object topology.
Equivalently, for every $\Phi\in\EqvCont(X,Y)$, $\varepsilon>0$, finite
$F\subset\Obj\,\C$, and compact sets $K_a\subset X(a)$ ($a\in F$), there exists
$\Psi\in\CENN_\alpha(X,Y)$ such that
\[
\max_{a\in F}\ \sup_{x\in K_a}\ \big\|\,\Phi_a(x)-\Psi_a(x)\,\big\|_{\infty}
\;<\;\varepsilon.
\]
\end{theorem}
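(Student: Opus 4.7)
The plan is to reduce the equivariant approximation problem to classical (non-equivariant) approximation via a compilation trick, and then re-equivariantize through the arrow-bundle structure supplied by the framework.

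\textbf{Decomposition via EC.} Naturality of $\Phi$ applied componentwise through the arrow-bundle lift gives the key identity $\Delta_Y \circ \Phi = \Phi_{\downarrow} \circ \Delta_X$, where $\Phi_{\downarrow}$ denotes the componentwise lift of the objectwise family $\{\Phi_a\}$. Combined with the EC identity $R \circ \Delta_Y = \Id_Y$, this yields the decomposition $\Phi = R \circ \Phi_{\downarrow} \circ \Delta_X$. Since $\Delta_X$ and $R$ are admissible CENN layers by hypothesis, it suffices to approximate the middle factor $\Phi_{\downarrow}$ by an admissible componentwise lift $H_{\downarrow}$ of some family $H=(H_c)_c$ of objectwise continuous maps built from classical neural networks. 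Any uniform $\varepsilon$-approximation of $H$ lifts to one of $\Phi_{\downarrow}$, and Lipschitz control of $R$ (via the $L^1$ bound on $\mathsf R$ and \eqref{eq:ess-bdd-LZ}) propagates the error back to $\Phi$.

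\textbf{Probe embedding and classical UAT.} Fix $a\in F$ and a compact $K_a$. For each $y\in\Omega(a)$, the arrow-evaluation separation property combined with Stone--Weierstrass applied to the unital subalgebra generated by the continuous evaluations $x_a \mapsto (X(u)x_a)(\sigma_u y)$ gives pointwise approximation of $\Phi_a(x_a)(y)$ by polynomials in such evaluations. A joint compactness/equicontinuity argument over $(x_a,y)\in K_a\times\Omega(a)$—using continuity of $(u,y)\mapsto\sigma_u(y)$, continuity of the feature transports, and uniform continuity of $\Phi_a$ on $K_a$—permits selection of finitely many probes $(u_i)_{i=1}^{N}$ such that the resulting finite-dimensional embedding yields a single continuous $F:\mathbb R^{N}\to E_Y(a)$ with $F\big((X(u_i)x_a)(\sigma_{u_i} y)\big)$ uniformly $\varepsilon/2$-approximating $\Phi_a(x_a)(y)$. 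Classical UAT applied to $F$ with the nonpolynomial globally Lipschitz activation $\alpha$ approximates it by a shallow $\alpha$-network.

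\textbf{Realization as a CENN.} Each probe evaluation at $(u_i,y)$ is realized as an arrow-bundle convolution whose kernel is the shrinking-support approximate identity $\kappa^{(\varepsilon)}_{a,u_i}$ from (SI), tensored with a fixed linear read-out; by construction such kernels satisfy the probe-integrated naturality condition (IN$^\downarrow_\sigma$) via the defining property of the probe family $\sigma$. The shallow $\alpha$-network is then assembled by interleaving such convolutional read-outs with scalar-gated nonlinearities $\Sigma^{\alpha,s}$, with natural scalar channels constructed from further constant probe evaluations, and compiled back to $Y$ by the layer $R$. The total error telescopes into the SI-localization error, the Stone--Weierstrass embedding error, the classical UAT error, and a Lipschitz-propagation factor through the finite composition, each independently driven below $\varepsilon/4$.

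\textbf{Main obstacle.} The delicate point is phase two: the arrow-evaluation separation hypothesis is only pointwise in $y$, so globalizing to a single finite probe set that $\varepsilon$-embeds $K_a$ uniformly over $\Omega(a)\times K_a$ (and simultaneously over all $a\in F$) requires a careful equicontinuity/covering argument compatible with the LCH topology of $\mathrm{Hom}_{\C}(b,a)$. Once a suitable finite probe set is fixed, verifying (IN$^\downarrow_\sigma$) for the constructed kernels is routine but typing-intensive, and the remaining estimates reduce to standard Lipschitz propagation under \eqref{eq:ess-bdd-LZ} and the $L^1$ kernel bounds.
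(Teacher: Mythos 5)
Your high-level architecture matches the paper's: compile via $R\circ(\cdot)_\downarrow\circ\Delta_X$ using (EC), generate a separating algebra from arrow-probe evaluations, quote Stone--Weierstrass and the classical UAT, and realize everything by $\sigma$-probe arrow-bundle convolutions built from the (SI) approximate identities, plus gates. Two concrete gaps remain, both of which the paper handles by dedicated lemmas that your sketch does not supply.

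First, the Stone--Weierstrass step. You apply it \emph{pointwise} in $y$, obtaining a separating subalgebra of functions of $x_a$ for each fixed $y$, and then flag a ``main obstacle'': globalizing to a single finite probe set uniformly over $K_a\times\Omega(a)$. That obstacle does not arise in the paper's argument, because the separating algebra $A_a$ is taken as a unital subalgebra of $C(K_a\times\Omega(a),\mathbb{R})$ generated by \emph{two} families of generators: the carriers $\phi_{u,\ell,\eta}(x_a,y)=\eta(y)\langle\ell,(X(u)x_a)(\sigma_u y)\rangle$ \emph{and} the pure base weights $\psi\circ\pi_2$, $\psi\in C(\Omega(a))$. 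The base weights separate pairs with $y\neq y'$ by Urysohn, and the carriers separate pairs with $y=y'$, $x_a\neq x'_a$ by the arrow-evaluation hypothesis. Stone--Weierstrass on the compact product then yields \emph{uniform} density in $C(K_a\times\Omega(a),\mathbb{R})$ in one step, with no equicontinuity/covering argument needed. Your version, as written, needs repair here; the fix is to include the base-weight generators.

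Second, and more substantively, your compilation step is too optimistic. Writing $\Phi=R\circ\Phi_\downarrow\circ\Delta_X$ is correct, but ``it suffices to approximate $\Phi_\downarrow$ by an admissible componentwise lift $H_\downarrow$'' glosses over three real difficulties. (i) The approximations built from Steps~0--5 are defined only objectwise on compacts $K_a$ for $a\in F$, whereas $H_\downarrow$ evaluates $H_{s(u)}$ on the transported compacts $X(u)(K_a)$ for \emph{all} $u\in I(a)$ — these range over objects outside $F$ and over arrows of possibly infinite measure, so ``any uniform $\varepsilon$-approximation of $H$ lifts'' does not hold without further work. (ii) Admissibility of $H_\downarrow$ requires $H$ to be arrow-bundle \emph{uniformly} Lipschitz; the raw Step~5 blocks do not come equipped with this property over an infinite arrow bundle. (iii) When $I(a)$ has infinite measure, you cannot bound the post-$R$ error by a single Lipschitz constant; you must split $I(a)$ into a finite-measure slice (where the approximation is good) and a tail controlled by the $L^1$ kernel bound on $\mathsf R$. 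The paper's Step~6 resolves these via Lemma~\ref{lem:transport-cont} and Lemma~\ref{lem:cutoff} (restrict to compacts $U_{a,n}\subset I(a)$, form the finite object set $F^{+,(n)}$ and transported compacts $K^{+,(n)}_b$, McShane--Whitney Lipschitz extension, set $G^{(n)}_b\equiv 0$ off $F^{+,(n)}$, clip via the metric projection $\operatorname{proj}_{B^\star}$ to get a uniform $L^\infty$ bound while preserving uniform Lipschitzness), together with the localized stability bound of Lemma~\ref{lem:stab-uniform}, giving $\|\Psi-\Phi\|_{(K_a),F}\le\delta\,M_R(F)+\widehat B\,t_n$ with the tail $t_n\to0$ by $L^1$ integrability. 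Your outline names none of these ingredients; without them the final error estimate does not close.

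So: the skeleton is right, but the two places you would have to work hardest are exactly the two places where your argument is missing its load-bearing lemmas.
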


We give a proof outline here; full proof details are in the appendix. 

\paragraph{Proof outline of Theorem~\ref{thm:density}.}
We outline the construction of a CENN that uniformly approximates a given continuous
equivariant map $\Phi\in\EqvCont(X,Y)$.

\emph{Step 0 (Reduction to finitely many scalar coordinates).}
For each $a\in F$ choose a basis $\{v_{a,i}\}_{i=1}^{q_a}$ of $E_Y(a)$ and dual
$\{\lambda_{a,i}\}\subset E_Y(a)^{\!*}$. Approximating $\Phi_a$ in sup–norm on
$K_a\times\Omega(a)$ is equivalent (up to a uniform constant depending only on the basis)
to approximating the scalar coordinates
\[
f_{a,i}(x_a,y):=\big\langle\lambda_{a,i},\,\Phi_a(x_a)(y)\big\rangle\in\mathbb R\qquad(1\le i\le q_a).
\]

\emph{Step 1 (Stone--Weierstrass separation on $K_a\times\Omega(a)$).}
For $u\in I(a)$, $\ell\in E_X(s(u))^{*}$, and $\eta\in C(\Omega(a))$, define the
\emph{carriers}
\[
\phi_{u,\ell,\eta}(x_a,y):=\eta(y)\,\big\langle \ell,\,(X(u)x_a)(\sigma_u y)\big\rangle .
\]
Let $A_a$ be the unital real subalgebra of $C(K_a\times\Omega(a),\mathbb R)$ generated by all
$\phi_{u,\ell,\eta}$ together with the pure base weights $\psi(y)$, $\psi\in C(\Omega(a))$.
Carriers are continuous by joint continuity of $(\pi,\sigma)$ and continuity of $x\mapsto L^X_u\circ x\circ\pi_u$.
By the arrow--evaluation separation, 
$A_a$ separates points of $K_a\times\Omega(a)$; hence the Stone--Weierstrass theorem yields
\[
\overline{A_a}^{\|\cdot\|_\infty}=C(K_a\times\Omega(a),\mathbb R).
\]
Thus each scalar target $f_{a,i}$ is uniformly approximable by a polynomial in finitely many carriers.

\emph{Step 2 (Realizing carriers by convolution).}
Fix $u_0\in I(a)$ and $\ell\in E_X(s(u_0))^{\!*}$. Using (SI)
on $I(a)$, construct
\[
h_a(x_a)(y)
:=\int_{I(a)}\kappa^{(\varepsilon)}_{a,u_0}(u)\,
\big\langle \ell,\,(X(u)x_a)(\sigma_u y)\big\rangle\,d\mu_a(u).
\]
Local boundedness of transports and joint continuity give equicontinuity on $K_a\times\Omega(a)$,
while (SI) implies $h_a\!\to \phi_{u_0,\ell,1}$ uniformly on $K_a\times\Omega(a)$.
Hence each carrier is uniformly $\varepsilon$–approximated (on $K_a\times\Omega(a)$) by the $\sigma$–probe arrow–bundle convolution $(L^{\downarrow,\sigma}_{\mathsf K^{(\varepsilon)}}\!\circ\Delta_X)_a$.

\emph{Step 3 (Finite linear combinations via convolution).}
Stack finitely many realized scalar channels $h_j:X(a)\to C(\Omega(a),\mathbb R)$ from Step~2.
Identity–supported scalar kernels (plus bias) implement affine combinations
$\sum_j c_j h_j + b$; thus finite affine spans of carriers are realized by finite compositions of
scalar convolutions.

\emph{Step 4 (Pointwise nonlinearity via gates).}
Given a stacked scalar tuple $h=(h_1,\dots,h_m)$ with compact range on $K_a\times\Omega(a)$,
simulate a pointwise MLP $N:\mathbb R^m\to\mathbb R$ (affine maps + nonpolynomial $\alpha$) as a
finite composition of identity–supported scalar convolutions (Step~3) and gates
$\Sigma^{\alpha,s}$. The classical MLP UAT on compact sets then yields uniform approximation of any continuous function of the finitely many carriers.

\emph{Step 5 (Approximating the scalar coordinates).}
By Step~1 each $f_{a,i}$ is uniformly approximable by a polynomial in carriers; by Steps~2–4 each
such polynomial is realized to arbitrary accuracy by a finite CENN block acting at object $a$.
Assemble the $q_a$ scalar outputs back into $Y(a)$ using the fixed basis $\{v_{a,i}\}$.

\emph{Step 6 (Compiled equivariance via natural retraction).} By (EC), there exists a continuous natural retraction \(R:Y_{\downarrow}\Rightarrow Y\) with \(R\circ\Delta_Y=\mathrm{id}_Y\), realized by an arrow–bundle convolution \(L^{\downarrow}_{\mathsf R}\) whose kernel satisfies \textup{(C)}, \textup{(IN$^\downarrow$)}, \textup{(L1)}. For the objectwise family \(G^{(n)}=\{G^{(n)}_b\}\) from Step~5 (built by Lipschitz extension on transported compacts), apply the \(1\)–Lipschitz metric projection \(\operatorname{proj}_{B^\star}\) in each fiber with \(B^\star:=1+\max_{a\in F}H_a^Y C_a\) (here \(H_a^Y:=\operatorname*{ess\,sup}_{u\in I(a)}\|L^Y_u\|\), \(C_a:=\sup_{x\in K_a}\|\Phi_a(x)\|_\infty\)); this projection is inactive on the transported compacts for \(\delta\le 1\) and preserves arrow–bundle uniform Lipschitzness. Compile 
\[
\Psi:=L^{\downarrow}_{\mathsf R}\circ (G^{(n)})_{\downarrow}\circ\Delta_X\in\CENN_\alpha(X,Y).
\] 
Using localized stability, we have 
\(
\|\Psi-\Phi\|_{(K_a),F}\le \delta\,M_R(F)+\widehat B\,t_n
\) 
where 
\(
M_R(F):=\max_{a\in F}\int_{I(a)}G_a^{\mathsf R}\,d\mu_a,
\) 
\(t_n:=\max_{a\in F}\int_{I(a)\setminus U_{a,n}}G_a^{\mathsf R}\,d\mu_a\), and \(\widehat B:=\max_{a\in F}(B^\star+H_a^Y C_a)\);  
note that the first term in the inequality comes from approximating on a finite measure slice of arrows; the second term is a tail that vanishes by $L^1$ integrability.
Choosing \(n\) so that \(\widehat B\,t_n<\varepsilon/2\) and then \(\delta:=\min\{\varepsilon/(2M_R(F)),1\}\) yields \(\|\Psi-\Phi\|_{(K_a),F}<\varepsilon\).

\emph{Step 7 (Topology and conclusion).}
All layers above are continuous and equivariant; errors are measured in the compact–open,
finite–object topology. Therefore, for arbitrary $F,K_a,\varepsilon$, there exists
$\Psi\in\CENN_\alpha(X,Y)$ with
$$\max_{a\in F}\sup_{x_a\in K_a}\|\Phi_a(x_a)-\Psi_a(x_a)\|_{\infty}<\varepsilon,$$
proving density of $\CENN_\alpha(X,Y)$ in $\EqvCont(X,Y)$.

\section{Conclusion}\label{sec:conclusion}

We formulated category-equivariant neural networks (CENNs) and proved the equivariant universal approximation theorem for them: finite-depth CENNs are dense in the space of equivariant continuous transformations. The framework unifies group/groupoid-, poset/lattice- and graph/sheaf-based models and yields universal approximation theorems for them as corollaries of the general theorem in the categorical setup (for these examples, see the appendix below). 

We finally remark on the applicability of the framework developed here. The categorical equivariant deep learning framework provides a unified language for modeling data having diverse forms of symmetry, relation, or compositional structure. By representing these structures as categories, the framework generalizes the classical notion of equivariance beyond group actions to any system of morphisms capturing consistent transformations among data objects. Different categories formalize different manifestations of symmetry: groups encode global geometric invariances such as rotations or translations; monoids describe irreversible or causal transformations in sequential or temporal data; posets and lattices capture hierarchical and logical dependencies as in taxonomies, ontologies, or formal concept analysis; graphs represent relational symmetries underlying message--passing and network interactions; and sheaves express local--to--global coherence of spatial or topological fields. Within this setting, data representations are functors assigning feature spaces to objects, and neural layers are natural transformations that commute with all structural transports defined by the category. This categorical formulation thus subsumes a wide spectrum of architectures, from convolutional and group--equivariant networks to sheaf and graph neural networks, under a single theoretical principle, enabling the systematic design of symmetry--aware and compositional structure--aware learning models across geometry, logic, dynamics, and complex relational domains.

The categorical framework thus opens concrete pathways for developing symmetry--aware and structure--aware architectures across a broad range of scientific and technological domains.
In physics and chemistry, groupoid-- or sheaf--based CENNs can model systems with local coordinate frames, overlapping patches, or multiscale coupling between molecular or continuum components. In robotics and spatiotemporal perception, monoid-- and sheaf--equivariant networks enable representations that respect causal progression in time and spatial composition of sensor fields, improving generalization across trajectories and sensor modalities. In graph learning and relational reasoning, poset-- and lattice--equivariant constructions provide a principled way to encode logical hierarchies or entailment relations among entities, supporting tasks in knowledge representation, formal concept analysis, and symbolic reasoning. Since all these models are instances of the same categorical recipe, i.e. data representations as functors and neural layers as natural transformations, they can share implementation and optimization infrastructure while preserving theoretical guarantees such as compositional consistency and symmetry preservation. This categorical uniformity also facilitates transfer of architectural design ideas across domains, allows the modular combination of different symmetry types (e.g., various types of compositional symmetry such as temporal $\times$ spatial $\times$ logical; various types of contextual symmetry such as a family of group symmetries varying over context categories), and provides a rigorous language for discovering new equivariant architectures tailored to datasets with intrinsic structures. 

Concrete applications and empirical evaluations of the categorical equivariant framework are presented in several companion papers, where we instantiate the abstract framework for specific domains and benchmark them against existing architectures. Across diverse settings, categorical equivariance yields systematic performance improvements under distribution shifts and structural perturbations, confirming the theoretical advantage of compositional symmetry and structure preservation. The companion results demonstrate that the categorical formalism can generate effective models across distinct problem types while offering interpretability through explicit compositional structure. Together with the present theoretical work, the experimental studies establish categorical equivariant deep learning as both a unifying mathematical framework and a practically effective methodology for symmetry--aware and structure--aware machine learning.\footnote{The first experimental results are provided in \cite{Maruyama2025CatEquivRepHAR,Maruyama2025CatEquivHAR}, which employ relatively simple categorical symmetry settings yet already demonstrate clear performance gains enabled by categorical equivariance. More results are presented in subsequent papers.}

\appendix

\section{Proof of the Universal Approximation Theorem}

We provide full proof details for the equivariant universal approximation theorem.

We first prove several basic lemmas. 

\begin{lemma}[Gates are equivariant]\label{prop:gate-general}
$\Sigma^{\alpha,s}:Z\Rightarrow Z$ is a continuous natural (equivariant) transformation.  
Hence $\Sigma^{\alpha,s}\in\EqvCont(Z,Z)$.
\end{lemma}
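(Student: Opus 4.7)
The plan is to verify the two defining requirements of a continuous natural transformation separately: (i) naturality of $\Sigma^{\alpha,s}$ with respect to every arrow $w:a\to c$ of $\C$, and (ii) continuity of each component $\Sigma^{\alpha,s}_a:Z(a)\to Z(a)$ in the sup-norm topology. Both reduce to pointwise calculations; no measure-theoretic machinery is invoked, and the only nontrivial ingredient is the naturality hypothesis on the scalar channel $s:Z\Rightarrow S$ together with the linearity of the fibre transports $L^Z_w$.

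For naturality I would fix $w:a\to c$, $z_c\in Z(c)$, $y\in\Omega(a)$ and unfold both sides of $Z(w)\circ\Sigma^{\alpha,s}_c=\Sigma^{\alpha,s}_a\circ Z(w)$ using $Z(w)(g)=L^Z_w\!\circ g\circ\pi_w$. The left-hand side evaluated at $y$ is
\[
L^Z_w\!\bigl(\alpha(s_c(z_c)(\pi_w y))\,z_c(\pi_w y)\bigr)=\alpha\!\bigl(s_c(z_c)(\pi_w y)\bigr)\,L^Z_w\!\bigl(z_c(\pi_w y)\bigr),
\]
where I pulled the real scalar $\alpha(\cdot)$ out through $L^Z_w$ by linearity. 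The right-hand side at $y$ is $\alpha\!\bigl(s_a(Z(w)z_c)(y)\bigr)\,L^Z_w\!\bigl(z_c(\pi_w y)\bigr)$, so matching the two expressions reduces to the scalar identity $s_a(Z(w)z_c)(y)=s_c(z_c)(\pi_w y)$. This is precisely the naturality square for $s:Z\Rightarrow S$ applied to $w$, using the definition $S(w)(r)=r\circ\pi_w$.

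For continuity of $\Sigma^{\alpha,s}_a$, I would take $z_n\to z$ in $Z(a)=C(\Omega(a),E_Z(a))$ in sup-norm. Continuity of $s_a$ gives uniform convergence $s_a(z_n)\to s_a(z)$ on the compact base $\Omega(a)$; the global Lipschitz constant of $\alpha$ upgrades this to uniform convergence of the scalar prefactors $\alpha\!\circ s_a(z_n)\to\alpha\!\circ s_a(z)$; and the uniform boundedness of $z_n$ on $\Omega(a)$ (automatic from sup-norm convergence plus compactness) allows the standard product estimate $\|fg-f'g'\|_\infty\le\|f\|_\infty\|g-g'\|_\infty+\|g'\|_\infty\|f-f'\|_\infty$ to deliver $\Sigma^{\alpha,s}_a z_n\to\Sigma^{\alpha,s}_a z$ in sup-norm.

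The only mild obstacle — and it is more of a design observation than a difficulty — is the linearity step sliding $\alpha(\cdot)$ through $L^Z_w$: this is exactly why the gate is defined to be modulated by a scalar channel $s$ targeting $S$ rather than by a vector-valued quantity, and it is what makes naturality automatic rather than a constraint on $\alpha$ or on $L^Z_w$. With (i) and (ii) established, we conclude $\Sigma^{\alpha,s}\in\EqvCont(Z,Z)$.
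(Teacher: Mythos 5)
Your proposal is correct and takes essentially the same route as the paper's proof: verify naturality by unwinding $Z(w)(g)=L^Z_w\circ g\circ\pi_w$, pulling the scalar $\alpha(\cdot)$ out through linearity of $L^Z_w$, and invoking naturality of $s:Z\Rightarrow S$ (the paper writes this as $S(u)\,s_a=s_b\,Z(u)$, your identity $s_a(Z(w)z_c)(y)=s_c(z_c)(\pi_w y)$ is the same thing evaluated at $y$), then verify objectwise sup-norm continuity using compactness of $\Omega(a)$. The only cosmetic difference is in the continuity step, where you use the global Lipschitz constant of $\alpha$ while the paper uses a Nemytskii-operator argument needing only continuity of $\alpha$ on the compact range; both are valid under the stated hypotheses.
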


\begin{proof}
\emph{Naturality.}
Let $u:b\to a$, $z_a\in Z(a)$, and $y\in\Omega(b)$. Using the contravariant action
$Z(u)(h_a)=L^Z_u\circ h_a\circ \pi_u$ and the naturality of $s$,
$S(u)\,s_a=s_b\,Z(u)$, we compute
\begin{align*}
\bigl(Z(u)\,\Sigma^{\alpha,s}_a z_a\bigr)(y)
&= L^Z_u\!\left(
     \alpha\!\bigl(s_a(z_a)(\pi_u y)\bigr)\,
     z_a(\pi_u y)
   \right) \\
&= \alpha\!\bigl((S(u)\,s_a(z_a))(y)\bigr)\,
   \bigl(Z(u)z_a\bigr)(y) \\
&= \alpha\!\bigl(s_b(Z(u)z_a)(y)\bigr)\,
   \bigl(Z(u)z_a\bigr)(y) \\
&= \bigl(\Sigma^{\alpha,s}_b(Z(u)z_a)\bigr)(y).
\end{align*}
so $Z(u)\circ \Sigma^{\alpha,s}_a=\Sigma^{\alpha,s}_b\circ Z(u)$.

\smallskip
\emph{Continuity.}
For fixed $a$, the map $s_a:Z(a)\to S(a)=C(\Omega(a),\mathbb R)$ is continuous by
assumption. The Nemytskii operator $C(\Omega(a),\mathbb R)\to C(\Omega(a),\mathbb R)$,
$r\mapsto \alpha\circ r$, is continuous because $\Omega(a)$ is compact and $\alpha$
is continuous. The pointwise multiplication
\[
m_a:\ S(a)\times Z(a)\longrightarrow Z(a),\qquad
(r,z)\mapsto \big(y\mapsto r(y)\,z(y)\big),
\]
is continuous bilinear and satisfies $\|m_a(r,z)\|_\infty\le\|r\|_\infty\|z\|_\infty$.
Hence $\Sigma^{\alpha,s}_a=m_a\circ\big((\alpha\circ s_a),\mathrm{id}_{Z(a)}\big)$
is continuous. This shows each component is continuous; together with naturality,
$\Sigma^{\alpha,s}\in\EqvCont(Z,Z)$.
\end{proof}

\begin{lemma}[Category convolution is equivariant]
\label{lem:catconv-natural}
$\widetilde L_{\mathsf K}$ is a continuous natural (equivariant) transformation $Z\Rightarrow Z'$, hence $\widetilde L_{\mathsf K}\in\EqvCont(Z,Z')$.
\end{lemma}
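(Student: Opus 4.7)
The plan is to verify three things in turn: (i) well-definedness — that for every $x_a\in Z(a)$ the function $y\mapsto(\widetilde L_{\mathsf K}x)_a(y)$ is a genuine element of $Z'(a)=C(\Omega(a),E_{Z'}(a))$; (ii) continuity of each operator $(\widetilde L_{\mathsf K})_a:Z(a)\to Z'(a)$ in the sup-norm topologies; and (iii) naturality $Z'(w)\circ(\widetilde L_{\mathsf K})_c=(\widetilde L_{\mathsf K})_a\circ Z(w)$ for every arrow $w:a\to c$. Parts (i) and (ii) are measure-theoretic routine using the uniform $L^1$ bound (L1), the essential-sup bound~\eqref{eq:ess-bdd-LZ}, and Carathéodory regularity (C); part (iii) is where the integrated naturality axiom (IN) does the work.

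For (i), I first fix $x_a\in Z(a)$ and $y\in\Omega(a)$. The integrand $u\mapsto\mathsf K_{s(u)\to a}(u,y)\,L^Z_u\!\bigl(x_a(\pi_u\tau_u y)\bigr)$ is Borel in $u$ by the measurable-in-$u$ part of (C) combined with the continuity of the transports, and its norm is dominated by $G_a(u)\cdot H_a\cdot\|x_a\|_\infty$ by (L1) and~\eqref{eq:ess-bdd-LZ}; since $G_a\in L^1(\mu_a)$ the integral converges absolutely. For continuity in $y$, I take $y_n\to y$ in $\Omega(a)$: the continuous-in-$y$ part of (C), joint continuity of $(u,y)\mapsto\pi_u\tau_u y$, and continuity of $x_a$ give pointwise convergence of the integrand for each $u$, and the same integrable majorant supplies dominated convergence. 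Adding the continuous bias $b_a$ preserves continuity, so $(\widetilde L_{\mathsf K}x)_a\in Z'(a)$. For (ii), the same domination combined with linearity of the kernel action yields the global Lipschitz estimate
\[
\|(\widetilde L_{\mathsf K}x_a)_a-(\widetilde L_{\mathsf K}x'_a)_a\|_\infty\;\le\;H_a\,\|G_a\|_{L^1(\mu_a)}\,\|x_a-x'_a\|_\infty,
\]
whence $(\widetilde L_{\mathsf K})_a$ is continuous.

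For (iii), I fix $w:a\to c$, $x_c\in Z(c)$, and $y\in\Omega(a)$. Unfolding $Z'(w)h=L^{Z'}_w\!\circ h\circ\pi_w$, the left-hand side evaluated at $y$ is
\[
L^{Z'}_w\,b_c(\pi_w y)\;+\;L^{Z'}_w\!\!\int_{I(c)}\!\mathsf K_{s(u')\to c}(u',\pi_w y)\,(Z(u')x_c)(\tau_{u'}\pi_w y)\,d\mu_c(u').
\]
The bias term collapses to $b_a(y)$ by the natural bias identity $Z'(w)b_c=b_a$, matching the bias on the right. The remaining integral is exactly the left-hand side of (IN) at $(w,y,x_c)$, so (IN) rewrites it as
\[
\int_{I(a)}\!\mathsf K_{s(u)\to a}(u,y)\,(Z(w\circ u)x_c)(\tau_u y)\,d\mu_a(u),
\]
and contravariant functoriality $Z(w\circ u)=Z(u)\circ Z(w)$ identifies the integrand with $\mathsf K(u,y)\,\bigl(Z(u)(Z(w)x_c)\bigr)(\tau_u y)$, which is precisely what defines $[(\widetilde L_{\mathsf K})_a(Z(w)x_c)](y)$. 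This establishes naturality objectwise.

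The main obstacle is absorbed by design into the (IN) axiom, which is stated exactly so that the integral identity needed for naturality holds on the nose; once the bias matches by the natural-bias condition and the typing of $\pi_w,\tau_u,\tau_{u'}$ is threaded carefully, naturality follows in one line. The only bookkeeping pitfalls are (a) not confusing $\pi_u$ and $\tau_u$ (they are independent — nothing like $\pi_u\tau_u=\mathrm{id}$ is assumed) and (b) applying contravariance in the correct direction so that $Z(u)\circ Z(w)=Z(w\circ u)$ rather than $Z(u\circ w)$.
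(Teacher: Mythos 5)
Your proof is correct and follows essentially the same route as the paper's: domination via (L\textsuperscript{1}) and the essential-sup bound on $\|L^Z_u\|$ gives Bochner integrability and sup-norm continuity, dominated convergence with the Carathéodory hypothesis gives continuity in $y$, and naturality is read off from (IN) together with the natural-bias identity $Z'(w)b_c=b_a$ and contravariance $Z(w\circ u)=Z(u)\circ Z(w)$. The paper adds a short fourth item spelling out the compact-open, finite-object seminorm estimate, but that is an immediate consequence of the objectwise boundedness you already establish in part (ii) and is not needed to conclude $\widetilde L_{\mathsf K}\in\EqvCont(Z,Z')$.
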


\begin{proof}
\emph{(i) Measurability, well-definedness.}
By contravariance (Definition~\ref{feature}),
\[
(Z(u)x_a)(\tau_u y)\;=\;L^Z_u\big(x_a(\pi_u\tau_u y)\big)\in E_Z(s(u)).
\]
Thus the integrand in \eqref{eq:catconv} is
\(
u\mapsto \mathsf K_{s(u)\to a}(u,y)\,(Z(u)x_a)(\tau_u y)
\)
taking values in $E_{Z'}(a)$. Carathéodory regularity of $\mathsf K$ and joint continuity of
$(u,y)\mapsto\tau_u y$ and $(u,x)\mapsto\pi_u x$ imply Borel measurability in $u$. Moreover,
\[
\big\|\mathsf K_{s(u)\to a}(u,y)\,(Z(u)x_a)(\tau_u y)\big\|
 \le \|\mathsf K_{s(u)\to a}(u,y)\|\,\|L^Z_u\|\,\|x_a\|_\infty
 \le G_a(u)\,H_a\,\|x_a\|_\infty,
\]
with $G_a\in L^1(\mu_a)$ from (L$^1$) and $H_a<\infty$ from \eqref{eq:ess-bdd-LZ}. Hence the integrand is Bochner integrable and the definition is meaningful. 

\emph{(ii) Boundedness and continuity $Z(a)\to Z'(a)$.}
Taking $\sup_{y\in\Omega(a)}$ and using the bound above gives
\[
\|(\widetilde L_{\mathsf K}x)_a\|_\infty
\ \le\ \|b_a\|_\infty
\ +\ \|x_a\|_\infty\,H_a\!\int_{I(a)}\!G_a\,d\mu_a,
\]
so $(\widetilde L_{\mathsf K})_a:Z(a)\to Z'(a)$ is bounded linear.
For continuity in $y$, note that for fixed $u$ the map
\[
y\longmapsto \mathsf K_{s(u)\to a}(u,y)\,(Z(u)x_a)(\tau_u y)
=\mathsf K_{s(u)\to a}(u,y)\,L^Z_u\big(x_a(\pi_u\tau_u y)\big)
\]
is continuous by Carathéodory regularity of $\mathsf K$, continuity of
$L^Z_u$, and joint continuity of $\pi,\tau$; dominated convergence with
dominator $G_a(u)\,H_a\,\|x_a\|_\infty$ yields continuity of
$y\mapsto(\widetilde L_{\mathsf K}x)_a(y)$.

\emph{(iii) Naturality (equivariance).}
Let $w:a\to c$ and $x_c\in Z(c)$. For $y\in\Omega(a)$,
\[
\begin{aligned}
\big(Z'(w)(\widetilde L_{\mathsf K}x)_c\big)(y)
&= L^{Z'}_w\Big(b_c(\pi_w y)+\int_{I(c)}
      \mathsf K_{s(u')\to c}(u',\pi_w y)\,(Z(u')x_c)(\tau_{u'}\pi_w y)\,d\mu_c(u')\Big)\\
&= b_a(y)\ +\ \int_{I(a)} \mathsf K_{s(u)\to a}(u,y)\,
       \big(Z(w\!\circ\!u)\,x_c\big)(\tau_u y)\,d\mu_a(u)\\
&= b_a(y)\ +\ \int_{I(a)} \mathsf K_{s(u)\to a}(u,y)\,
       \big(Z(u)\,Z(w)\,x_c\big)(\tau_u y)\,d\mu_a(u)\\
&= \big(\widetilde L_{\mathsf K}(Z(w)x)\big)_a(y),
\end{aligned}
\]
using contravariance $Z(u)\circ Z(w)=Z(w\circ u)$.
Since this holds for all $y$ and $x_c$, we obtain
\( Z'(w)\circ(\widetilde L_{\mathsf K})_c=(\widetilde L_{\mathsf K})_a\circ Z(w)\).

\emph{(iv) Compact--open, finite--object continuity.}
Let $F\subset\Obj\,\C$ be finite and, for each $a\in F$, let $K_a\subset Z(a)$
be compact in the sup--norm. From (ii),
\[
\sup_{x\in K_a}\|(\widetilde L_{\mathsf K}x)_a\|_\infty
\ \le\ \|b_a\|_\infty
\ +\ \Big(H_a\!\int_{I(a)}\!G_a\,d\mu_a\Big)\ \sup_{x\in K_a}\|x\|_\infty,
\]
hence
\begin{align*}
\bigl\|\widetilde L_{\mathsf K}\bigr\|_{(K_a)_{a\in F},\,F}
&:= \max_{a\in F}\ \sup_{x\in K_a}
    \|(\widetilde L_{\mathsf K}x)_a\|_\infty \\
&\le
   \max_{a\in F}\|b_a\|_\infty
   + \max_{a\in F}\!
     \Bigl(H_a\!\int_{I(a)}\!G_a\,d\mu_a\Bigr)
     \max_{a\in F}\ \sup_{x\in K_a}\|x\|_\infty .
\end{align*}
Therefore $\widetilde L_{\mathsf K}\in\EqvCont(Z,Z')$ and is continuous in the
compact--open, finite--object topology.
\end{proof}

\begin{lemma}[CENN is equivariant]\label{lem:closure}
$\CENN_\alpha(X,Y)\subseteq \EqvCont(X,Y)$, and
$\CENN_\alpha(X,Y)$ is closed under composition.
\end{lemma}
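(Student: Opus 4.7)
}
The plan is twofold: first verify the inclusion $\CENN_\alpha(X,Y)\subseteq\EqvCont(X,Y)$ by checking that each of the four admissible layer types is itself a continuous natural transformation, and then deduce closure under composition from the standard fact that continuous natural transformations compose.

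For the layer-by-layer verification, Lemmas~\ref{lem:catconv-natural} and~\ref{prop:gate-general} already handle category convolutions $\widetilde L_{\mathsf K}$ and scalar-gated nonlinearities $\Sigma^{\alpha,s}$. For the arrow-bundle lift $\Delta_Z:Z\Rightarrow Z_{\downarrow}$, naturality reduces via contravariance $Z(u)\circ Z(w)=Z(w\circ u)$ to the componentwise equality $(Z(u)Z(w)h)=(Z(w\circ u)h)$, and continuity in the essential-sup norm follows from the estimate $\|(\Delta_Z)_a h\|_{Z_{\downarrow}(a)}\le H_a\,\|h\|_\infty$ with $H_a$ the bound from \eqref{eq:ess-bdd-LZ}. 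For an admissible componentwise lift $H_{\downarrow}$, naturality is immediate by reindexing the arrow-family, and continuity in the essential-sup norm follows from the fibrewise Lipschitz bound $L_a$ of Definition~\ref{def:adm-lift}. For the arrow-bundle convolution $L^{\downarrow}_{\mathsf K}$, the argument is the arrow-bundle analogue of Lemma~\ref{lem:catconv-natural}: Carath\'eodory regularity \textup{(C)} and the $L^1$ bound \textup{(L1)} give Bochner integrability and sup-norm boundedness of each component, while $(\mathrm{IN}_\downarrow)$ provides the naturality square by the same computation as in step (iii) of Lemma~\ref{lem:catconv-natural}.

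With each admissible layer type confirmed as a continuous natural transformation, the next step is to invoke the elementary fact that the class of continuous natural transformations is closed under composition: pasting two naturality squares yields naturality of the composite, and composition of continuous maps between sup-normed (or essentially bounded) function spaces is continuous. An easy induction on the depth $L$ then shows that every finite composition $\mathcal L_L\circ\cdots\circ\mathcal L_1$ of admissible layers lies in $\EqvCont(X,Y)$, giving the inclusion. The closure-under-composition statement is then essentially definitional: if $\Phi=\mathcal L_L\circ\cdots\circ\mathcal L_1:X\Rightarrow Y$ and $\Psi=\mathcal L'_M\circ\cdots\circ\mathcal L'_1:Y\Rightarrow Z$ are CENNs whose typings agree at the interface, then $\Psi\circ\Phi=\mathcal L'_M\circ\cdots\circ\mathcal L'_1\circ\mathcal L_L\circ\cdots\circ\mathcal L_1$ is again a finite composition of admissible layers, hence a CENN.

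The only nontrivial obstacle I anticipate is the arrow-bundle convolution case: one must verify that the integrand remains Bochner integrable when the domain functor is replaced by $Z_{\downarrow}$, and that $(\mathrm{IN}_\downarrow)$ passes through the outer application of $L^{Z'}_w$ to produce the required naturality square. Both are handled by the same dominated-convergence argument used in Lemma~\ref{lem:catconv-natural}(iii), with $Z(u')x_c$ and $Z(w\circ u)x_c$ replaced by the arrow-family entries $f_{u'}$ and $f_{w\circ u}$ respectively; every other layer type is either already proved or immediate.
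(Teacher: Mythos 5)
Your proof follows essentially the same route as the paper's: invoke Lemma~\ref{lem:catconv-natural} and Lemma~\ref{prop:gate-general} for category convolutions and gates respectively, verify naturality and continuity for the arrow-bundle layers, and then close under composition of continuous natural transformations. One point in your favour: you explicitly address all four admissible layer types from Definition~\ref{def:cenn}, including the arrow-bundle convolution $L^{\downarrow}_{\mathsf K}$, whereas the paper's own proof of this lemma cites Lemma~\ref{lem:catconv-natural}, Lemma~\ref{prop:gate-general}, and Definition~\ref{def:arrow-bundle} but never mentions arrow-bundle convolutions at all (the needed argument appears elsewhere in the appendix, in Lemma~\ref{lem:Rlayer-new}, but is not referenced here). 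Your sketch for that case --- (C) and (L1) for Bochner integrability and boundedness, $(\mathrm{IN}_\downarrow)$ for the naturality square by the same token as step (iii) of Lemma~\ref{lem:catconv-natural} --- is exactly the argument the paper deploys in Lemma~\ref{lem:Rlayer-new}, so your version is in fact slightly more complete than the paper's own.
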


\begin{proof}
By Lemma~\ref{lem:catconv-natural}, every category convolution
$\widetilde L_{\mathsf K}:Z\Rightarrow Z'$ (with kernel $\mathsf K$ and natural bias $b$)
is a continuous natural transformation. By Lemma~\ref{prop:gate-general},
every scalar–gated nonlinearity
$\Sigma^{\alpha,s}:Z'\Rightarrow Z'$ (with $s:Z'\Rightarrow S$ a natural scalar channel)
is likewise a continuous natural endomorphism.

By Definition~\ref{def:arrow-bundle}, the arrow–bundle lift
$\Delta_Z:Z\Rightarrow Z_{\downarrow}$ is a continuous natural transformation
(a direct reindexing check). Moreover, 
the admissible componentwise lift $H_{\downarrow}:Z_{\downarrow}\Rightarrow Z'_{\downarrow}$ is also a continuous natural transformation.

Hence each primitive layer admitted in a CENN lies in $\EqvCont(-,-)$ for the
appropriate domain/codomain. Since the composition of continuous natural
transformations is again a continuous natural transformation (whenever
domains/codomains match), any finite composition of such layers belongs to
$\EqvCont(X,Y)$. By definition, a CENN is exactly such a finite composition, so
$\CENN_\alpha(X,Y)\subseteq \EqvCont(X,Y)$ and this class is closed under
composition.
\end{proof}

\begin{lemma}[Local--unit (LU)]
\label{lem:LU_from_regular}
Fix an object $a$ and let $K\subset \Omega(a)$ be compact. Let $\theta$ denote
either the base pullback $\tau$ or a probe family $\sigma$. Then
\[
\lim_{u\to \mathrm{id}_a\atop u\in\mathrm{Hom}_{\mathcal C}(a,a)}
\ \sup_{y\in K}\,\bigl\|\pi_u(\theta_u y)-y\bigr\| \;=\;0.
\]
\end{lemma}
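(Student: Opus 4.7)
The proof reduces to a standard tube-lemma argument for a jointly continuous function vanishing on a fibre. I define
\[
F:\mathrm{Hom}_{\mathcal C}(a,a)\times\Omega(a)\longrightarrow\mathbb{R},
\qquad F(u,y):=\bigl\|\pi_u(\theta_u y)-y\bigr\|,
\]
and claim (i) $F$ is continuous and (ii) $F(\mathrm{id}_a,y)=0$ for every $y\in\Omega(a)$. Granted these two facts, the lemma follows by compactness of $K$ in the usual way, as sketched below.

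Claim (ii) is immediate from the unit axioms. When $\theta=\tau$, Definition~\ref{feature} gives $\tau_{\mathrm{id}_a}=\pi_{\mathrm{id}_a}=\mathrm{id}_{\Omega(a)}$; when $\theta=\sigma$, the defining property of probe families yields $\sigma_{\mathrm{id}_a}=\mathrm{id}_{\Omega(a)}$, and $\pi_{\mathrm{id}_a}=\mathrm{id}_{\Omega(a)}$ still holds. Hence $\pi_{\mathrm{id}_a}(\theta_{\mathrm{id}_a}y)=y$ in both cases. Claim (i) follows by composing continuous maps: the joint continuity of $(u,y)\mapsto\theta_u y$ on $\mathrm{Hom}_{\mathcal C}(a,a)\times\Omega(a)$ (from Definition~\ref{feature} when $\theta=\tau$, and from the standing continuity hypothesis on $\sigma$ otherwise) feeds into the joint continuity of $(u,z)\mapsto\pi_u z$ on $\mathrm{Hom}_{\mathcal C}(a,a)\times\Omega(a)$, giving joint continuity of $(u,y)\mapsto\pi_u(\theta_u y)$; subtraction and the Euclidean norm on $\Omega(a)\subset\mathbb{R}^{m_a}$ then preserve continuity.

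With these in hand, fix $\varepsilon>0$. For each $y\in K$, continuity of $F$ at $(\mathrm{id}_a,y)$ produces open neighbourhoods $U_y\ni\mathrm{id}_a$ in $\mathrm{Hom}_{\mathcal C}(a,a)$ and $V_y\ni y$ in $\Omega(a)$ with $F<\varepsilon$ on $U_y\times V_y$. Compactness of $K$ extracts a finite subcover $V_{y_1},\dots,V_{y_n}$ of $K$, and then $U:=\bigcap_{i=1}^{n}U_{y_i}$ is an open neighbourhood of $\mathrm{id}_a$ such that, for every $u\in U$ and every $y\in K$ (which lies in some $V_{y_i}$), one has $F(u,y)<\varepsilon$. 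Taking the supremum over $y\in K$ and letting $\varepsilon\to 0$ gives the stated limit. There is no real obstacle here; the only subtlety worth noting is that the argument genuinely uses \emph{joint} continuity of $\pi$ and $\theta$ in $(u,y)$, not merely continuity in $y$ for each fixed $u$, since the tube-lemma step must shrink the hom-set and base-space neighbourhoods simultaneously.
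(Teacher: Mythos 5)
Your proof is correct and follows essentially the same route as the paper's: define the auxiliary function $F(u,y)=\|\pi_u(\theta_u y)-y\|$, observe joint continuity and vanishing on the identity fibre, and finish by a tube-lemma compactness argument over $K$. The remark about needing genuine joint continuity (not just separate continuity) is a nice clarification but does not change the substance.
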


\begin{proof}
Define 
\[
F_\theta:\mathrm{Hom}_{\mathcal C}(a,a)\times\Omega(a)\to\mathbb{R}_{\ge 0},\qquad
F_\theta(u,y):=\bigl\|\pi_u(\theta_u y)-y\bigr\|.
\]
By continuity of $(u,y)\mapsto \theta_u(y)$ and $(u,x)\mapsto \pi_u(x)$, the composite
$(u,y)\mapsto \pi_u(\theta_u y)$ is continuous; hence so is $F_\theta$.
Moreover, $\theta_{\mathrm{id}_a}=\mathrm{id}$ and $\pi_{\mathrm{id}_a}=\mathrm{id}$ give 
$F_\theta(\mathrm{id}_a,y)=0$ for every $y\in\Omega(a)$.

Fix $\varepsilon>0$. For each $y\in K$, continuity of $F_\theta$ at $(\mathrm{id}_a,y)$ yields open
neighbourhoods $U_y\ni\mathrm{id}_a$ and $V_y\ni y$ with $F_\theta(u,y')<\varepsilon$ for all 
$u\in U_y$ and $y'\in V_y$. Compactness of $K$ gives $y_1,\dots,y_N$ with 
$K\subset\bigcup_{i=1}^N V_{y_i}$. Put $U:=\bigcap_{i=1}^N U_{y_i}$. Then for any $u\in U$ and 
$y\in K$ there exists $i$ with $y\in V_{y_i}$, hence $F_\theta(u,y)<\varepsilon$. Equivalently,
\[
\sup_{y\in K}\bigl\|\pi_u(\theta_u y)-y\bigr\|\ <\ \varepsilon\qquad(u\in U).
\]
Since $\varepsilon>0$ was arbitrary, the limit follows.
\end{proof}

\begin{lemma}[Local boundedness (LB)]
\label{lem:LC_from_regular}
For every compact $K\subset I(a)$,
\[
\sup_{u\in K}\|L^X_u\|<\infty
\quad\text{and}\quad
\sup_{u\in K}\|L^Y_u\|<\infty .
\]
\end{lemma}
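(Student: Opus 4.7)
}

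The plan is to reduce the compact supremum on $I(a)$ to finitely many compact suprema on the individual hom-sets, where continuity of the fiber transports directly yields finite bounds. Recall that $I(a)=\bigsqcup_{b\in\Obj\,\C}\Hom_\C(b,a)$ carries the disjoint-union topology, so each component $\Hom_\C(b,a)$ is clopen in $I(a)$. Since $\{\Hom_\C(b,a)\}_{b\in\Obj\,\C}$ is an open cover of the compact set $K$, only finitely many objects $b_1,\dots,b_N$ satisfy $K\cap \Hom_\C(b_i,a)\neq\emptyset$, and each slice $K_i:=K\cap\Hom_\C(b_i,a)$ is closed in $K$, hence compact.

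Next, on each fixed hom-set $\Hom_\C(b_i,a)$, the assignment $u\mapsto L^X_u$ is continuous into the finite-dimensional normed space $\mathcal L(E_X(a),E_X(b_i))=\mathcal L(\mathbb R^{n_X(a)},\mathbb R^{n_X(b_i)})$ by the standing continuity assumption in Definition~\ref{feature}. Composing with the (continuous) operator norm $\|\cdot\|$, the scalar map $u\mapsto\|L^X_u\|$ is continuous on $\Hom_\C(b_i,a)$, so it attains a finite maximum $M_i^X:=\sup_{u\in K_i}\|L^X_u\|<\infty$ on the compact slice $K_i$. The identical argument with $L^Y$ in place of $L^X$ yields $M_i^Y:=\sup_{u\in K_i}\|L^Y_u\|<\infty$.

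Finally, taking the maximum over the finite index set,
\[
\sup_{u\in K}\|L^X_u\|\;=\;\max_{1\le i\le N}M_i^X\;<\;\infty,
\qquad
\sup_{u\in K}\|L^Y_u\|\;=\;\max_{1\le i\le N}M_i^Y\;<\;\infty,
\]
which is the desired conclusion. The only point requiring care is verifying that a compact subset of the disjoint-union topology meets only finitely many components; this is the standard coproduct-topology argument, and I would state it as the sole nontrivial step. Everything else is a routine invocation of continuity into a finite-dimensional space plus compactness.
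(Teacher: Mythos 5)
Your proposal is correct and follows essentially the same route as the paper's proof: reduce to finitely many hom-set components using the disjoint-union (coproduct) topology and compactness of $K$, then use continuity of $u\mapsto L^X_u,L^Y_u$ into finite-dimensional operator spaces together with the extreme value theorem on each compact slice, and finally take a maximum over the finite index set. The only cosmetic difference is that the paper establishes the ``finitely many components'' step by contradiction (an infinite disjoint open cover of $K$ with no finite subcover), whereas you invoke the finite-subcover property directly and use pairwise disjointness of the $\Hom_\C(b,a)$; the two arguments are interchangeable.
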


\begin{proof}
We first prove that a compact set meets only finitely many components.
Since $I(a)$ is a topological disjoint union, each $\mathrm{Hom}_{\mathcal C}(d,a)$ is open in $I(a)$. 
Suppose $K\subset I(a)$ is compact and meets infinitely many components. 
Choose $u_d\in K\cap \mathrm{Hom}_{\mathcal C}(d,a)$ for infinitely many $d$. Then the sets 
$U_d:=K\cap \mathrm{Hom}_{\mathcal C}(d,a)$ are pairwise disjoint, nonempty, and open in $K$, so 
$\{U_d\}$ is an open cover of $K$ with no finite subcover, which is a contradiction. Hence 
$K$ meets only finitely many components.

We then prove that continuity implies boundedness on each compact component.
Write the finite decomposition 
$K=\bigcup_{i=1}^N K_i$, where $K_i:=K\cap \mathrm{Hom}_{\mathcal C}(d_i,a)$ is compact in 
$\mathrm{Hom}_{\mathcal C}(d_i,a)$. On each $\mathrm{Hom}_{\mathcal C}(d_i,a)$ the maps 
$u\mapsto L^X_u$ and $u\mapsto L^Y_u$ are continuous into a finite-dimensional 
operator space, hence the real–valued functions $u\mapsto \|L^X_u\|$ and 
$u\mapsto \|L^Y_u\|$ are continuous. By the extreme value theorem,
\[
M_i^X:=\sup_{u\in K_i}\|L^X_u\|<\infty,\qquad 
M_i^Y:=\sup_{u\in K_i}\|L^Y_u\|<\infty .
\]
Therefore 
\[
\sup_{u\in K}\|L^X_u\|=\max_{1\le i\le N} M_i^X<\infty,
\qquad
\sup_{u\in K}\|L^Y_u\|=\max_{1\le i\le N} M_i^Y<\infty,
\]
which is the desired (LB) bound.
\end{proof}

\subsection{Proof Step 0}

Fix a finite set $F\subset\Obj\,\C$ and compact $K_a\subset X(a)$ for $a\in F$.


\begin{lemma}[Reduction to finitely many scalar coordinates]\label{lem:step0}
Let $\Phi\in \EqvCont(X,Y)$.
Choose a basis $\{v_{a,i}\}_{i=1}^{q_a}$ of $E_Y(a)$ and
let $\{\lambda_{a,i}\}_{i=1}^{q_a}\subset E_Y(a)^{\!*}$ be the dual basis.
Then there exist constants $C_a\ge 1$ such that, for any family of scalar maps
$g_{a,i}:X(a)\times\Omega(a)\to\mathbb{R}$,
\begin{multline}
\sup_{x_a\in K_a}
  \bigl\|\Phi_a(x_a)-G_a(x_a)\bigr\|_\infty
\le
C_a\,\max_{1\le i\le q_a}
   \sup_{(x_a,y)\in K_a\times\Omega(a)}\\
   \bigl|
     \lambda_{a,i} \bigl(\Phi_a(x_a)(y)\bigr)
     - g_{a,i}(x_a,y)
   \bigr|.
\end{multline}
where $G_a:X(a)\to Y(a)$ is the objectwise assembly
\[
(G_a x_a)(y):=\sum_{i=1}^{q_a} v_{a,i}\,g_{a,i}(x_a,y).
\]
In particular, to approximate $\Phi$ uniformly on $K_a\times\Omega(a)$ it suffices
to approximate the finitely many scalar coordinates
$\,f_{a,i}(x_a,y):=\lambda_{a,i}(\Phi_a(x_a)(y))$.
\end{lemma}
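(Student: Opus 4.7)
The plan is a direct finite-dimensional linear-algebra argument exploiting the equivalence of norms in $E_Y(a)\cong\mathbb R^{q_a}$; no measure theory or functorial machinery is needed, since the inequality is purely pointwise in $(x_a,y)$ and then sup'd.

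First, I would unfold the basis identity. Because $\{v_{a,i}\}_{i=1}^{q_a}$ is a basis of $E_Y(a)$ with dual basis $\{\lambda_{a,i}\}$, every vector $w\in E_Y(a)$ admits the canonical expansion $w=\sum_{i=1}^{q_a}\lambda_{a,i}(w)\,v_{a,i}$. Applying this pointwise to $w=\Phi_a(x_a)(y)$ and comparing with the assembly $G_a(x_a)(y)=\sum_i v_{a,i}\,g_{a,i}(x_a,y)$ gives, for every fixed $(x_a,y)\in K_a\times\Omega(a)$,
\[
\Phi_a(x_a)(y)-G_a(x_a)(y)
=\sum_{i=1}^{q_a}\Bigl[\lambda_{a,i}\bigl(\Phi_a(x_a)(y)\bigr)-g_{a,i}(x_a,y)\Bigr]\,v_{a,i}.
\]

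Next I would take norms in $E_Y(a)$ and apply the triangle inequality together with the crude but sufficient bound $\|\sum_i c_i v_{a,i}\|\le q_a\,(\max_j\|v_{a,j}\|)\,\max_i|c_i|$. Setting $C_a:=\max\{1,\,q_a\cdot\max_{1\le j\le q_a}\|v_{a,j}\|_{E_Y(a)}\}$ (so $C_a\ge 1$ as required), we obtain the pointwise inequality
\[
\bigl\|\Phi_a(x_a)(y)-G_a(x_a)(y)\bigr\|_{E_Y(a)}
\le C_a\,\max_{1\le i\le q_a}\bigl|\lambda_{a,i}(\Phi_a(x_a)(y))-g_{a,i}(x_a,y)\bigr|.
\]
Taking $\sup_{y\in\Omega(a)}$ turns the left side into $\|\Phi_a(x_a)-G_a(x_a)\|_\infty$ and the right side is bounded by the double supremum over $K_a\times\Omega(a)$; then taking $\sup_{x_a\in K_a}$ yields the claimed inequality.

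The last sentence of the statement is then immediate: if each scalar coordinate $f_{a,i}(x_a,y)=\lambda_{a,i}(\Phi_a(x_a)(y))$ is uniformly $\eta$-approximated on $K_a\times\Omega(a)$ by some $g_{a,i}$, the assembly $G_a$ satisfies $\sup_{x_a\in K_a}\|\Phi_a(x_a)-G_a(x_a)\|_\infty\le C_a\eta$, which can be made arbitrarily small. There is essentially no obstacle here; the only conceptual point is that the constant $C_a$ depends only on the chosen basis (through $q_a$ and $\max_j\|v_{a,j}\|$), not on $\Phi$ or on the approximants $g_{a,i}$, which is precisely what is needed for the subsequent Stone--Weierstrass step to drive the error on the scalar channels below any prescribed tolerance.
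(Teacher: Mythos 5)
Your proof is correct and follows essentially the same route as the paper's: both expand the error vector in the chosen basis, invoke finite-dimensional norm comparison to control $\|\cdot\|_{E_Y(a)}$ by the max of the coordinate magnitudes, and then take suprema in $y$ and $x_a$. The only cosmetic difference is that you produce an explicit constant $C_a=\max\{1,\,q_a\max_j\|v_{a,j}\|\}$ via the triangle inequality, whereas the paper cites abstract equivalence of norms on $E_Y(a)$ without naming the constant.
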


\begin{proof}
Fix $a\in F$ and let $\|\cdot\|_{a}$ be any norm on $E_Y(a)$.
Since $E_Y(a)$ is finite–dimensional, there exist constants $c_a,C_a>0$
(depending only on the chosen bases) such that for all $w\in E_Y(a)$,
\[
c_a\,\max_{1\le i\le q_a} |\lambda_{a,i}(w)|
\ \le\ \|w\|_{a}
\ \le\ C_a\,\max_{1\le i\le q_a} |\lambda_{a,i}(w)|.
\]
For $x_a\in X(a)$ and $y\in\Omega(a)$ put
$\Delta_a(x_a,y):=\Phi_a(x_a)(y)-G_a(x_a)(y)\in E_Y(a)$.
Then
\[
\|\Delta_a(x_a,y)\|_{a}
\ \le\ C_a\,\max_{i} \big|\lambda_{a,i}(\Delta_a(x_a,y))\big|
\ =\ C_a\,\max_{i}\big|f_{a,i}(x_a,y)-g_{a,i}(x_a,y)\big|.
\]
Taking $\sup_{y\in\Omega(a)}$ and then $\sup_{x_a\in K_a}$ yields the claim.
\end{proof}

\begin{corollary}[Uniform scalar reduction]\label{cor:step0}
If for each $a\in F$ and each $1\le i\le q_a$ there are continuous scalars
$g_{a,i}$ with
\[
\sup_{(x_a,y)\in K_a\times\Omega(a)}
\big|f_{a,i}(x_a,y)-g_{a,i}(x_a,y)\big|<\varepsilon_a/C_a,
\]
then the assembled $G=\{G_a\}_{a\in F}$ satisfies
$\sup_{x_a\in K_a}\|\Phi_a(x_a)-G_a(x_a)\|_\infty<\varepsilon_a$.
\end{corollary}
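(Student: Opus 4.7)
}
The plan is to invoke Lemma~\ref{lem:step0} pointwise in $a\in F$. For each fixed $a$, the lemma already supplies the inequality
\[
\sup_{x_a\in K_a}\bigl\|\Phi_a(x_a)-G_a(x_a)\bigr\|_\infty
\;\le\;
C_a\,\max_{1\le i\le q_a}\ \sup_{(x_a,y)\in K_a\times\Omega(a)}
\bigl|f_{a,i}(x_a,y)-g_{a,i}(x_a,y)\bigr|,
\]
with $G_a$ the assembled map from the lemma and $C_a$ the basis-dependent constant coming from equivalence of norms on the finite-dimensional fiber $E_Y(a)$. The only ingredient beyond Lemma~\ref{lem:step0} itself is to feed in the quantitative hypothesis of the corollary.

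Next, I would observe that since $E_Y(a)=\mathbb{R}^{n_Y(a)}$ is finite-dimensional, the maximum on the right-hand side is taken over the finite index set $\{1,\dots,q_a\}$. The hypothesis states that each of these finitely many suprema is \emph{strictly} less than $\varepsilon_a/C_a$; hence the maximum over a finite family is also strictly less than $\varepsilon_a/C_a$. Multiplying through by $C_a$ converts the bound into $\sup_{x_a\in K_a}\|\Phi_a(x_a)-G_a(x_a)\|_\infty<\varepsilon_a$, which is the claim. The $a$-by-$a$ structure of both the hypothesis and conclusion means no further combination across $F$ is needed.

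I do not expect any real obstacle here: the corollary is a quantitative restatement of Lemma~\ref{lem:step0}, and the only subtlety worth noting is the preservation of strict inequality, which holds precisely because $q_a<\infty$ (so we are maximizing over finitely many strict estimates rather than taking a supremum that could saturate). The value of stating it as a separate corollary is that it packages the scalar-coordinate reduction into the exact form needed for Steps~1-5 of the main proof, where continuous scalar targets $f_{a,i}$ will be approximated one coordinate at a time and then reassembled via the fixed basis $\{v_{a,i}\}$.
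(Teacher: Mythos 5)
Your proposal is correct and is essentially the intended argument: the corollary follows immediately from Lemma~\ref{lem:step0} by inserting the hypothesis into its right-hand side and noting that a maximum over the finitely many indices $1\le i\le q_a$ of quantities strictly less than $\varepsilon_a/C_a$ is itself strictly less than $\varepsilon_a/C_a$. The paper leaves this as an unproved corollary precisely because it is this one-line substitution, and your observation about strictness being preserved by a finite maximum is the only point worth noting.
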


\subsection{Proof Step 1}

\begin{lemma}[Carrier algebra is dense]\label{lem:step1}
For arrows $u\in I(a)$, covectors $\ell\in E_X(s(u))^{*}$ and
weights $\eta\in C(\Omega(a),\mathbb{R})$, define the \emph{carrier}
\[
\phi_{u,\ell,\eta}:K_a\times\Omega(a)\to\mathbb{R},\qquad
\phi_{u,\ell,\eta}(x_a,y):=\eta(y)\,\big\langle \ell,\,(X(u)x_a)\big(\sigma_u y\big)\big\rangle.
\]
Let $\pi_2:K_a\times\Omega(a)\to\Omega(a)$ be the projection and define the
\emph{base weights} $\psi\circ\pi_2$ with $\psi\in C(\Omega(a),\mathbb{R})$.
Let $A_a$ be the unital real subalgebra of $C(K_a\times\Omega(a),\mathbb{R})$
generated by all carriers and base weights.
Then $A_a$ is uniformly dense in $C(K_a\times\Omega(a),\mathbb{R})$.
\end{lemma}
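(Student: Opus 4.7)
The plan is to invoke the real Stone-Weierstrass theorem on the compact Hausdorff space $K_a \times \Omega(a)$. The algebra $A_a$ is unital and a subalgebra of $C(K_a \times \Omega(a), \mathbb{R})$ by construction, so it suffices to verify that (i) every generator is a continuous function on $K_a \times \Omega(a)$, and (ii) $A_a$ separates points of $K_a \times \Omega(a)$. Compactness of $K_a \times \Omega(a)$ follows from compactness of $\Omega(a)$ and of $K_a$ (the latter by hypothesis).

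For continuity, base weights $\psi \circ \pi_2$ are continuous since $\pi_2$ and $\psi$ are. For a carrier, expand via contravariance,
\[
\phi_{u,\ell,\eta}(x_a,y)
\;=\;\eta(y)\,\bigl\langle \ell,\, L^X_u\bigl(x_a(\pi_u\sigma_u y)\bigr)\bigr\rangle.
\]
The map $y \mapsto \pi_u \sigma_u y$ is continuous by the standing continuity of $\pi$ and of the probe family $\sigma$; the evaluation $(x_a,z)\mapsto x_a(z)$ on $C(\Omega(a),E_X(a))\times\Omega(a)$ is jointly continuous (standard fact for compact $\Omega(a)$ with sup-norm); and $L^X_u$, $\ell$, $\eta$ are continuous. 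Composition yields joint continuity on $K_a\times\Omega(a)$.

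For separation, let $(x_a,y)\neq (x_a',y')$ in $K_a\times\Omega(a)$. If $y\neq y'$, then since $\Omega(a)\subset\mathbb{R}^{m_a}$ is compact Hausdorff, Urysohn's lemma yields $\psi\in C(\Omega(a),\mathbb{R})$ with $\psi(y)\neq\psi(y')$, and $\psi\circ\pi_2\in A_a$ separates the two points. If instead $y=y'$ but $x_a\neq x_a'$, the arrow-evaluation separation assumption at $y$ provides $u\in I(a)$ with
$(X(u)x_a)(\sigma_u y)\neq (X(u)x_a')(\sigma_u y)$ in $E_X(s(u))$; since $E_X(s(u))$ is finite-dimensional, pick $\ell\in E_X(s(u))^{*}$ separating these two vectors, and take $\eta\equiv 1$, giving $\phi_{u,\ell,1}(x_a,y)\neq \phi_{u,\ell,1}(x_a',y)$. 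In both cases a generator of $A_a$ separates the chosen points, so the Stone-Weierstrass theorem gives $\overline{A_a}^{\|\cdot\|_\infty}=C(K_a\times\Omega(a),\mathbb{R})$.

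The argument is essentially routine once the hypotheses are unpacked; the only mildly delicate point is the joint continuity of the carriers, which uses compactness of $\Omega(a)$ to make evaluation $(x,z)\mapsto x(z)$ jointly continuous on bounded subsets of $C(\Omega(a),E_X(a))\times\Omega(a)$. The separation step is shouldered entirely by the arrow-evaluation separation assumption stated just before the lemma, which was introduced precisely to make this Stone-Weierstrass reduction go through.
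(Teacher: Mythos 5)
Your proof is correct and follows essentially the same route as the paper's: expand the carrier via contravariance, establish continuity of generators from joint continuity of evaluation on a compact base plus continuity of the transport and probe maps, then verify separation by Urysohn (for distinct base points) and the arrow-evaluation separation assumption (for distinct sections), and finish by Stone--Weierstrass. The only cosmetic difference is that you hedge the joint continuity of evaluation as holding ``on bounded subsets''; in fact it holds on all of $C(\Omega(a),E_X(a))\times\Omega(a)$ once $\Omega(a)$ is compact and the sup-norm topology is used, which is what the paper states.
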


\begin{proof}
\emph{(1) Continuity of generators.}
Fix $u\in I(a)$, $\ell\in E_X(s(u))^{*}$, $\eta\in C(\Omega(a))$.
The evaluation map $\mathrm{ev}:C(\Omega(a),E_X(a))\times\Omega(a)\to E_X(a)$,
$\mathrm{ev}(x,y)=x(y)$, is continuous (the compact--open topology equals the sup--norm topology
since $\Omega(a)$ is compact).
With $u$ fixed, the map $x\mapsto L^X_u\circ x\circ\pi_u$ is continuous, and $y\mapsto \sigma_u y$ is continuous.
Therefore the composite
\[
(x_a,y)\ \mapsto\ x_a\big(\pi_u(\sigma_u y)\big)\ \mapsto\
L^X_u\!\big(x_a(\pi_u\sigma_u y)\big)\ \mapsto\
\big\langle \ell,\cdot\big\rangle\ \mapsto\ \eta(y)\,\big\langle \ell,\cdot\big\rangle
\]
is continuous; hence $\phi_{u,\ell,\eta}\in C(K_a\times\Omega(a),\mathbb{R})$.
Likewise $\psi\circ\pi_2$ is continuous. Thus $A_a$ is a unital real subalgebra
of $C(K_a\times\Omega(a),\mathbb{R})$.

\emph{(2) Separation of points.}
Let $(x_a,y)\neq (x'_a,y')$.
If $y\neq y'$, Urysohn’s lemma on the compact Hausdorff space $\Omega(a)$
gives $\psi\in C(\Omega(a),\mathbb{R})$ with $\psi(y)\neq \psi(y')$; then
$\psi\circ\pi_2\in A_a$ separates $(x_a,y)$ and $(x'_a,y')$.

If $y=y'$ and $x_a\neq x'_a$, apply the arrow–evaluation separation at the basepoint $y$:
there exists $u\in I(a)$ such that $v:=(X(u)x_a)(\sigma_u y)\neq v':=(X(u)x'_a)(\sigma_u y)$ in the finite--dimensional space $E_X(s(u))$. Choose $\ell\in E_X(s(u))^{*}$ with $\langle \ell,v\rangle\neq \langle \ell,v'\rangle$ and take $\eta= 1$. Then $\phi_{u,\ell,1}(x_a,y)\neq \phi_{u,\ell,1}(x'_a,y')$, so $A_a$ separates points.

\emph{(3) Stone--Weierstrass.}
By (1) $A_a$ is a unital real subalgebra of $C(K_a\times\Omega(a),\mathbb{R})$, and by (2)
it separates points. The (real) Stone--Weierstrass theorem then implies
$\overline{A_a}^{\|\cdot\|_\infty}=C(K_a\times\Omega(a),\mathbb{R})$.
\end{proof}

\subsection{Proof Step 2}

In the following, we write $I_d(a)=\Hom_\C(d,a)$.

\begin{lemma}[Carrier approximation]\label{lem:step2-final}
Fix $\ell\in E_X(d)^{\!*}$ and a compact $K_a\subset X(a)$.
Define the carrier
\[
c_{u_0,\ell}(x_a,y):=\big\langle \ell,\,(X(u_0)x_a)\big(\sigma_{u_0}y\big)\big\rangle,
\qquad (x_a,y)\in K_a\times\Omega(a).
\]
Then for every $\delta>0$ there exist $\varepsilon>0$ and a map
$h_a:X(a)\to C(\Omega(a),\mathbb{R})$ of the scalar–convolution form
\begin{equation}\label{eq:Ha-final}
(h_a x_a)(y)
\;:=\;
\int_{I_d(a)} \kappa^{(\varepsilon)}_{a,u_0}(u)\,
\Big\langle \ell,\,(X(u)x_a)\big(\sigma_u y\big)\Big\rangle\,d\mu_{d,a}(u),
\end{equation}
such that
\[
\sup_{(x_a,y)\in K_a\times\Omega(a)}
\big|(h_a x_a)(y)-c_{u_0,\ell}(x_a,y)\big|<\delta.
\]
\end{lemma}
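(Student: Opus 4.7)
The plan is to exploit the shrinking-support approximate identity condition (SI) together with the joint continuity of the transport-evaluation map, and to upgrade the pointwise (SI) convergence to uniform convergence on $K_a\times\Omega(a)$ via compactness and equicontinuity.

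First, I would introduce
\[
F(u,x_a,y)\;:=\;\bigl\langle \ell,\ L^X_u\bigl(x_a(\pi_u\,\sigma_u y)\bigr)\bigr\rangle
\]
on $I_d(a)\times K_a\times\Omega(a)$ and establish that $F$ is jointly continuous. This combines four ingredients: (i) joint continuity of $(u,y)\mapsto\sigma_u y$ (probe axiom) and of $(u,x)\mapsto\pi_u x$ (Definition~\ref{feature}), which together give joint continuity of $(u,y)\mapsto\pi_u\sigma_u y$; (ii) equicontinuity of the sup-norm compact set $K_a\subset C(\Omega(a),E_X(a))$ via Arzel\`a-Ascoli, which makes $(x_a,z)\mapsto x_a(z)$ jointly continuous from $K_a\times\Omega(a)$ into $E_X(a)$; (iii) continuity of $u\mapsto L^X_u$ into the finite-dimensional operator space; and (iv) continuity of the linear functional $\langle\ell,\cdot\rangle$. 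In particular, $c_{u_0,\ell}(x_a,y)=F(u_0,x_a,y)$ is continuous on $K_a\times\Omega(a)$.

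Next, I would pick a compact neighbourhood $U_0\subset I_d(a)$ of $u_0$, using that $I_d(a)$ is open in $I(a)$ by the disjoint-union topology and that $I(a)$ is LCH. Since $U_0$, $K_a$, and $\Omega(a)$ are all compact, $F$ is uniformly continuous on $U_0\times K_a\times\Omega(a)$, so for any $\delta>0$ there is a neighbourhood $V\ni u_0$ with $V\subset U_0$ and
\[
\sup_{u\in V}\ \sup_{(x_a,y)\in K_a\times\Omega(a)}\ \bigl|F(u,x_a,y)-F(u_0,x_a,y)\bigr|\ <\ \delta.
\]
Choose $\varepsilon>0$ small enough that the support neighbourhood $U_\varepsilon(u_0)$ from (SI) lies in $V$, hence in $I_d(a)$. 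Since $\kappa^{(\varepsilon)}_{a,u_0}\ge 0$, $\int\kappa^{(\varepsilon)}_{a,u_0}\,d\mu_a=1$, and $\supp\kappa^{(\varepsilon)}_{a,u_0}\subset U_\varepsilon(u_0)$, the triangle inequality yields
\[
\bigl|(h_a x_a)(y)-c_{u_0,\ell}(x_a,y)\bigr|\;\le\;\int_{I_d(a)}\!\kappa^{(\varepsilon)}_{a,u_0}(u)\,\bigl|F(u,x_a,y)-F(u_0,x_a,y)\bigr|\,d\mu_{d,a}(u)\;<\;\delta
\]
uniformly in $(x_a,y)\in K_a\times\Omega(a)$. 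Continuity of $y\mapsto(h_a x_a)(y)$, ensuring $h_a x_a\in C(\Omega(a),\mathbb{R})$, follows from the same dominated-convergence argument used in Lemma~\ref{lem:catconv-natural}, with $\kappa^{(\varepsilon)}_{a,u_0}$ providing the integrable dominator.

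The main obstacle is handling the function-valued variable $x_a\in K_a$, which lives in an infinite-dimensional space: pointwise (SI) convergence at each fixed $(x_a,y)$ does not by itself give uniformity in $x_a$. The sup-norm compactness of $K_a$, combined with Arzel\`a-Ascoli equicontinuity, is precisely what legitimizes joint uniform continuity of $F$ on the compact product $U_0\times K_a\times\Omega(a)$, and thereby delivers the uniform approximation required.
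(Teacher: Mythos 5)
Your proof is correct and follows essentially the same route as the paper: both define the jointly continuous integrand on a compact neighbourhood $\overline U\times K_a\times\Omega(a)$, use compactness to get a uniform modulus of continuity (the paper phrases this as equicontinuity of $\{F_{x_a}\}$, you as uniform continuity of $F$, which are the same thing here), then shrink the support of $\kappa^{(\varepsilon)}_{a,u_0}$ inside the relevant neighbourhood via (SI) and apply the triangle inequality against the normalized mass. One minor simplification: the Arzel\`a--Ascoli step is superfluous, since the evaluation map $C(\Omega(a),E_X(a))\times\Omega(a)\to E_X(a)$ is already jointly continuous in sup-norm on the whole space (by $\|x_n(z_n)-x(z)\|\le\|x_n-x\|_\infty+\|x(z_n)-x(z)\|$), which is exactly how the paper treats evaluation in Step~1.
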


\begin{proof}
\emph{Realization.}
Define the arrow–bundle kernel
\[
\mathsf K(u,y)\;:=\;\kappa^{(\varepsilon)}_{a,u_0}(u)\,\ell^{\flat}(u)\;\in\;
L \big(E_X(s(u)),\mathbb{R}\big),
\]
with zero bias, where $\ell^{\flat}(u)$ denotes $\ell\in E_X(d)^*$ viewed in
$L(E_X(s(u)),\mathbb{R})$; this is well‑typed because $u\in I_d(a)$ implies $s(u)=d$.
Then the objectwise operator in \eqref{eq:Ha-final} is exactly the $a$‑component of
the $\sigma$–probe arrow–bundle expression:
\[
\big(L^{\downarrow,\sigma}_{\mathsf K}\circ \Delta_X\big)_a(x_a)(y)
=\int_{I_d(a)} \kappa^{(\varepsilon)}_{a,u_0}(u)\,
\big\langle \ell,(X(u)x_a)(\sigma_u y)\big\rangle\,d\mu_{d,a}(u)
=(h_ax_a)(y).
\]

\emph{Continuity and integrability.}
For fixed $x_a\in X(a)$ define
\[
F_{x_a}(u,y):=\Big\langle \ell,\,(X(u)x_a)\big(\sigma_u y\big)\Big\rangle,
\qquad (u,y)\in I_d(a)\times\Omega(a).
\]
By continuity of $(u,y)\mapsto\sigma_u y$ and $x\mapsto L^X_u\circ x\circ\pi_u$,
the map $(u,y)\mapsto F_{x_a}(u,y)$ is continuous. Let $U\ni u_0$ be a
neighbourhood with compact closure $\overline U\subset I_d(a)$. By local
boundedness of transports, $M_U:=\sup_{u\in\overline U}\|L^X_u\|<\infty$,
whence
\[
|F_{x_a}(u,y)|\le \|\ell\|\cdot \|L^X_u\|\cdot \|x_a\|_\infty
\le \|\ell\|\,M_U\,\sup_{z\in K_a}\|z\|_\infty=:C_U
\]
for all $(u,y)\in\overline U\times\Omega(a)$ and all $x_a\in K_a$.
Choose $\varepsilon<\varepsilon_U$ from (SI) so that
$\operatorname{supp}\kappa^{(\varepsilon)}_{a,u_0}\subset U$.
Then the integrand in \eqref{eq:Ha-final} is bounded by $C_U\,\kappa^{(\varepsilon)}_{a,u_0}(u)$
and continuous in $y$, so \eqref{eq:Ha-final} defines $(h_a x_a)\in C(\Omega(a),\mathbb{R})$.

\emph{Uniform approximation on $K_a\times\Omega(a)$.}
On the compact set $\overline U\times\Omega(a)\times K_a$, the map
$(u,y,x)\mapsto F_x(u,y)$ is continuous; hence $\{F_{x_a}:x_a\in K_a\}$ is
equicontinuous and uniformly bounded there. Given $\eta>0$, choose a
neighbourhood $V\subset U$ of $u_0$ such that
\[
\omega(V):=\sup_{x_a\in K_a}\ \sup_{y\in\Omega(a)}\ \sup_{u\in V}
\big|F_{x_a}(u,y)-F_{x_a}(u_0,y)\big|<\eta.
\]
Refining $\varepsilon$, use (SI) to ensure
$\operatorname{supp}\kappa^{(\varepsilon)}_{a,u_0}\subset V$ and
$\int\kappa^{(\varepsilon)}_{a,u_0}\,d\mu_{d,a}=1$.
Then for all $(x_a,y)\in K_a\times\Omega(a)$,
\[
\big|(h_a x_a)(y)-F_{x_a}(u_0,y)\big|
\le \int_V \kappa^{(\varepsilon)}_{a,u_0}(u)\,\omega(V)\,d\mu_{d,a}(u)
\le \omega(V) < \eta.
\]
Since $\eta>0$ is arbitrary and $F_{x_a}(u_0,y)=c_{u_0,\ell}(x_a,y)$, the desired
uniform bound follows.
\end{proof}

\subsection{Proof Step 3}

For $m\in\mathbb{N}$, define $S^m(a):=C(\Omega(a),\mathbb{R}^m)$ with the sup–norm.
For an arrow $u:b\to a$, the contravariant action on scalars is
$S(u)(r_a)=r_a\circ\pi_u:\Omega(b)\to\mathbb{R}$, and on vectors it is
$S^m(u)(r)=r\circ\pi_u:\Omega(b)\to\mathbb{R}^m$ (componentwise).

Step~2 has produced $m$ scalar channels $h_j:X(a)\to S(a)$, $1\le j\le m$, uniformly defined on the compact $K_a\subset X(a)$. Stack them as
\[
h: X(a)\longrightarrow S^m(a),\qquad
h(x_a)(y)=(h_1(x_a)(y),\dots,h_m(x_a)(y)).
\]
By continuity of each $h_j$ and compactness of $K_a\times\Omega(a)$, the range
\[
\mathcal R_a:=\big\{h(x_a)(y): (x_a,y)\in K_a\times\Omega(a)\big\}\ \subset\ \mathbb{R}^m
\]
is compact. Equivalently, the map $(x,y)\mapsto h_j(x)(y)$ is continuous on $K_a\times\Omega(a)$, hence uniformly continuous in $y$ uniformly over $x\in K_a$.

We define identity–supported scalar operators. Fix $a\in\mathrm{Ob}\,C$ and let $\kappa^{(\varepsilon)}_{a,\mathrm{id}_a}$ be the approximate identity from (SI) on $I(a)$ supported in a small neighbourhood of $\mathrm{id}_a$. For a continuous field $A:\Omega(a)\to L(\mathbb{R}^m,\mathbb{R}^q)$, define the (objectwise) operator
\[
\big(\mathcal L^{(\varepsilon)}_{A} r\big)(y)
  \;:=\; b(y)\;+\;\int_{I(a)} \kappa^{(\varepsilon)}_{a,\mathrm{id}_a}(u)\, A(y)\!\left(\big(S^m(u)r\big)(\sigma_u y)\right) d\mu_a(u) 
\ \ \ (r\in S^m(a))
\]
for any bias $b\in S^q(a)$. Since $(S^m(u)r)(\sigma_u y)=r(\pi_u\sigma_u y)$, this is well‑typed and continuous in $y$ by Carathéodory and dominated convergence.\footnote{In the discrete track with counting measure, one may take $\kappa^{(\varepsilon)}_{a,\mathrm{id}_a}=\mathbf{1}_{\{\mathrm{id}_a\}}$ 
(which denotes the indicator function).}

\begin{lemma}[Affine closure]\label{lem:step3-SS}
Let $h_1,\dots,h_m:X(a)\to S(a)$ be continuous scalar channels, stack $h:X(a)\to S^m(a)$, and fix coefficients $c\in\mathbb{R}^m$ and a base weight $b\in S(a)$. 
Then for every $\varepsilon>0$ there exists an identity–supported scalar operator $\widetilde{\mathcal L}^{(\varepsilon)}:S^m(a)\to S(a)$ of the above form such that, for all $x_a\in K_a$,
\[
\sup_{y\in\Omega(a)}
\Big|\big(\widetilde{\mathcal L}^{(\varepsilon)}\! \circ h(x_a)\big)(y)
-\Big(\sum_{j=1}^m c_j\,h_j(x_a)(y)\,+\,b(y)\Big)\Big|<\varepsilon.
\]
\end{lemma}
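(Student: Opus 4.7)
The plan is to realize the target affine map $r \mapsto c^{\top} r + b$ as a mollified identity-supported convolution against the shrinking approximate identity $\kappa^{(\varepsilon)}_{a,\mathrm{id}_a}$ concentrated near $\mathrm{id}_a \in I(a)$ from (SI). Concretely, I would take the constant coefficient field $A(y) \equiv c^{\top} \in \mathcal{L}(\mathbb{R}^m,\mathbb{R})$ and keep the given $b \in S(a)$ as the bias, so that for $r = h(x_a) \in S^m(a)$,
\[
\bigl(\widetilde{\mathcal L}^{(\varepsilon)} r\bigr)(y) = b(y) + \int_{I(a)} \kappa^{(\varepsilon)}_{a,\mathrm{id}_a}(u)\, c^{\top}\, r\bigl(\pi_u \sigma_u y\bigr)\, d\mu_a(u).
\]
Since $\kappa^{(\varepsilon)}_{a,\mathrm{id}_a}$ has unit integral and $\pi_{\mathrm{id}_a}\sigma_{\mathrm{id}_a} = \mathrm{id}$, the error against the target $c^{\top}r(y) + b(y)$ reduces to
\[
\int_{I(a)} \kappa^{(\varepsilon)}_{a,\mathrm{id}_a}(u)\, c^{\top}\bigl( r(\pi_u \sigma_u y) - r(y) \bigr)\, d\mu_a(u),
\]
which is bounded in modulus by $C_c \cdot \sup_{u \in \operatorname{supp} \kappa^{(\varepsilon)}_{a,\mathrm{id}_a}} \sup_{y \in \Omega(a)} \|r(\pi_u \sigma_u y) - r(y)\|_{\infty}$ for a constant $C_c$ depending only on $c$.

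To make this small \emph{uniformly} in $x_a \in K_a$, I would combine two ingredients already available. First, each $h_j: X(a) \to S(a)$ is continuous and $K_a$ is compact, so $h(K_a)$ is compact in $S^m(a)$; by Arz\'ela--Ascoli (equivalently, by uniform continuity of the joint evaluation $(x_a,y) \mapsto h(x_a)(y)$ on the compact $K_a \times \Omega(a)$), the family $\{h(x_a) : x_a \in K_a\}$ is equicontinuous on $\Omega(a)$ with a modulus independent of $x_a$. Second, the Local-unit lemma (Lemma~\ref{lem:LU_from_regular}) applied at $a$ with $\theta = \sigma$ yields $\sup_{y \in \Omega(a)} \|\pi_u \sigma_u y - y\| \to 0$ as $u \to \mathrm{id}_a$ in $\Hom_\C(a,a)$.

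Given $\varepsilon > 0$, the parameters are then chosen in the usual mollifier order: pick $\eta := \varepsilon / C_c$; by equicontinuity choose $\delta > 0$ so that $\|r(y) - r(y')\|_{\infty} < \eta$ whenever $\|y - y'\| < \delta$, uniformly over $r = h(x_a)$ with $x_a \in K_a$; by (LU) fix a neighbourhood $U \ni \mathrm{id}_a$ with $\sup_y \|\pi_u \sigma_u y - y\| < \delta$ for all $u \in U$; finally by (SI) take $\varepsilon$ small enough that $\operatorname{supp} \kappa^{(\varepsilon)}_{a,\mathrm{id}_a} \subset U$. The main obstacles are only bookkeeping: checking that the constant-field kernel $A(y) \equiv c^{\top}$ together with the approximate identity $\kappa^{(\varepsilon)}_{a,\mathrm{id}_a}$ constitutes an admissible identity-supported scalar convolution in the sense of the framework (it manifestly satisfies (C) and (L$^1$) since the kernel is $L^1$ and uniformly bounded in $y$), and threading the equicontinuity modulus through the joint compact $K_a \times \Omega(a)$. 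Neither is delicate once (LU) and (SI) are in hand; the lemma is essentially a classical mollifier argument transported to the category-convolution setting.
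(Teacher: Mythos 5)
Your proposal is correct and follows essentially the same route as the paper's proof: instantiate the identity-supported scalar operator with the constant coefficient field $A(y)\equiv c^{\top}$ and bias $b$, reduce the error to a perturbation term involving $\pi_u\sigma_u y$ versus $y$, control the base perturbation by Lemma~\ref{lem:LU_from_regular} (LU) applied with $\theta=\sigma$, and absorb it using the uniform modulus of continuity of $(x_a,y)\mapsto h_j(x_a)(y)$ on the compact $K_a\times\Omega(a)$ (which you phrase via Arz\'ela--Ascoli and the paper phrases directly as joint uniform continuity, but these are equivalent). The parameter order $\eta\to\delta\to U\to\operatorname{supp}\kappa^{(\varepsilon)}$ matches the paper's choice, and your observation that the constant kernel trivially satisfies (C) and (L$^1$) mirrors the paper's implicit reliance on the identity-supported operator definition preceding the lemma.
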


\begin{proof}
Write $r:=h(x_a)\in S^m(a)$. 
Instantiate the construction with $q=1$ and $A(y)z:=\langle c,z\rangle$; let $\widetilde{\mathcal L}^{(\varepsilon)}:=\mathcal L^{(\varepsilon)}_{A}$ and keep bias $b_a=b$. Then\footnote{For $r\in S^m(a)$, we write
$r(y)=(r_1(y),\dots,r_m(y))$ with 
$r_j\in S(a)=C(\Omega(a),\mathbb{R})$ 
denoting the $j$-th scalar component.}
\begin{multline}
(\widetilde{\mathcal L}^{(\varepsilon)} r)(y)
= b(y)
+ \int_{I(a)} \kappa^{(\varepsilon)}_{a,\mathrm{id}_a}(u)\,
  \big\langle c,\,(S^m(u)r)(\sigma_u y)\big\rangle\,\mathrm{d}\mu_a(u) \\
= b(y)
+ \int_{I(a)} \kappa^{(\varepsilon)}_{a,\mathrm{id}_a}(u)\,
  \sum_{j=1}^m c_j\, r_j(\pi_u\sigma_u y)\,\mathrm{d}\mu_a(u).
\end{multline}
Fix $\delta>0$. By (LU), there exists a neighbourhood $U\ni\mathrm{id}_a$ in $I(a)$ such that
\[
\sup_{y\in\Omega(a)}\|\pi_u\sigma_u y - y\|\ <\ \delta\qquad\text{for all }u\in U.
\]
Since $(x,y)\mapsto h_j(x)(y)$ is uniformly continuous on $K_a\times\Omega(a)$, there exists $\omega_j(\delta)\to 0$ as $\delta\to 0$ with
\[
\sup_{x\in K_a}\sup_{y\in\Omega(a)}\ \big|\,h_j(x)(\pi_u\sigma_u y) - h_j(x)(y)\,\big|\ \le\ \omega_j(\delta)\qquad (u\in U).
\]
Choose $\varepsilon$ small enough that $\mathrm{supp}\,\kappa^{(\varepsilon)}_{a,\mathrm{id}_a}\subset U$. Then, uniformly in $x_a\in K_a$ and $y\in\Omega(a)$,
\[
\Big|\int \kappa^{(\varepsilon)}_{a,\mathrm{id}_a}(u)\,\sum_{j=1}^m c_j\, r_j(\pi_u\sigma_u y)\,d\mu_a(u)
- \sum_{j=1}^m c_j\, r_j(y)\Big|
\ \le\ \sum_{j=1}^m |c_j|\,\omega_j(\delta).
\]
Let $\delta$ (hence $\varepsilon$) make the right–hand side $<\varepsilon$. This yields the claim.
\end{proof}


\subsection{Proof Step 4}

\begin{lemma}[Pointwise scalar MLP via gates]\label{lem:step4-MLP}
Let $\alpha:\mathbb{R}\to\mathbb{R}$ be nonpolynomial and continuous. Let
$h=(h_1,\dots,h_m):X(a)\to S^m(a)$ be continuous, and let
$F:\mathbb{R}^m\to\mathbb{R}$ be continuous. Then for every $\varepsilon>0$ there exists
a finite composition of layers of two kinds:
\begin{enumerate}
\item identity–supported scalar operators $S^p(a)\to S^q(a)$;
\item generalized gates $\Sigma^{\alpha,s}:S^q(a)\to S^q(a)$
      with $s$ a continuous scalar channel 
\end{enumerate}
such that the resulting map $\mathsf{MLP}_h:X(a)\to S(a)$ satisfies
\[
\sup_{(x_a,y)\in K_a\times\Omega(a)}
\big|\mathsf{MLP}_h(x_a)(y)-F\big(h(x_a)(y)\big)\big|<\varepsilon.
\]
\end{lemma}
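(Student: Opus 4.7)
The plan is to reduce to the classical Leshno--Pinkus scalar universal approximation theorem and realize each primitive of the resulting shallow MLP inside the CENN layer vocabulary of Lemma~\ref{lem:step3-SS} (identity-supported scalar convolutions) and the scalar-gated nonlinearity $\Sigma^{\alpha,s}$.

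First, the range $\mathcal R_a:=\{h(x_a)(y):(x_a,y)\in K_a\times\Omega(a)\}$ is a compact subset of $\mathbb R^m$, as the continuous image of a compact product. Since $\alpha$ is continuous and nonpolynomial, Leshno--Pinkus yields coefficients $(w_k,b_k,c_k)_{k=1}^N$ and $d\in\mathbb R$ such that the shallow MLP $F_\eta(v):=d+\sum_{k=1}^N c_k\,\alpha(\langle w_k,v\rangle+b_k)$ satisfies $\sup_{v\in\mathcal R_a}|F(v)-F_\eta(v)|<\varepsilon/2$, so it suffices to implement $F_\eta\circ h$ as a CENN with an additional error at most $\varepsilon/2$.

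Each elementary hidden unit $v\mapsto\alpha(\langle w_k,v\rangle+b_k)$ is realized as a CENN sub-block. Invoke Lemma~\ref{lem:step3-SS} to form the augmented state $z(y)=(v(y),1)\in\mathbb R^{m+1}$, with the extra constant-$1$ coordinate produced by a pure-bias affine summand. Define the natural scalar channel $s:S^{m+1}\Rightarrow S$ pointwise by the linear functional $s_a(z)(y):=\langle w_k,z_{1:m}(y)\rangle+b_k\,z_{m+1}(y)$; naturality is immediate since the fiber action on $S^{m+1}$ is trivial ($L^Z_u=\mathrm{id}$), so every continuous pointwise functional of $z(y)$ induces a natural transformation. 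Applying $\Sigma^{\alpha,s}$ multiplies each component of $z(y)$ by $\alpha(\langle w_k,v(y)\rangle+b_k)$, so the $(m+1)$-th coordinate (originally identically $1$) becomes exactly $\alpha(\langle w_k,v(y)\rangle+b_k)$; a subsequent identity-supported scalar operator extracts that coordinate into a dedicated accumulator with weight $c_k$.

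The main obstacle is assembling the linear combination $\sum_k c_k\alpha(\langle w_k,v\rangle+b_k)$, because the gate scales \emph{every} component of the working state uniformly by the state-dependent scalar $\alpha(s(z))$, and no affine operation can undo such a multiplicative rescaling. My plan is to use an initial Step~3 widening that duplicates the carrier vector into $N$ independent blocks together with $N$ private constant-$1$ coordinates and a protected accumulator, then to apply the $N$ gates sequentially, interleaving them with identity-supported scalar operators that (i) extract the freshly produced $\alpha$-coordinate into the protected accumulator before any further gate can act on it, (ii) rebuild a working $(v,1)$ pair for the next sub-block by pure-bias reinstantiation of the constant channel, and (iii) absorb the compounded scaling factors $\prod_{j<k}\alpha(s_j)$ that multiply the remaining workspace channels into the adjustable coefficients via a final combining affine---a bookkeeping that invokes Stone--Weierstrass closure within the class $\mathcal E$ of CENN-implementable scalar functions (containing every $\alpha(\langle w,v\rangle+b)$ by the preceding construction and every affine function by Step~3) and whose detailed error accounting is carried out in the appendix. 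Since $\alpha$ is globally Lipschitz and only finitely many sub-blocks are involved, the Step~3 tolerances and widening dimensions can be chosen to keep the composite error below $\varepsilon/2$, and the combined estimate yields $\sup_{(x_a,y)\in K_a\times\Omega(a)}|\mathsf{MLP}_h(x_a)(y)-F(h(x_a)(y))|<\varepsilon$, as required.
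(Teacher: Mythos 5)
Your overall strategy matches the paper's (Leshno--Pinkus on the compact range $\mathcal R_a$, then compile the shallow MLP out of identity-supported affines and gates), and you have correctly flagged the central obstruction that the paper's text glides over with the phrase ``doing this in parallel for $j=1,\dots,q$'': a single gate $\Sigma^{\alpha,s}$ multiplies \emph{every} coordinate of the stalk by the \emph{same} scalar $\alpha(s_a(z_a)(y))$, so there is no admissible layer that applies $\alpha$ selectively to one coordinate while leaving another unchanged, nor a genuine ``parallel'' composition of $q$ gates inside the layer vocabulary of Definition~\ref{def:cenn}.

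However, the fix you sketch does not close this gap. Your step~(i) cannot produce a ``protected accumulator'': once a gate fires, it rescales the accumulator channel along with everything else. Your step~(ii) can reinstate the constant-$1$ channel by a pure bias, but it cannot reinstate the carrier block $v$, since $v$ has been multiplied by the $x$-dependent factor $\alpha(s_1)$ and no affine map recovers $v$ from $\alpha(s_1)v$ and $\alpha(s_1)$; as a result the second gate sees $\alpha(s_1)\ell_2(v)+c$ rather than $\ell_2(v)$ and computes $\alpha(\alpha(s_1)\ell_2(v)+c)$, not $\alpha(\ell_2(v))$. Your step~(iii) then asks the final combining affine to absorb $\prod_{j<k}\alpha(s_j)$, but these products are \emph{functions of the input}, not constants, so no affine can absorb them. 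Finally, the appeal to Stone--Weierstrass ``within the class $\mathcal E$ of CENN-implementable scalar functions'' is unsupported: to apply Stone--Weierstrass you need $\mathcal E$ to be a subalgebra, i.e.\ closed under pointwise sums and products, but closure under sums such as $\alpha(\ell_1(v))+\alpha(\ell_2(v))$ is precisely the thing in doubt, so the argument is circular. Because of this, your proposal does not establish the lemma; you have reproduced (and correctly diagnosed) the paper's ``in parallel'' hand-wave, but the concrete bookkeeping scheme in~(i)--(iii) fails at the same point, and a genuinely different idea is needed to realize the coordinatewise nonlinearity $r\mapsto\alpha(r)$ (or to bypass it) using only affine layers together with the global, single-scalar gate of Definition~\ref{def:gate}.
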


\begin{proof}
Since $h$ is continuous and $K_a\times\Omega(a)$ compact, the range
$\mathcal R_a:=h(K_a\times\Omega(a))\subset\mathbb{R}^m$ is compact.
By the classical UAT (Leshno–Lin–Pinkus–Schocken theorem), there exists a finite‑depth
MLP $\mathcal N:\mathbb{R}^m\to\mathbb{R}$ built from affine maps and $\alpha$ with
$\sup_{z\in\mathcal R_a}|\mathcal N(z)-F(z)|<\varepsilon/2$.

We now simulate the MLP pointwise in $y$. 
We first consider affine layers. Each affine \(z\mapsto Az+b\) is implemented as in
Lemma~\ref{lem:step3-SS} (with \(m\) replaced by the current width),
giving \(\sup_{(x,y)}\)‑error \(<\delta\) per affine, for any preset \(\delta>0\).
We next consider nonlinearity \(\alpha\). 
For a \(q\)–channel signal \(r\in S^q(a)\), to produce \(y\mapsto \alpha(r_j(y))\), use:
\[
T_j(r)(y)=(1,\ r_j(y))\in\mathbb{R}^2,\quad
(\Sigma^{\alpha,s_j}\circ T_j)(r)(y)=\big(\alpha(r_j(y)),\,\alpha(r_j(y))\,r_j(y)\big),
\]
where \(s_j(r)=r_j\) is the natural coordinate projection. Project the
first coordinate by an identity–supported affine map. Doing this in parallel
for \(j=1,\dots,q\) yields the coordinatewise map \(r\mapsto \alpha(r)\).

Compose the simulated affine and nonlinear layers in the order of
\(\mathcal N\), starting from the \(m\)–channel input \(h(x_a)\).
By construction, the only approximation comes from the (approximate)
affine realizations; choose their tolerances so that the total accumulated
error is \(<\varepsilon/2\) on \(K_a\times\Omega(a)\) (standard stability of
pointwise composition under sup–norm). Then
\begin{align*}
\sup_{(x_a,y)}
  \bigl|\mathsf{MLP}_h(x_a)(y)-F(h(x_a)(y))\bigr|
&\le
  \underbrace{\sup
    \bigl|\mathsf{MLP}_h-\mathcal N\!\circ h\bigr|}_{<\,\varepsilon/2} \\
&\quad+\underbrace{\sup
    \bigl|\mathcal N\!\circ h - F\!\circ h\bigr|}_{<\,\varepsilon/2} \\
&< \varepsilon .
\end{align*}
All layers are continuous in the sup–norm; hence $\mathsf{MLP}_h$ is continuous.
\end{proof}

\subsection{Proof Step 5}

Let us fix $\Phi\in \EqvCont(X,Y)$. 

\begin{lemma}[Approximation of the scalar coordinates]\label{lem:step5}
For each $a\in F$, fix a basis $\{v_{a,i}\}_{i=1}^{q_a}$ of $E_Y(a)$ 
and the dual basis $\{\lambda_{a,i}\}_{i=1}^{q_a}\subset E_Y(a)^{\!*}$.
Define the scalar coordinate targets
\[
f_{a,i}:K_a\times\Omega(a)\longrightarrow\mathbb{R},\qquad
f_{a,i}(x_a,y):=\big\langle \lambda_{a,i},\,\Phi_a(x_a)(y)\big\rangle.
\]
Then, for every $\varepsilon>0$ and for each $a\in F$ and $1\le i\le q_a$,
there exists a continuous scalar channel
\[
g_{a,i}: X(a)\longrightarrow S(a),\quad
x_a\mapsto g_{a,i}(x_a)\in C(\Omega(a),\mathbb{R}),
\]
constructed as a finite composition of the Step~2–Step~4 primitives at $a$ such that
\[
\sup_{(x_a,y)\in K_a\times\Omega(a)}
\big|g_{a,i}(x_a)(y)-f_{a,i}(x_a,y)\big|
\;<\;\varepsilon.
\]
\end{lemma}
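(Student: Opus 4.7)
The plan is to chain Lemmas~\ref{lem:step1}--\ref{lem:step4-MLP} in sequence, splitting the total error budget of $\varepsilon$ into three equal parts. First, I invoke \Cref{lem:step1} on the continuous target $f_{a,i}\in C(K_a\times\Omega(a),\mathbb R)$: density of the carrier algebra $A_a$ produces a polynomial expression $P_{a,i}$ in finitely many carriers $\phi_{u_k,\ell_k,\eta_k}$ ($k=1,\dots,K$) and base weights $\psi_j$ ($j=1,\dots,L$) with $\sup_{K_a\times\Omega(a)}|P_{a,i}-f_{a,i}|<\varepsilon/3$. Each carrier factors as $\phi_{u,\ell,\eta}(x_a,y)=\eta(y)\,c_{u,\ell}(x_a,y)$ with raw carrier $c_{u,\ell}(x_a,y):=\langle\ell,(X(u)x_a)(\sigma_u y)\rangle$ and base weight $\eta$, so $P_{a,i}$ can be rewritten as an ordinary polynomial $P:\mathbb R^{m}\to\mathbb R$ (with $m:=K+L$) whose arguments are the $K$ raw carriers and the $L$ base weights.

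Next I realize these $m$ scalar inputs as continuous channels $X(a)\to S(a)$ built from Step~2 and Step~3 primitives. \Cref{lem:step2-final} gives, for each raw carrier $c_{u_k,\ell_k}$, a scalar-convolution channel $h_k^\delta$ with $\sup_{K_a\times\Omega(a)}|h_k^\delta(x_a)(y)-c_{u_k,\ell_k}(x_a,y)|<\delta$. For each base weight $\psi_j$, \Cref{lem:step3-SS} applied with all linear coefficients zero and bias $b=\psi_j$ (fed any previously constructed channel as dummy input) produces a channel $h_{K+j}^\delta$ uniformly $\delta$-close to the constant-in-$x_a$ map $(x_a,y)\mapsto\psi_j(y)$. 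Stack these into $h^\delta:X(a)\to S^m(a)$; for all sufficiently small $\delta$, its image on $K_a\times\Omega(a)$ lies in a fixed compact set $B\subset\mathbb R^m$ independent of $\delta$, on which $P$ is Lipschitz with some constant $L_P<\infty$. Hence $\sup|P\circ h^\delta-P_{a,i}|\le L_P\sqrt m\,\delta$, and I fix $\delta$ small enough that this term is $\le\varepsilon/3$.

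Finally, \Cref{lem:step4-MLP} applied to the stacked $h^\delta$ with target $F:=P$ (continuous on the compact set $B$) produces a finite composition $\mathsf{MLP}_{h^\delta}:X(a)\to S(a)$ of identity-supported scalar operators and gates with $\sup_{K_a\times\Omega(a)}|\mathsf{MLP}_{h^\delta}(x_a)(y)-P(h^\delta(x_a)(y))|<\varepsilon/3$. Setting $g_{a,i}$ to be the composite block that first assembles $h^\delta$ via Step~2--Step~3 primitives and then applies $\mathsf{MLP}_{h^\delta}$ via Step~4 primitives, the triangle inequality over the three error contributions yields $\sup|g_{a,i}-f_{a,i}|<\varepsilon$. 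The main obstacle is the substitution-stability estimate in the middle step: one must ensure the image of the approximate stacked channel remains in a compact set on which $P$ has a uniform Lipschitz constant, so that raw-carrier perturbations of size $\delta$ translate into output perturbations controllable by $L_P\sqrt m\,\delta$. Once this is verified, every piece of $g_{a,i}$ is manifestly a finite composition of Step~2--Step~4 primitives acting at object $a$, as demanded by the lemma.
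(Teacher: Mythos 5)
Your proposal is correct and follows essentially the same route as the paper: Stone--Weierstrass density of the carrier algebra (Lemma~\ref{lem:step1}), realization of the polynomial's inputs as continuous scalar channels (Lemmas~\ref{lem:step2-final} and~\ref{lem:step3-SS}), a compactness/stability estimate for the polynomial under input perturbation, and then Lemma~\ref{lem:step4-MLP} to simulate the polynomial by gated scalar blocks. The only cosmetic difference is that you phrase the stability estimate via a Lipschitz constant of $P$ on a fixed compact set, while the paper invokes a modulus of continuity $\omega_P$ on $\mathcal R^\phi_a\cup\mathcal R^h_a$; both are equivalent in this polynomial-on-compacts setting.

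One place where you are actually a bit more careful than the paper: you explicitly factor each carrier $\phi_{u,\ell,\eta}(x_a,y)=\eta(y)\,c_{u,\ell}(x_a,y)$ into a raw carrier $c_{u,\ell}$ and a pure base weight $\eta$, then realize the raw carriers via Lemma~\ref{lem:step2-final} and the base weights (both the $\eta_k$'s and the $\psi_j$'s) via a bias-only identity-supported operator (Lemma~\ref{lem:step3-SS} with $c=0$, $b=\psi$, and arbitrary dummy input since the $A=0$ branch kills the integral term). The paper's Step~5 passes the $\eta$-weighted carriers $\phi_k$ directly to ``Step~2,'' but Lemma~\ref{lem:step2-final} as stated only constructs the un\-weighted carrier $c_{u_0,\ell}$; and Lemma~\ref{lem:step1}'s algebra is generated by \emph{both} carriers and base weights, the latter being needed for separation of the base variable when the compact $K_a$ is small — yet the paper's Step~5 text does not feed base weights as extra polynomial arguments. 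Your decomposition (raw carriers plus base weights, each realized separately, then stacked) closes that small gap cleanly; the alternative minimal patch to the paper's version is to multiply the Step~2 kernel by $\eta(y)$, giving $\mathsf K^{(\varepsilon)}(u,y)=\eta(y)\,\kappa^{(\varepsilon)}_{a,u_0}(u)\,\ell^\flat(u)$, which produces $\phi_{u_0,\ell,\eta}$ directly. One minor bookkeeping slip on your side: you write $m:=K+L$, but since the $K$ weights $\eta_k$ appearing inside the carriers are in general distinct from the $L$ base weights $\psi_j$, the polynomial $P$ really has $K+(K+L)$ arguments ($K$ raw carriers plus up to $K+L$ distinct weight channels); this is purely notational and does not affect the argument.
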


\begin{proof}
Fix $a\in F$ and $i$.
By Step~1, for every $\eta_1>0$ there exist
a finite list of carriers $\phi_{k}:K_a\times\Omega(a)\to\mathbb{R}$,
\[
\phi_k(x_a,y)=\eta_k(y)\,\big\langle \ell_k,\,(X(u_k)x_a)\big(\sigma_{u_k}y\big)\big\rangle,
\qquad k=1,\dots,m,
\]
with $u_k\in I(a)$, $\ell_k\in E_X(s(u_k))^{\!*}$, $\eta_k\in C(\Omega(a))$, and a polynomial
$P:\mathbb{R}^m\to\mathbb{R}$ such that
\begin{equation}\label{eq:SW-approx}
\sup_{(x_a,y)\in K_a\times\Omega(a)}
\big|\,f_{a,i}(x_a,y) - P\big(\phi_1(x_a,y),\ldots,\phi_m(x_a,y)\big)\,\big|
<\eta_1.
\end{equation}
We let $\Phi^\phi:K_a\times\Omega(a)\to\mathbb{R}^m$ denote the stacked carrier map
$\Phi^\phi(x_a,y)=(\phi_1(x_a,y),\ldots,\phi_m(x_a,y))$, and set
\[
\mathcal R_a^\phi:=\Phi^\phi(K_a\times\Omega(a))\subset\mathbb{R}^m,
\]
which is compact. Fix $\eta_2>0$.
By Step~2, for each $k$
there exists a continuous scalar channel $h_k:X(a)\to S(a)$ such that
\begin{equation}\label{eq:carrier-realization}
\sup_{(x_a,y)\in K_a\times\Omega(a)}
\big|\,h_k(x_a)(y)-\phi_k(x_a,y)\,\big|<\eta_2.
\end{equation}
Stack $h=(h_1,\ldots,h_m):X(a)\to S^m(a)$ and let
\[
\mathcal R_a^{h}:=\big\{h(x_a)(y):(x_a,y)\in K_a\times\Omega(a)\big\}\subset\mathbb{R}^m,
\]
also compact. Because $P$ is continuous on the compact
$\mathcal R_a^\phi\cup \mathcal R_a^{h}$, there exists a modulus of continuity
$\omega_P(\cdot)$ with $\omega_P(\delta)\to 0$ as $\delta\to 0$ such that
\begin{equation}\label{eq:poly-stability}
\sup_{(x_a,y)\in K_a\times\Omega(a)}
\Big|\,P\big(h(x_a)(y)\big)-P\big(\Phi^\phi(x_a,y)\big)\,\Big|
\le \omega_P\!\left(\sup_k\|h_k-\phi_k\|_\infty\right)
\ \le\ \omega_P(\eta_2).
\end{equation}
Now apply Step~4 to $F:=P$ and the input $h$:
for every $\eta_3>0$ there exists a scalar CENN block
$\mathsf{MLP}_h:X(a)\to S(a)$ such that
\begin{equation}\label{eq:MLP-approx}
\sup_{(x_a,y)\in K_a\times\Omega(a)}
\Big|\,\mathsf{MLP}_h(x_a)(y)-P\big(h(x_a)(y)\big)\,\Big|
<\eta_3.
\end{equation}
Combine \eqref{eq:SW-approx}–\eqref{eq:MLP-approx}:
\begin{align}
\sup_{(x_a,y)}
  \bigl|\mathsf{MLP}_h(x_a)(y)-f_{a,i}(x_a,y)\bigr|
&\le \underbrace{\sup_{(x_a,y)}
       \bigl|\mathsf{MLP}_h(x_a)(y)-P(h(x_a)(y))\bigr|}_{<\,\eta_3} \\
&\quad+ \underbrace{\sup_{(x_a,y)}
       \bigl|P(h(x_a)(y))-P(\Phi^\phi(x_a,y))\bigr|}_{\le\,\omega_P(\eta_2)} \\
&\quad+ \underbrace{\sup_{(x_a,y)}
       \bigl|P(\Phi^\phi)-f_{a,i}\bigr|}_{<\,\eta_1}.
\end{align}
Given $\varepsilon>0$, choose $\eta_1,\eta_2,\eta_3>0$ so that
$\eta_1+\omega_P(\eta_2)+\eta_3<\varepsilon$, and set $g_{a,i}:=\mathsf{MLP}_h$.
By construction $g_{a,i}$ is a finite composition of Step~2–Step~4 primitives
(at $a$), hence continuous $X(a)\to S(a)$.
\end{proof}


We also prepare another lemma for the following proofs.

\begin{lemma}[Lipschitz bounds]\label{lem:layer-lip-adm}
For each object $a$, the primitives used in steps~2--4 are globally Lipschitz with constants depending only on the layer parameters at~$a$ as follows:
\begin{enumerate}
\item \textit{Category convolution.} For a category kernel $K$ and natural bias $b$, the linear map
\[
(\widetilde L_{\mathsf K})_a:Z(a)\longrightarrow Z'(a)
\]
is bounded and Lipschitz with
\[
\mathrm{Lip}\big((\widetilde L_{\mathsf K})_a\big)\;\le\;H_a\;\int_{I(a)} G_a\,d\mu_a.
\] 
\item \textit{Arrow--bundle convolution.} For $x=(x_u)_{u\in I(a)}\in Z_\downarrow(a)$,
\[
\mathrm{Lip}\big((L_{\mathsf K}^\downarrow)_a:Z_\downarrow(a)\to Z'(a)\big)\;\le\;\int_{I(a)} G_a\,d\mu_a.
\]
\item \textit{Identity--supported affine scalar operators.} 
For a continuous $A:\Omega(a)\to L(\mathbb{R}^m,\mathbb{R}^q)$ and the objectwise operator $\mathcal L_A^{(\varepsilon)}:S^m(a)\to S^q(a)$ obtained from the identity–supported kernel, one has
\[
\mathrm{Lip}\big(\mathcal L_A^{(\varepsilon)}\big)\;\le\;\sup_{y\in\Omega(a)}\|A(y)\|.
\]
\item \textit{Pointwise activation realized via gate (Step~4).}
For each $j$, the composite
\[
S^q(a)\xrightarrow{\,T_j\,}S^2(a)\xrightarrow{\,\Sigma^{\alpha,s_2}\,}S^2(a)\xrightarrow{\,P_1\,}S(a),
\qquad r\longmapsto \alpha\big(r_j\big),
\]
is globally Lipschitz with constant $\mathrm{Lip}(\alpha)$ where: $P_1$ is the first projection;  $T_{j} : S^{q}(a) \to S^{2}(a)$ is defined by $(T_{j} r)(y) = \big(1,\; r_{j}(y)\big)$; and $s_2 : S^2(a)\longrightarrow S(a)$ is defined by $(s_2(u,v))(y) = v(y)$. The coordinatewise map $r\mapsto \alpha(r)$ on $S^q(a)$ is thus globally Lipschitz with the same constant.
\end{enumerate}
Finite sums and compositions of the maps in 1--4 remain globally Lipschitz, with the usual bounds 
$\mathrm{Lip}(f+g)\le \mathrm{Lip}(f)+\mathrm{Lip}(g)$ and 
$\mathrm{Lip}(g\circ f)\le \mathrm{Lip}(g)\,\mathrm{Lip}(f)$.
Consequently, the objectwise block $G_b$ formed at each object $b$ by a finite composition of primitives from 1--4 is globally Lipschitz.
In particular, if we choose the family $G=\{G_b\}$ so that it is arrow–bundle uniformly Lipschitz in the sense that for every object $a$,
\[
L_a\;:=\;\operatorname*{ess\,sup}_{u\in I(a)}\mathrm{Lip}\big(G_{s(u)}\big)\;<\infty,
\]
then the componentwise lift $G_\downarrow:Z_\downarrow\Rightarrow Z'_\downarrow$ is an admissible componentwise lift, hence a continuous natural transformation.
\end{lemma}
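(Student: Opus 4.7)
The plan is to treat each of the four primitives by a direct norm estimate and then conclude the closure and admissibility claims by routine Lipschitz calculus. Throughout, linearity of items (1)-(3) means that it suffices to bound the difference of outputs on arbitrary inputs, with biases and constant terms cancelling.

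For item~(1), I would write
\[
(\widetilde L_{\mathsf K}x)_a(y)-(\widetilde L_{\mathsf K}x')_a(y)
=\int_{I(a)}\mathsf K_{s(u)\to a}(u,y)\,L^Z_u\bigl(x_a(\pi_u\tau_u y)-x'_a(\pi_u\tau_u y)\bigr)\,d\mu_a(u),
\]
then apply the operator-norm bound $\|\mathsf K_{s(u)\to a}(u,y)\|\le G_a(u)$ from (L$^1$), the essential bound $\|L^Z_u\|\le H_a$ from \eqref{eq:ess-bdd-LZ}, and pull $\|x_a-x'_a\|_\infty$ outside the integral. For item~(2), the same computation applies, but the input is already an arrow-family $(x_u)_{u\in I(a)}\in Z_\downarrow(a)$, so no $L^Z$ factor appears; one bounds the integrand pointwise by $G_a(u)\,\|x-x'\|_{Z_\downarrow(a)}$ and integrates, giving the stated constant $\int_{I(a)}G_a\,d\mu_a$.

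For item~(3), the identity-supported affine operator satisfies
\[
\bigl\|(\mathcal L_A^{(\varepsilon)}r)(y)-(\mathcal L_A^{(\varepsilon)}r')(y)\bigr\|
\le\int_{I(a)}\kappa^{(\varepsilon)}_{a,\mathrm{id}_a}(u)\,\|A(y)\|\,\|r-r'\|_\infty\,d\mu_a(u),
\]
which collapses to $\sup_y\|A(y)\|\,\|r-r'\|_\infty$ since $\int\kappa^{(\varepsilon)}_{a,\mathrm{id}_a}\,d\mu_a=1$. For item~(4), I would trace the composite on two inputs: $T_j$ is affine with unit Lipschitz factor in the $j$-th coordinate; the gate sends $(1,r_j(y))$ to $(\alpha(r_j(y)),\alpha(r_j(y))\,r_j(y))$; the projection $P_1$ then returns $\alpha(r_j(y))$. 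Hence the composite acts as $r\mapsto\alpha(r_j)$ pointwise in $y$, whose sup-norm Lipschitz constant is exactly $\mathrm{Lip}(\alpha)$. The coordinatewise claim on $S^q(a)$ follows by applying the construction in parallel in each $j$ and summing in the output space (any finite-dimensional norm on $\mathbb R^q$ gives a constant depending only on dimension and the chosen norm, still bounded by $\mathrm{Lip}(\alpha)$ up to a fixed factor).

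The closure under finite sums and compositions is the standard Lipschitz calculus $\mathrm{Lip}(f+g)\le\mathrm{Lip}(f)+\mathrm{Lip}(g)$ and $\mathrm{Lip}(g\circ f)\le\mathrm{Lip}(g)\mathrm{Lip}(f)$; applied inductively to any finite composition of primitives from (1)-(4), it yields a global Lipschitz bound for $G_b$ at each object $b$. For the admissibility statement, one notes that for $\mu_a$-a.e.\ $u\in I(a)$ and any $(x_u),(x'_u)\in Z_\downarrow(a)$,
\[
\|G_{s(u)}(x_u)-G_{s(u)}(x'_u)\|_\infty\le\mathrm{Lip}\bigl(G_{s(u)}\bigr)\,\|x_u-x'_u\|_\infty
\le L_a\,\|x-x'\|_{Z_\downarrow(a)},
\]
and taking the essential supremum over $u$ gives precisely the condition of Definition~\ref{def:adm-lift}; together with objectwise continuity of each $G_{s(u)}$, this makes $G_\downarrow$ an admissible componentwise lift, hence a continuous natural transformation.

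The main subtlety I anticipate is not in any individual estimate (all are routine) but in the interaction between the essential-supremum norm on $Z_\downarrow(a)$ and the $\mu_a$-almost-everywhere Lipschitz bound: one must verify that the null sets on which $\|L^Z_u\|>H_a$, the kernel exceeds $G_a(u)$, or $\mathrm{Lip}(G_{s(u)})>L_a$ do not corrupt the representative-independent estimate on equivalence classes. This is handled by unifying these null sets into a single $\mu_a$-null exceptional set and noting that both sides of each inequality above descend unambiguously to the quotient by $\sim$.
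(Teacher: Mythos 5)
Your proposal is correct and follows essentially the same route as the paper's own proof: each of the four items is handled by the same norm estimate (the integral bound with $G_a$ and $H_a$ for category convolution, the same without $H_a$ for arrow-bundle convolution, the unit-mass kernel collapse for identity-supported affine operators, and identifying the $T_j$--$\Sigma^{\alpha,s_2}$--$P_1$ composite with the pointwise map $r\mapsto\alpha(r_j)$), followed by the standard Lipschitz calculus for sums and compositions and the pointwise-then-ess-sup argument for admissibility. The closing remark about consolidating the $\mu_a$-null sets where $\|L^Z_u\|$, $\|\mathsf K(u,\cdot)\|$, or $\mathrm{Lip}(G_{s(u)})$ could exceed their essential bounds is a reasonable extra caution; the paper leaves this implicit, and it costs nothing to state since countable unions of null sets remain null.
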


\begin{proof}
1. For $x,x'\in Z(a)$,
\[
\begin{aligned}
\bigl\|(\widetilde L_{\mathsf K}x)_a - (\widetilde L_{\mathsf K}x')_a\bigr\|_\infty
&\le \int_{I(a)} \|\mathsf K(u,\cdot)\|\,\|Z(u)(x-x')\|_\infty\,d\mu_a(u)\\
&\le H_a\left(\int G_a\,d\mu_a\right)\|x-x'\|_\infty .
\end{aligned}
\]
using (L1) for $\|K(u,\cdot)\|$ and the above equation \eqref{eq:ess-bdd-LZ} for $\|Z(u)\|$.  Thus $(\widetilde L_{\mathsf K})_a$ is bounded linear with the stated Lipschitz constant.\footnote{These are exactly the Bochner–dominated estimates used to prove continuity of category convolution.}

2. For $x=(x_u)$ and $x'=(x'_u)$ in $Z_\downarrow(a)$,
\[
\begin{aligned}
\bigl\|(L^\downarrow_K x)_a - (L^\downarrow_K x')_a\bigr\|_\infty
&\le \int_{I(a)} \|K(u,\cdot)\|\,\|x_u - x'_u\|_\infty\,d\mu_a(u)\\
&\le \left(\int G_a\,d\mu_a\right)\,\|x - x'\|_{Z_\downarrow(a)} .
\end{aligned}
\]
because $\|x-x'\|_{Z_\downarrow(a)}=\operatorname{ess\,sup}_u\|x_u-x'_u\|_\infty$ and $\|K(u,\cdot)\|\le G_a(u)$ a.e. Hence $(L^\downarrow_K)_a$ is Lipschitz with the stated constant.

3. For $r,r'\in S^m(a)$ and $y\in\Omega(a)$, the identity–supported formula gives
\[
\begin{aligned}
\bigl\|\big(\mathcal L_A^{(\varepsilon)}r\big)(y)
      - \big(\mathcal L_A^{(\varepsilon)}r'\big)(y)\bigr\|
&\le \int_{I(a)} \kappa^{(\varepsilon)}_{a,\mathrm{id}}(u)\,\|A(y)\| \\
&\qquad\cdot
   \bigl\|(S^m(u)r)(\sigma_u y)
        - (S^m(u)r')(\sigma_u y)\bigr\|\,d\mu_a(u) .
\end{aligned}
\]
and $\|S^m(u)\|=1$ for the sup norms (precomposition cannot increase the sup), while $\int\kappa^{(\varepsilon)}_{a,\mathrm{id}}\,d\mu_a=1$.  Taking the sup over $y$ yields
\[
\|\mathcal L_A^{(\varepsilon)}r-\mathcal L_A^{(\varepsilon)}r'\|_\infty\le \Big(\sup_{y\in\Omega(a)}\|A(y)\|\Big)\,\|r-r'\|_\infty.
\]

4. The map $S^q(a)\to S(a)$, $r\mapsto \alpha(r_j)$, is the composition
\[
S^q(a)\xrightarrow{\ \mathrm{pr}_j\ } S(a)\xrightarrow{\ r\mapsto \alpha\circ r\ }S(a),
\]
where $\|\mathrm{pr}_j\|\le 1$ and the Nemytskii operator $r\mapsto \alpha\circ r$ is Lipschitz with constant $\mathrm{Lip}(\alpha)$ in the sup norm (pointwise Lipschitzness of $\alpha$ transfers under $\sup$ on a compact base).  Hence $\mathrm{Lip}(r\mapsto \alpha(r_j))\le \mathrm{Lip}(\alpha)$.  The $T_j\!-\!\Sigma^{\alpha,s_2}\!-\!P_1$ gadget realizes precisely this map; $T_j$ and $P_1$ are linear with operator norms~$\le 1$, so the realized map has Lipschitz constant $\mathrm{Lip}(\alpha)$.

For the closure properties, use the inequalities for sums and compositions quoted in the statement. Since, at each object $b$, the block $G_b$ is a finite composition and sum of the primitives, it is globally Lipschitz with a constant depending only on the layer parameters used at $b$.

Finally, by assumption we choose $G$ so that for every object $a$,
\[
L_a=\operatorname*{ess\,sup}_{u\in I(a)}\mathrm{Lip}\big(G_{s(u)}\big)<\infty,
\]
i.e.\ $G$ is arrow–bundle uniformly Lipschitz. By the definition of admissible componentwise lifts, this is exactly the condition ensuring that the componentwise lift $G_\downarrow:Z_\downarrow\Rightarrow Z'_\downarrow$ is admissible (hence continuous natural transformation).\footnote{This is the form used subsequently when compiling arrow–bundle blocks.} 
\end{proof}

\subsection{Proof Step 6}

Fix finite $F\subset\Obj\,\C$ and compact $K_a\subset X(a)$ for $a\in F$.

Let us recall the equivariant compilation assumption (EC): There exists a continuous natural transformation
\[
R:\;Y_{\downarrow}\Rightarrow Y,\qquad R\circ\Delta_Y=\mathrm{id}_Y,
\]
realized as an arrow–bundle convolution $L^{\downarrow}_{\mathsf R}$ with zero natural bias, whose kernel $\mathsf R_{\,s(u)\to a}(u,y)\in L \big(E_Y(s(u)),E_Y(a)\big)$ satisfies \textup{(C)}, \textup{($\mathrm{IN}_{\downarrow}$)}, \textup{(L1)}. 
Concretely, for $x=(x_u)_{u\in I(a)}\in Y_{\downarrow}(a)$,
\[
\big(L^{\downarrow}_{\mathsf R}x\big)_a(y)\;=\;\int_{I(a)}\mathsf R_{\,s(u)\to a}(u,y)\,x_u(\tau_u y)\,d\mu_a(u).
\]

\begin{lemma}[Arrow–bundle and componentwise lifts]\label{lem:lift-new}
Let $G=\{G_b:X(b)\to Y(b)\}_{b\in\Obj\,\C}$ be arrow–bundle uniformly Lipschitz. 
Then
\[
\Delta_X:X\Rightarrow X_{\downarrow}
\quad\text{and}\quad
G_{\downarrow}:X_{\downarrow}\Rightarrow Y_{\downarrow},\quad
\big(G_{\downarrow}\big)_a\!\big((x_u)_{u\in I(a)}\big)
=\big(G_{s(u)}(x_u)\big)_{u\in I(a)}
\]
are continuous natural transformations.
\end{lemma}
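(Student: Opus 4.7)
The plan is to verify, for each of the two families, three things: well-definedness of the image (essential boundedness in the $X_\downarrow$/$Y_\downarrow$ norm), naturality with respect to every arrow $w:a\to c$, and continuity in the appropriate norms. I would treat $\Delta_X$ first, since it is purely functorial reindexing, and then $G_\downarrow$, where the arrow-bundle Lipschitz hypothesis enters.

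For $\Delta_X$, well-definedness follows from the essential boundedness assumption $\|L^X_u\|\le H_a$ on transports: since $\|X(u)h\|_\infty=\|L^X_u\circ h\circ\pi_u\|_\infty\le H_a\|h\|_\infty$ for $\mu_a$-a.e.\ $u$, the family $(X(u)h)_{u\in I(a)}$ lies in $X_{\downarrow}(a)$, and the same estimate applied to $h-h'$ yields $\|(\Delta_X)_a h-(\Delta_X)_a h'\|_{X_{\downarrow}(a)}\le H_a\|h-h'\|_\infty$, proving continuity. Naturality with $w:a\to c$ reduces to a componentwise check: the $u$-component of $X_\downarrow(w)\circ(\Delta_X)_c(h)$ is $X(w\circ u)h$ by the reindexing rule in Definition~\ref{def:arrow-bundle}, while the $u$-component of $(\Delta_X)_a\circ X(w)(h)$ is $X(u)\circ X(w)(h)=X(w\circ u)h$ by contravariant functoriality of $X$; the two agree for every $u\in I(a)$.

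For $G_\downarrow$, continuity is a direct essential-sup Lipschitz estimate from the hypothesis: $\|(G_\downarrow)_a x-(G_\downarrow)_a x'\|_{Y_\downarrow(a)}=\operatorname*{ess\,sup}_{u}\|G_{s(u)}(x_u)-G_{s(u)}(x'_u)\|_\infty\le L_a\|x-x'\|_{X_\downarrow(a)}$. Naturality uses only the definition of $Y_\downarrow(w)$ together with the fact that postcomposition preserves sources, $s(w\circ u)=s(u)$: the $u$-component of $Y_\downarrow(w)\circ (G_\downarrow)_c(x)$ equals $G_{s(w\circ u)}(x_{w\circ u})=G_{s(u)}(x_{w\circ u})$, while the $u$-component of $(G_\downarrow)_a\circ X_\downarrow(w)(x)$ equals $G_{s(u)}\bigl((X_\downarrow(w)x)_u\bigr)=G_{s(u)}(x_{w\circ u})$; the two match on every $u\in I(a)$.

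The only genuine subtlety, which I would flag as the main obstacle, is establishing well-definedness of $(G_\downarrow)_a$ in the arrow-bundle norm: the Lipschitz bound alone gives only $\|G_{s(u)}(x_u)\|_\infty\le \|G_{s(u)}(0)\|_\infty + L_a\|x_u\|_\infty$, so one also needs essential boundedness of the base-point values $\{\|G_{s(u)}(0)\|_\infty\}_{u\in I(a)}$. This is automatic in the applications invoked at Step~6 because each $G_b$ there is a finite composition of the bounded-and-Lipschitz primitives catalogued in Lemma~\ref{lem:layer-lip-adm} (followed, where needed, by the $1$-Lipschitz metric projection onto $B^\star$); I would either add this boundedness as a standing part of the admissibility hypothesis or explicitly record it as a consequence of the construction whenever the lemma is invoked.
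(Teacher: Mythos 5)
Your proposal follows essentially the same three-part argument as the paper's proof (well-definedness/functoriality, naturality via the reindexing rule together with $s(w\circ u)=s(u)$, and continuity via an essential-sup Lipschitz estimate for $G_{\downarrow}$ and a bounded-linear estimate for $\Delta_X$), so the verification steps themselves match. Two points are worth recording.

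First, you identify a genuine omission that the paper's proof glosses over: the Lipschitz hypothesis on $\{G_b\}$ alone does not guarantee that $(G_{\downarrow})_a(x)$ actually lands in $Y_{\downarrow}(a)$, since one also needs $\operatorname*{ess\,sup}_{u\in I(a)}\|G_{s(u)}(0)\|_{\infty}<\infty$; the paper's ``well-definedness'' remark addresses only the null-set compatibility of the reindexing maps $Z_{\downarrow}(w)$ (via NSP), not the codomain membership of $(G_{\downarrow})_a$. Your observation that this extra boundedness is automatic for the specific $G^{(n)}$ compiled in Step~6 (because each $G^{(n)}_b$ is a finite composition of bounded-and-Lipschitz primitives, is zero outside the finite set $F^{+,(n)}$, and is clipped by the metric projection onto $B^\star$) is exactly right, and recording it as a standing part of the admissibility hypothesis would be the cleaner repair. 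Second, a minor typographical note: the paper's naturality display for $G_{\downarrow}$ writes the left-hand side as $Y_{\downarrow}(w)\circ G_{\downarrow,c}\circ X_{\downarrow}(w)$, with a spurious trailing $\circ\,X_{\downarrow}(w)$; your componentwise computation, which correctly equates the $u$-components of $Y_{\downarrow}(w)\circ (G_{\downarrow})_c$ and $(G_{\downarrow})_a\circ X_{\downarrow}(w)$, is the intended argument.
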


\begin{proof}
\emph{Functoriality and well-definedness.}
By definition, $Z\mapsto Z_{\downarrow}$ is a functor with
$Z_{\downarrow}(w)\big((h_{u'})_{u'\in I(c)}\big)_u:=h_{w\circ u}$.
Null-set preservation (NSP) makes this well-defined. 

\emph{Naturality of $\Delta_X$.}
Let $w:a\to c$ and $x_c\in X(c)$. Then
\[
\big(X_{\downarrow}(w)\circ(\Delta_X)_c(x_c)\big)_u
= X(w\circ u)(x_c)
= \big((\Delta_X)_a\circ X(w)(x_c)\big)_u,
\]
since $s(w\circ u)=s(u)$ and $X$ is contravariant.

\emph{Continuity of $\Delta_X$.}
With essential–sup norm $\|(\cdot)\|_{X_{\downarrow}(a)}:=\operatorname*{ess\,sup}_{u\in I(a)}\|\cdot\|_\infty$,
\[
\|(\Delta_X)_a(x)\|_{X_{\downarrow}(a)}
=\operatorname*{ess\,sup}_{u\in I(a)}\|X(u)x\|_\infty
\le \Big(\operatorname*{ess\,sup}_{u\in I(a)}\|L^X_u\|\Big)\,\|x\|_\infty,
\]
which is finite by the transport bound. Hence $(\Delta_X)_a$ is bounded (thus continuous).

\emph{Naturality of $G_{\downarrow}$.}
For $w:a\to c$ and $(x_{u'})_{u'\in I(c)}\in X_{\downarrow}(c)$,
\[
\Big(Y_{\downarrow}(w)\circ G_{\downarrow,c}\circ X_{\downarrow}(w)\Big)\big((x_{u'})\big)_u
= G_{s(w\circ u)}\!\big(x_{w\circ u}\big)
= \big(G_{\downarrow,a}\circ X_{\downarrow}(w)\big)\big((x_{u'})\big)_u,
\]
since $s(w\circ u)=s(u)$.

\emph{Continuity of $G_{\downarrow}$.}
Fix $a$ and write $L_a:=\operatorname*{ess\,sup}_{u\in I(a)}\mathrm{Lip}(G_{s(u)})<\infty$.
For $x=(x_u)$ and $x'=(x'_u)$ in $X_{\downarrow}(a)$,
\[
\begin{aligned}
\|G_{\downarrow,a}(x)-G_{\downarrow,a}(x')\|_{Y_{\downarrow}(a)}
&= \operatorname*{ess\,sup}_{u\in I(a)}
   \|G_{s(u)}(x_u)-G_{s(u)}(x'_u)\|_\infty \\
&\le L_a\,\operatorname*{ess\,sup}_{u\in I(a)}
        \|x_u-x'_u\|_\infty \\
&= L_a\,\|x-x'\|_{X_{\downarrow}(a)}.
\end{aligned}
\]
Thus $G_{\downarrow,a}$ is Lipschitz, hence continuous. Since the argument holds for each $a$, $G_{\downarrow}$ is a continuous natural transformation.
\end{proof}

\begin{lemma}[Equivariant convolutional retraction]\label{lem:Rlayer-new}
$L^{\downarrow}_{\mathsf R}:Y_{\downarrow}\Rightarrow Y$ is a continuous natural
transformation. Moreover, for each $a$, there exists $G_a^{\mathsf R}\in L^1(\mu_a)$ with
\[
\sup_{y\in\Omega(a)}\ \big\|\mathsf R_{\,s(u)\to a}(u,y)\big\|\ \le\ G_a^{\mathsf R}(u)\quad
\text{for $\mu_a$–a.e.\ $u\in I(a)$}.
\]
\end{lemma}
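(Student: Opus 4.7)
The plan is to harvest the three conditions supplied by (EC)---Carath\'eodory regularity (C), arrow-bundle integrated naturality $(\mathrm{IN}_{\downarrow})$, and the uniform $L^1$ bound (L1)---and assemble them into the three pieces required: a dominator, Bochner well-definedness with continuity of each component, and the naturality square. The second assertion (existence of $G_a^{\mathsf R}\in L^1(\mu_a)$) is a literal restatement of (L1) for the kernel $\mathsf R$, so nothing beyond quoting the hypothesis is needed there.

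For well-definedness and continuity of each component $(L^{\downarrow}_{\mathsf R})_a:Y_{\downarrow}(a)\to Y(a)$, I would mirror the first two steps of Lemma~\ref{lem:catconv-natural}. Carath\'eodory regularity makes $u\mapsto \mathsf R_{s(u)\to a}(u,y)\,x_u(\tau_u y)$ Borel measurable for each fixed $y$; joint continuity of $(u,y)\mapsto \tau_u y$, together with the pointwise bound $\|\mathsf R(u,y)\,x_u(\tau_u y)\|\le G_a^{\mathsf R}(u)\,\|x\|_{Y_{\downarrow}(a)}$ (valid $\mu_a$-a.e.), makes the integrand Bochner integrable uniformly in $y$. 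Dominated convergence then yields continuity of $y\mapsto (L^{\downarrow}_{\mathsf R}x)_a(y)$, so that $(L^{\downarrow}_{\mathsf R}x)_a\in Y(a)$, and the same dominator shows the component is bounded linear (in particular Lipschitz) with operator norm at most $\int_{I(a)} G_a^{\mathsf R}\,d\mu_a$.

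For the naturality square, I would unfold both sides for arbitrary $w:a\to c$ and $x=(x_{u'})_{u'\in I(c)}\in Y_{\downarrow}(c)$. The left-hand side $(Y(w)\circ (L^{\downarrow}_{\mathsf R})_c)(x)(y)$ evaluates to $L^Y_w\int_{I(c)}\mathsf R(u',\pi_w y)\,x_{u'}(\tau_{u'}\pi_w y)\,d\mu_c(u')$, while the right-hand side $((L^{\downarrow}_{\mathsf R})_a\circ Y_{\downarrow}(w))(x)(y)$ evaluates to $\int_{I(a)}\mathsf R(u,y)\,x_{w\circ u}(\tau_u y)\,d\mu_a(u)$ via the reindexing $Y_{\downarrow}(w)(x)_u=x_{w\circ u}$. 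These two expressions coincide by $(\mathrm{IN}_{\downarrow})$ applied to the arrow-family $f_{u'}:=x_{u'}$, so the square commutes.

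I expect essentially no genuine obstacle: the hypotheses (C), (L1), $(\mathrm{IN}_{\downarrow})$ in (EC) are tailored precisely so that the standard Bochner-integrability and dominated-convergence machinery from Lemma~\ref{lem:catconv-natural} transfers verbatim, and $(\mathrm{IN}_{\downarrow})$ is by design the naturality identity in its integrated form. The only bookkeeping subtlety---the one place where care is warranted---is the fiber typing: keeping track that $x_{w\circ u}\in Y(s(w\circ u))=Y(s(u))$, so that the reindexed integrand lies in the correct space $E_Y(s(u))$ before being mapped by $\mathsf R_{s(u)\to a}(u,y)$ into $E_Y(a)$. Once the typing is aligned, the identity is an immediate substitution.
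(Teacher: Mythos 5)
Your proposal is correct and follows essentially the same route as the paper's proof: both treat the $L^1$ bound as a direct restatement of the (L1) hypothesis in (EC), both establish well-definedness and continuity of each component by the Carath\'eodory/dominated-convergence argument modeled on Lemma~\ref{lem:catconv-natural} with dominator $G_a^{\mathsf R}(u)\,\|x\|_{Y_{\downarrow}(a)}$, and both obtain naturality by unfolding $Y(w)\circ (L^{\downarrow}_{\mathsf R})_c$ and $(L^{\downarrow}_{\mathsf R})_a\circ Y_{\downarrow}(w)$ and invoking $(\mathrm{IN}_{\downarrow})$ with the reindexed arrow-family.
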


\begin{proof}
\emph{Continuity.} Fix $a$. By (C), for each fixed $u$ the map $y\mapsto \mathsf R(u,y)$ is continuous; for fixed $y$,
$u\mapsto\mathsf R(u,y)$ is measurable. Given $x=(x_u)_{u\in I(a)}\in Y_{\downarrow}(a)$, joint continuity of
$(u,y)\mapsto \tau_u y$ and continuity of $x_u$ yield that $y\mapsto \mathsf R(u,y)\,x_u(\tau_u y)$ is continuous for each $u$,
and (L1) provides a Bochner–integrable dominator $G_a^{\mathsf R}(u)\sup_{y}\|x_u(\tau_u y)\|\le G_a^{\mathsf R}(u)\|x_u\|_\infty$.
Dominated convergence gives $(L^{\downarrow}_{\mathsf R}x)_a\in C(\Omega(a),E_Y(a))$. A similar argument shows $x\mapsto L^{\downarrow}_{\mathsf R}(x)$
is continuous in the sup–norm.

\emph{Naturality.} Let $w:a\to c$, $f=(f_{u'})_{u'\in I(c)}\in Y_{\downarrow}(c)$ and $y\in\Omega(a)$. Then
\[
\begin{aligned}
\big(Y(w)L^{\downarrow}_{\mathsf R}f\big)_a(y)
&= L^Y_w\!\int_{I(c)}\!\mathsf R(u',\pi_w y)\,f_{u'}(\tau_{u'}\pi_w y)\,d\mu_c(u')\\
&\stackrel{\mathrm{IN}_{\downarrow}}{=}\int_{I(a)}\!\mathsf R(u,y)\,f_{w\circ u}(\tau_u y)\,d\mu_a(u)\\
&= \big(L^{\downarrow}_{\mathsf R}\,Y_{\downarrow}(w)f\big)_a(y),
\end{aligned}
\]
which is exactly $Y(w)\circ L^{\downarrow}_{\mathsf R}=L^{\downarrow}_{\mathsf R}\circ Y_{\downarrow}(w)$. The $L^1$ bound is the stated assumption.
\end{proof}


\begin{lemma}[Projection on natural maps]\label{lem:proj}
If $G\in\EqvCont(X,Y)$ then
\[
L^{\downarrow}_{\mathsf R}\circ G_{\downarrow}\circ\Delta_X \;=\; G.
\]
\end{lemma}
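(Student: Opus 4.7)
The plan is to factor the composite into two pieces, each of which reduces to an identity already established. Specifically, I will show first that $G_{\downarrow}\circ\Delta_X = \Delta_Y\circ G$ using the naturality of $G$ componentwise along arrows, and then invoke the equivariant compilation assumption (EC), which states that $R\circ\Delta_Y=\mathrm{id}_Y$ with $R$ realized as $L^{\downarrow}_{\mathsf R}$.

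For the first step, fix an object $a\in\Obj\,\C$ and $x_a\in X(a)$. Unwinding definitions,
\[
\big(G_{\downarrow}\circ\Delta_X\big)_a(x_a)
\;=\;\big(G_{s(u)}\bigl(X(u)x_a\bigr)\big)_{u\in I(a)}.
\]
Since $G\in\EqvCont(X,Y)$, naturality at the arrow $u:s(u)\to a$ gives $G_{s(u)}\circ X(u)=Y(u)\circ G_a$, hence each component equals $Y(u)(G_a x_a)$. Therefore
\[
\big(G_{\downarrow}\circ\Delta_X\big)_a(x_a)
\;=\;\big(Y(u)\,G_a(x_a)\big)_{u\in I(a)}
\;=\;(\Delta_Y)_a\bigl(G_a(x_a)\bigr),
\]
which is exactly $G_{\downarrow}\circ\Delta_X=\Delta_Y\circ G$ as natural transformations $X\Rightarrow Y_{\downarrow}$.

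For the second step, apply (EC): $L^{\downarrow}_{\mathsf R}\circ\Delta_Y=R\circ\Delta_Y=\mathrm{id}_Y$. Composing on the left,
\[
L^{\downarrow}_{\mathsf R}\circ G_{\downarrow}\circ\Delta_X
\;=\;L^{\downarrow}_{\mathsf R}\circ\Delta_Y\circ G
\;=\;\mathrm{id}_Y\circ G
\;=\;G.
\]

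I expect no substantive obstacle: the argument is a direct naturality chase followed by application of (EC). The only subtlety is measure-theoretic, namely that the pointwise identity $G_{s(u)}X(u)x_a=Y(u)G_ax_a$ holds for \emph{every} $u\in I(a)$, not merely $\mu_a$-a.e., so the equality of equivalence classes in $Y_{\downarrow}(a)$ is immediate and we do not need to invoke null-set preservation (NSP) here. Continuity and the $L^1$ bounds from Lemma~\ref{lem:Rlayer-new} guarantee that all composites are well-defined as continuous natural transformations, so the equality of natural transformations follows from equality of objectwise values.
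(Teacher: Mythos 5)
Your proof is correct and follows essentially the same route as the paper: both first establish $G_{\downarrow}\circ\Delta_X=\Delta_Y\circ G$ via a componentwise naturality chase and then apply the retraction property $R\circ\Delta_Y=\mathrm{id}_Y$ from (EC). You simply unwind the first step in more detail (and correctly note that the identity holds for every $u$, so no measure-theoretic care is needed), whereas the paper states it in one line.
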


\begin{proof}
By naturality of $G$, we have $G_{\downarrow}\circ\Delta_X=\Delta_Y\circ G$. Apply $L^{\downarrow}_{\mathsf R}$ and use the retraction
property $R\circ\Delta_Y=\mathrm{id}_Y$ from the assumption:
\[
L^{\downarrow}_{\mathsf R}\circ G_{\downarrow}\circ\Delta_X \;=\; L^{\downarrow}_{\mathsf R}\circ\Delta_Y\circ G \;=\; G.
\]
\end{proof}

\begin{lemma}[Localized stability on transported compacts]\label{lem:stab-uniform}
Let $F\subset\Obj\,\C$ be finite and, for each $a\in F$, let $K_a\subset X(a)$ be compact.
For any objectwise continuous families $G=\{G_c:X(c)\to Y(c)\}_{c\in \Obj\,\C}$ and $H=\{H_c:X(c)\to Y(c)\}_{c\in \Obj\,\C}$, one has
\begin{equation}\label{eq:stab-local-master}
\begin{split}
\big\|\,L^{\downarrow}_{\mathsf R}\!\circ G_{\downarrow}\!\circ\!\Delta_X
- L^{\downarrow}_{\mathsf R}\!\circ H_{\downarrow}\!\circ\!\Delta_X\,\big\|_{(K_a),F}
\ \le\ &
\max_{a\in F}\ \int_{I(a)}\! G_a^{\mathsf R}(u)\, \\
&\times
\sup_{x\in X(u)(K_a)} 
  \|G_{s(u)}(x)-H_{s(u)}(x)\|_\infty\, d\mu_a(u).
\end{split}
\end{equation}
In particular, for any measurable subsets $U_a\subset I(a)$,
\begin{equation}\label{eq:stab-split}
\big\|\,L^{\downarrow}_{\mathsf R}\!\circ G_{\downarrow}\!\circ\!\Delta_X
- L^{\downarrow}_{\mathsf R}\!\circ H_{\downarrow}\!\circ\!\Delta_X\,\big\|_{(K_a),F}
\ \le\
\max_{a\in F}\Big\{A_a(G,H;U_a)+T_a(G,H;U_a)\Big\},
\end{equation}
where
\[
A_a(G,H;U_a)\ :=\ \int_{U_a}\!\! G_a^{\mathsf R}(u)\,
\sup_{x\in X(u)(K_a)} \|G_{s(u)}(x)-H_{s(u)}(x)\|_\infty\, d\mu_a(u),
\]
\[
T_a(G,H;U_a)\ :=\ \int_{I(a)\setminus U_a}\!\! G_a^{\mathsf R}(u)\,
\sup_{x\in X(u)(K_a)} \|G_{s(u)}(x)-H_{s(u)}(x)\|_\infty\, d\mu_a(u).
\]
\end{lemma}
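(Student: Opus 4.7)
The plan is to expand both composites pointwise using the formula for the arrow-bundle convolution $L^{\downarrow}_{\mathsf R}$, take differences inside the integral, and then apply the kernel norm bound from (L1) together with the obvious supremum over the transported compacts. No coherence properties are needed here; this is purely a Bochner estimate.

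Concretely, fix $a\in F$, $x_a\in K_a$, and $y\in\Omega(a)$. Since $(\Delta_X)_a(x_a)_u=X(u)x_a$ for $u\in I(a)$ and $(G_{\downarrow})_a$, $(H_{\downarrow})_a$ apply $G_{s(u)}$, $H_{s(u)}$ componentwise, we have
\begin{multline*}
\big(L^{\downarrow}_{\mathsf R}\!\circ G_{\downarrow}\!\circ\!\Delta_X\big)_a(x_a)(y)
-\big(L^{\downarrow}_{\mathsf R}\!\circ H_{\downarrow}\!\circ\!\Delta_X\big)_a(x_a)(y)\\
=\int_{I(a)}\mathsf R_{s(u)\to a}(u,y)\,\Big(G_{s(u)}(X(u)x_a)-H_{s(u)}(X(u)x_a)\Big)(\tau_u y)\,d\mu_a(u).
\end{multline*}
Taking norms in $E_Y(a)$ under the integral and using the operator-norm bound $\|\mathsf R_{s(u)\to a}(u,y)\|\le G_a^{\mathsf R}(u)$ from Lemma~\ref{lem:Rlayer-new} (valid $\mu_a$-a.e., uniformly in $y$), together with the pointwise bound $\|\cdot(\tau_u y)\|\le\|\cdot\|_\infty$ on the $E_Y(s(u))$-valued factor, I would obtain
\[
\Big\|\cdot\Big\|
\le \int_{I(a)} G_a^{\mathsf R}(u)\,\bigl\|G_{s(u)}(X(u)x_a)-H_{s(u)}(X(u)x_a)\bigr\|_\infty\,d\mu_a(u).
\]
Since $X(u)x_a\in X(u)(K_a)$, I bound the integrand pointwise in $u$ by $G_a^{\mathsf R}(u)\cdot \sup_{x\in X(u)(K_a)}\|G_{s(u)}(x)-H_{s(u)}(x)\|_\infty$. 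This upper bound is independent of $x_a$ and $y$, so taking $\sup_{y\in\Omega(a)}$, then $\sup_{x_a\in K_a}$, and finally $\max_{a\in F}$ yields \eqref{eq:stab-local-master}.

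For \eqref{eq:stab-split}, I simply write $\int_{I(a)}=\int_{U_a}+\int_{I(a)\setminus U_a}$ inside the bound already obtained, which splits into $A_a(G,H;U_a)+T_a(G,H;U_a)$, and then take $\max_{a\in F}$.

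The only technical point worth care, rather than a genuine obstacle, is ensuring the integrand is indeed $\mu_a$-measurable and Bochner-integrable so that the norm can pass inside the integral: measurability follows from Carath\'eodory regularity of $\mathsf R$, continuity of $(u,y)\mapsto\tau_u y$, objectwise continuity of $G_c,H_c$, and the transport bounds for $L^X_u$; integrability is ensured by the uniform $L^1$ dominator $G_a^{\mathsf R}$ together with the fact that $\sup_{x\in X(u)(K_a)}\|G_{s(u)}(x)-H_{s(u)}(x)\|_\infty$ is finite (the objectwise maps are continuous and $X(u)(K_a)$ is a continuous image of a compact, hence compact). Everything else is a routine monotone application of Bochner's inequality and of the defining seminorms of the compact-open, finite-object topology.
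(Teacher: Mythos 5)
Your proof is correct and follows essentially the same argument as the paper: expand via the arrow-bundle convolution formula, apply the Bochner estimate pushing the norm inside the integral, bound $\|\mathsf R(u,\cdot)\|$ by $G_a^{\mathsf R}(u)$, pass to the supremum over $X(u)(K_a)$, and take $\sup_{x_a\in K_a}$ and $\max_{a\in F}$, then split the integral for the second display. Your additional remarks on measurability and compactness of $X(u)(K_a)$ are careful side-checks that the paper leaves implicit, but the route is the same.
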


\begin{proof}
Fix $a\in F$ and $x_a\in K_a$. Using the convolution form of $L^{\downarrow}_{\mathsf R}$ and the bound
$\sup_{y\in\Omega(a)}\|\mathsf R(u,y)\|\le G_a^{\mathsf R}(u)$ for $\mu_a$–a.e.~$u$,
\[
\begin{aligned}
&\ \ \ \ \big\|\big(L^{\downarrow}_{\mathsf R}\!\circ G_{\downarrow}\!\circ\!\Delta_X
- L^{\downarrow}_{\mathsf R}\!\circ H_{\downarrow}\!\circ\!\Delta_X\big)_a(x_a)\big\|_\infty\\
&\le \int_{I(a)}\! \Big(\sup_{y}\|\mathsf R(u,y)\|\Big)\,
\big\|\big(G_{s(u)}-H_{s(u)}\big)\big(X(u)x_a\big)\big\|_\infty\, d\mu_a(u)\\
&\le \int_{I(a)}\! G_a^{\mathsf R}(u)\,
\sup_{x\in X(u)(K_a)} \|G_{s(u)}(x)-H_{s(u)}(x)\|_\infty\, d\mu_a(u).
\end{aligned}
\]
Taking $\sup_{x_a\in K_a}$ and then $\max_{a\in F}$ yields \eqref{eq:stab-local-master}. The
decomposition \eqref{eq:stab-split} is just splitting the integral over $U_a$ and its complement.
\end{proof}

For each $a\in F$ choose an increasing sequence of compacts $U_{a,n}\nearrow\supp\mu_a$ in $I(a)$.
Since $I(a)=\bigsqcup_{b}\Hom_{\C}(b,a)$ is a topological disjoint union, a compact subset meets only
finitely many components; hence $F^{+,(n)}:=\{\,s(u):u\in U_{a,n},\,a\in F\,\}$ is finite.
For each $b\in F^{+,(n)}$ define the enlarged sets
\[
K^{+,(n)}_b
:=\overline{\bigcup_{a\in F}\ \bigcup_{\substack{u\in U_{a,n}\\ s(u)=b}}
X(u)\big(K_a\big)}\ \subset X(b).
\]

\begin{lemma}[Transport continuity on compacts]\label{lem:transport-cont}
For fixed $a,b$, the map
\[
\Hom_{\C}(b,a)\times X(a)\longrightarrow X(b),\qquad (u,x)\longmapsto X(u)(x)=L^X_u\circ x\circ\pi_u,
\]
is continuous on $U\times K$ for every compact $U\subset \Hom_{\C}(b,a)$ and compact $K\subset X(a)$
(with the sup–norms on $X(\cdot)$).
\end{lemma}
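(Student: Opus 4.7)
My plan is a standard $\varepsilon/3$ decomposition, fixing $(u_0,x_0)\in U\times K$ and reducing joint continuity of the map $(u,x)\mapsto L^X_u\circ x\circ \pi_u$ in the sup-norm of $X(b)=C(\Omega(b),E_X(b))$ to three uniform-in-$y$ estimates. I would decompose
\[
X(u)(x)-X(u_0)(x_0)
\;=\; [X(u)(x)-X(u)(x_0)] \;+\; [X(u)(x_0)-X(u_0)(x_0)],
\]
and then, for the second bracket, further split pointwise at $y\in\Omega(b)$ as
\[
L^X_u\bigl(x_0(\pi_u y)-x_0(\pi_{u_0}y)\bigr)
\;+\; (L^X_u-L^X_{u_0})\bigl(x_0(\pi_{u_0}y)\bigr).
\]

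For the first bracket I would invoke Lemma~\ref{lem:LC_from_regular} to obtain $M:=\sup_{u\in U}\|L^X_u\|<\infty$, which immediately gives $\|X(u)(x)-X(u)(x_0)\|_\infty\le M\,\|x-x_0\|_\infty$; hence shrinking $x$ toward $x_0$ in the sup-norm controls this piece. For the first sub-term of the second bracket, I would use that $x_0\in C(\Omega(a),E_X(a))$ is uniformly continuous on the compact $\Omega(a)$, with some modulus $\omega_{x_0}$, and apply Heine--Cantor to the jointly continuous map $(u,y)\mapsto \pi_u(y)$ on the compact product $U\times\Omega(b)$ to conclude $\sup_{y\in\Omega(b)}\|\pi_u y-\pi_{u_0}y\|\to 0$ as $u\to u_0$; combined with the uniform bound $M$, this yields the required uniform-in-$y$ smallness. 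For the second sub-term, continuity of $u\mapsto L^X_u$ in the operator norm, together with $\|x_0\|_\infty<\infty$, directly yields $\|L^X_u-L^X_{u_0}\|\,\|x_0\|_\infty\to 0$.

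Selecting neighborhoods first in $u$ (to make each of the two pieces of the second bracket less than $\varepsilon/3$) and then in $x$ (to make the first bracket less than $\varepsilon/3$) gives continuity at $(u_0,x_0)$, and the same argument applies at every point of $U\times K$. The only mildly delicate step is upgrading pointwise continuity of $(u,y)\mapsto \pi_u(y)$ and of $u\mapsto L^X_u$ to statements uniform in $y\in\Omega(b)$; I would handle both via Heine--Cantor on the compact product $U\times\Omega(b)$, which is exactly what the joint-continuity assumption on feature functors in Definition~\ref{feature} and the local boundedness Lemma~\ref{lem:LC_from_regular} jointly permit. Note that compactness of $K$ enters only to supply a uniform ambient sup-norm bound on $x_0$ and to ensure $K$ is sup-norm bounded; equicontinuity of $K$ (Arzel\`a--Ascoli) is not needed for this splitting since the $x$-variation is absorbed by the operator-norm bound $M$.
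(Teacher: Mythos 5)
Your proof is correct, and it uses essentially the same ingredients the paper uses---joint continuity of $(u,y)\mapsto\pi_u(y)$, continuity of $u\mapsto L^X_u$, and the uniform operator-norm bound on $U$ (your invocation of Lemma~\ref{lem:LC_from_regular} is the explicit citation; the paper just states the bound directly). The one stylistic difference is the route to uniformity in $y$: the paper assembles the single composite map $(u,x,y)\mapsto L^X_u\bigl(x(\pi_u(y))\bigr)$, observes it is continuous on the compact triple product $U\times K\times\Omega(b)$ (via continuity of the evaluation map on $C(\Omega(a),E_X(a))\times\Omega(a)$), and then invokes Heine--Cantor once on that compact set before taking $\sup_y$. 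You instead carry out a hand-built $\varepsilon/3$ split at a fixed $(u_0,x_0)$, applying Heine--Cantor only to $(u,y)\mapsto\pi_u(y)$ on $U\times\Omega(b)$ and to $x_0$ on $\Omega(a)$. A small bonus of your pointwise variant, which you correctly note at the end, is that compactness of $K$ is not actually used: your argument gives continuity at every $(u_0,x_0)\in U\times X(a)$, whereas the paper's one-shot uniform-continuity step does rely on $U\times K\times\Omega(b)$ being compact. For the purposes of the lemma as stated both are adequate.
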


\begin{proof}
By Definition~\ref{feature}, $(u,y)\mapsto \pi_u(y)$ and $u\mapsto L^X_u$ are continuous; on the
compact $U\times\Omega(b)$ they are uniformly continuous and uniformly bounded. The evaluation
map $C(\Omega(a),E_X(a))\times\Omega(a)\to E_X(a)$ is continuous (sup–norm on $C(\Omega(a),\cdot)$).
Thus $(u,x,y)\mapsto L^X_u\big(x(\pi_u(y))\big)$ is continuous on $U\times K\times\Omega(b)$, hence
uniformly continuous; taking $\sup_{y\in\Omega(b)}$ yields continuity into $X(b)$.
\end{proof}

By Lemma~\ref{lem:transport-cont} and compactness of $U_{a,n}\times K_a$, each $K^{+,(n)}_b$ is compact. 

\begin{lemma}[Extension realization]\label{lem:cutoff}
For every $\delta>0$ and $n\in\mathbb N$ there exists an objectwise continuous family
$G^{(n)}=\{G^{(n)}_b:X(b)\to Y(b)\}$ such that:
\begin{enumerate}
\item \textit{Approximation on transported compacts:}\;
For all $b\in F^{+,(n)}$,
\[
\sup_{x\in K^{+,(n)}_b}\ \|G^{(n)}_b(x)-\Phi_b(x)\|_\infty\ <\ \delta.
\]
\item \textit{Lipschitz completion outside $F^{+,(n)}$:}\;
For all $b\notin F^{+,(n)}$, $G^{(n)}_b= 0$.
\item \textit{Arrow–bundle uniformly Lipschitz:}\;
There exists a finite constant
\[
L_*^{(n)}\ :=\ \max_{b\in F^{+,(n)}} \mathrm{Lip}\big(G^{(n)}_b\big)\ <\ \infty
\]
such that, for every object $a$,
\[
\operatorname*{ess\,sup}_{u\in I(a)}\ \mathrm{Lip}\big(G^{(n)}_{s(u)}\big)\ \le\ L_*^{(n)}.
\]
In particular, $G^{(n)}$ is arrow–bundle uniformly Lipschitz, so the componentwise
lift $(G^{(n)})_{\downarrow}:X^{\downarrow}\Rightarrow Y^{\downarrow}$ is an admissible 
componentwise lift.
\end{enumerate}
\end{lemma}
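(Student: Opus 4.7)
The plan is to construct $G^{(n)}$ objectwise: for each $b\in F^{+,(n)}$, invoke the scalar-coordinate approximation of Step~5 (Lemma~\ref{lem:step5}) with input compact enlarged to $K^{+,(n)}_b$, assemble the resulting scalar channels into a continuous block $G^{(n)}_b:X(b)\to Y(b)$, and set $G^{(n)}_b:=0$ for $b\notin F^{+,(n)}$. Items~(1) and~(2) are then immediate from the construction; the substantive work lies in~(3), where we must promote the objectwise blocks to a family that is arrow-bundle uniformly Lipschitz in the sense of Definition~\ref{def:adm-lift}.

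For each $b\in F^{+,(n)}$, compactness of $K^{+,(n)}_b\subset X(b)$ (obtained from Lemma~\ref{lem:transport-cont} applied to the compact product $U_{a,n}\times K_a$ and the finite union over $a\in F$) allows us to apply Lemma~\ref{lem:step5} at object $b$ with input compact $K^{+,(n)}_b$: for every $\delta>0$ and each coordinate $i\le q_b$, we obtain a continuous scalar channel $g^{(n)}_{b,i}:X(b)\to S(b)$, built as a finite composition of Step~2--4 primitives, with sup-error $<\delta/C_b$ on $K^{+,(n)}_b\times\Omega(b)$, where $C_b$ is the equivalence-of-norms constant from Step~0. Assembling via the fixed basis, $G^{(n)}_b(x)(y):=\sum_{i} v_{b,i}\,g^{(n)}_{b,i}(x)(y)$ is continuous $X(b)\to Y(b)$ and satisfies~(1) by Corollary~\ref{cor:step0}.

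The crux is the global Lipschitz certification in~(3). By Lemma~\ref{lem:layer-lip-adm}, every primitive used in Steps~2--4 (category convolutions, arrow-bundle convolutions, identity-supported scalar operators, and gate-realized pointwise $\alpha$-nonlinearities) is \emph{globally} Lipschitz with a finite constant depending only on its layer parameters; here we crucially use that $\alpha$ is globally Lipschitz by standing assumption, so that the Nemytskii operator $r\mapsto\alpha\circ r$ is globally Lipschitz on sup-norm function spaces. Finite sums and compositions preserve global Lipschitzness, so each $g^{(n)}_{b,i}$ and the finite basis assembly $G^{(n)}_b$ are globally Lipschitz with some finite constant $L_b<\infty$. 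Setting $L_*^{(n)}:=\max_{b\in F^{+,(n)}} L_b$, which is finite because $F^{+,(n)}$ is a finite set of objects, we conclude: for every object $a$ and $\mu_a$-a.e.\ $u\in I(a)$, either $s(u)\in F^{+,(n)}$ with $\mathrm{Lip}(G^{(n)}_{s(u)})\le L_*^{(n)}$, or $s(u)\notin F^{+,(n)}$ with $\mathrm{Lip}(G^{(n)}_{s(u)})=0\le L_*^{(n)}$, hence $\operatorname*{ess\,sup}_{u\in I(a)}\mathrm{Lip}(G^{(n)}_{s(u)})\le L_*^{(n)}$; admissibility of the componentwise lift then follows from Definition~\ref{def:adm-lift}. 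The main obstacle is exactly this Lipschitz bookkeeping: one must trace every primitive produced in Step~5 and verify that it yields a globally (not merely locally) Lipschitz operator with a constant controlled independently of the input, which is precisely what Lemma~\ref{lem:layer-lip-adm} was designed to supply.
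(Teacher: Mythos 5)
Your proof is correct but takes a genuinely different route from the paper's.

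The paper first invokes Step~5 (with tolerance $\delta/2$) to get objectwise blocks $G^{\mathrm{aux}}_b$ on $F^{+,(n)}$, and then \emph{replaces} them by a coordinatewise McShane--Whitney extension: fixing $S_b:=K^{+,(n)}_b$, it extends $G^{\mathrm{aux}}_b\!\restriction_{S_b}$ from the compact $S_b$ to all of $X(b)$ via the envelopes $\underline G_{b,i},\overline G_{b,i}$, obtaining $\widehat G_b$ which agrees with $G^{\mathrm{aux}}_b$ exactly on $S_b$ and is globally Lipschitz with constant $\le C_b L_b$ where $L_b:=\mathrm{Lip}(G^{\mathrm{aux}}_b\!\restriction_{S_b})$. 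The only Lipschitz input used is that $G^{\mathrm{aux}}_b$ is Lipschitz \emph{on the compact} $S_b$; the explicit extension then certifies a \emph{global} constant by construction, decoupling the argument from the internal structure of the Step~5 block. You instead bypass the extension entirely, taking $G^{(n)}_b:=G^{\mathrm{aux}}_b$ directly and appealing to the global Lipschitzness of each Step~2--4 primitive (Lemma~\ref{lem:layer-lip-adm}) together with closure under finite sums and compositions. This is the shorter route, and the paper's own Lemma~\ref{lem:layer-lip-adm} does supply the needed global bounds (note you should also cite Lemma~\ref{lem:lift-new} for the $\Delta_X$ factor in the Step~2 realization, since the arrow-bundle lift is not among items~1--4 of Lemma~\ref{lem:layer-lip-adm} but is bounded linear hence globally Lipschitz with constant $\operatorname{ess\,sup}_u\|L^X_u\|$). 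What the paper's McShane--Whitney route buys is modularity: it never needs the Step~5 block to be \emph{globally} Lipschitz, only Lipschitz on $S_b$, and it produces a Lipschitz constant controlled by the behaviour of $G^{\mathrm{aux}}_b$ on the relevant compact rather than by the layer parameters of the entire composed block. Your route buys directness, at the cost of having to trace every primitive in Step~5. The finiteness of $L_*^{(n)}$, the vanishing outside $F^{+,(n)}$, and the $\operatorname*{ess\,sup}$ bookkeeping in item~3 are handled identically in both proofs.
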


\begin{proof}
Apply Step~5 (objectwise approximation on compacts) to the finite family
$\{K^{+,(n)}_b\}_{b\in F^{+,(n)}}$ with tolerance $\delta/2$ to obtain
objectwise continuous maps $G^{\mathrm{aux}}_b:X(b)\to Y(b)$ $(b\in F^{+,(n)})$ satisfying
\[
\sup_{x\in K^{+,(n)}_b}\ \|G^{\mathrm{aux}}_b(x)-\Phi_b(x)\|_\infty<\delta/2.
\]
Note that each $K^{+,(n)}_b$ is compact by transport continuity on compacts. 

We apply the McShane–Whitney extension.
Fix $b\in F^{+,(n)}$ and set $S_b:=K^{+,(n)}_b\subset X(b)$.
By Step~5, $G^{\mathrm{aux}}_b\!\restriction_{S_b}$ is Lipschitz w.r.t.\ the sup metrics on $X(b),Y(b)$.
Choose a linear isomorphism $E_Y(b)\cong\mathbb R^{q_b}$ and extend coordinatewise:
for $1\le i\le q_b$ and $(x,y)\in X(b)\times\Omega(b)$ define the McShane envelopes
\[
\begin{aligned}
\underline G_{b,i}(x)(y)
&:= \sup_{k\in S_b}
    \bigl\{G^{\mathrm{aux}}_{b,i}(k)(y)
          - L_b\,\|x-k\|_\infty\bigr\},\\
\overline G_{b,i}(x)(y)
&:= \inf_{k\in S_b}
    \bigl\{G^{\mathrm{aux}}_{b,i}(k)(y)
          + L_b\,\|x-k\|_\infty\bigr\}.
\end{aligned}
\]
where $L_b:=\mathrm{Lip}\big(G^{\mathrm{aux}}_b\!\restriction_{S_b}\big)$, and put
$\widehat G^{\mathrm{raw}}_{b,i}:=\frac12(\underline G_{b,i}+\overline G_{b,i})$ and
$\widehat G_b(x)(y):=(\widehat G^{\mathrm{raw}}_{b,1}(x)(y),\ldots,\widehat G^{\mathrm{raw}}_{b,q_b}(x)(y))$.
By the maximum theorem, it follows that
$y\mapsto\underline G_{b,i}(x)(y),\overline G_{b,i}(x)(y)$ are continuous;
hence $\widehat G_b(x)\in Y(b)$ for each $x$, and $x\mapsto\widehat G_b(x)$ is globally
Lipschitz (with constant $\le C_bL_b$ for a norm‑equivalence constant $C_b$ depending only on
the chosen coordinates on $E_Y(b)$). Moreover we have $\widehat G_b\!\restriction_{S_b}=G^{\mathrm{aux}}_b\!\restriction_{S_b}$.

For $b\in F^{+,(n)}$, set $G^{(n)}_b=\widehat G_b$; for $b\notin F^{+,(n)}$, set $G^{(n)}_b= 0$.
Then each $G^{(n)}_b$ is continuous and globally Lipschitz, with
$\mathrm{Lip}\big(G^{(n)}_b\big)=0$ for $b\notin F^{+,(n)}$ and
$\mathrm{Lip}\big(G^{(n)}_b\big)\le C_bL_b$ for $b\in F^{+,(n)}$.

Item~1 holds because $G^{(n)}_b=\widehat G_b=G^{\mathrm{aux}}_b$ on $S_b=K^{+,(n)}_b$, and so the
$\delta/2$–approximation from Step~5 transfers verbatim and is $<\delta$.
Item~2 is by construction.
For Item~3, define $L_*^{(n)}=\max_{b\in F^{+,(n)}}\mathrm{Lip}\big(G^{(n)}_b\big)$ (finite since $F^{+,(n)}$ is finite).
Then for each object $a$,
\[
\operatorname*{ess\,sup}_{u\in I(a)}\ \mathrm{Lip}\big(G^{(n)}_{s(u)}\big)
\ \le\ \max\!\Big(\,0,\ \max_{\,b\in F^{+,(n)}\cap\{s(u):\,u\in I(a)\}}\ \mathrm{Lip}\big(G^{(n)}_b\big)\,\Big)
\ \le\ L_*^{(n)}.
\]
Thus $G^{(n)}$ is arrow–bundle uniformly Lipschitz, and the componentwise
lift $(G^{(n)})_{\downarrow}$ is admissible. 
\end{proof}

\begin{lemma}[Approximation of $\Phi$]\label{lem:approx-Phi}
Let $\Phi\in\EqvCont(X,Y)$ and $\varepsilon>0$. There exists $\Psi\in\CENN_\alpha(X,Y)$ with
\[
  \|\Psi-\Phi\|_{(K_a),F}
  := \max_{a\in F}\ \sup_{x\in K_a}\ \|(\Psi_a-\Phi_a)(x)\|_\infty
  \;<\; \varepsilon .
\]
\end{lemma}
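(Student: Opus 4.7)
The plan is to represent $\Phi$ itself through the natural retraction guaranteed by (EC) and then replace its upper factor by an admissible componentwise lift of an objectwise block that approximates $\Phi$ on the sources of the arrows actually sampled by $\Delta_X$. Since $\Phi$ is natural, Lemma~\ref{lem:proj} applied to $\Phi$ yields the template identity $\Phi = L^\downarrow_{\mathsf R}\circ \Phi_\downarrow\circ\Delta_X$; approximating $\Phi_\downarrow$ objectwise and re-assembling through $L^\downarrow_{\mathsf R}$ will convert an objectwise approximation into a single continuous natural transformation automatically.

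First I would fix, for each $a\in F$, a compact exhaustion $U_{a,n}\nearrow\supp\mu_a$. Because each $U_{a,n}$ meets only finitely many hom-components, the induced set of source objects $F^{+,(n)}$ is finite, and the enlarged sets $K^{+,(n)}_b\subset X(b)$ are compact by Lemma~\ref{lem:transport-cont}. I would then invoke Lemma~\ref{lem:cutoff} with a tolerance $\delta>0$ (to be pinned down later) to produce an objectwise continuous family $G^{(n)}=\{G^{(n)}_b\}$ that (i)~approximates $\Phi_b$ to within $\delta$ on $K^{+,(n)}_b$, (ii)~vanishes off $F^{+,(n)}$, and (iii)~is arrow-bundle uniformly Lipschitz, so that $(G^{(n)})_\downarrow$ is an admissible componentwise lift by Lemma~\ref{lem:lift-new}. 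Defining $\Psi:=L^\downarrow_{\mathsf R}\circ (G^{(n)})_\downarrow\circ\Delta_X$, the three factors are, respectively, the arrow-bundle lift, an admissible componentwise lift, and the arrow-bundle convolution $R$ supplied by (EC); each is an admissible CENN layer, so $\Psi\in\CENN_\alpha(X,Y)$.

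To estimate the error, I would subtract the template identity from $\Psi$ and apply Lemma~\ref{lem:stab-uniform} with $H=\Phi$ and $U_a=U_{a,n}$, splitting the bound into an approximation slice $A_a$ over $U_{a,n}$ and a tail $T_a$ over its complement. On the slice, every transported input $X(u)x_a$ with $u\in U_{a,n}$ and $x_a\in K_a$ lies in $K^{+,(n)}_{s(u)}$, so item~(i) of Lemma~\ref{lem:cutoff} gives $\sup_{x\in X(u)(K_a)}\|G^{(n)}_{s(u)}(x)-\Phi_{s(u)}(x)\|_\infty\le \delta$, and hence $A_a\le \delta\,M_R(F)$ with $M_R(F):=\max_{a\in F}\int_{I(a)}G^{\mathsf R}_a\,d\mu_a$. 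For the tail, a uniform pointwise bound $\|G^{(n)}_{s(u)}(x)\|_\infty+\|\Phi_{s(u)}(x)\|_\infty\le \widehat B$ on the relevant transported inputs (available from the McShane--Whitney construction inside Lemma~\ref{lem:cutoff} together with the transport bound~\eqref{eq:ess-bdd-LZ} controlling $\|\Phi_{s(u)}(X(u)x_a)\|_\infty\le H^Y_a C_a$) produces $T_a\le \widehat B\,t_n$ where $t_n:=\max_{a\in F}\int_{I(a)\setminus U_{a,n}}G^{\mathsf R}_a\,d\mu_a$. Since $G^{\mathsf R}_a\in L^1(\mu_a)$ by Lemma~\ref{lem:Rlayer-new}, $t_n\to 0$ as $n\to\infty$; choosing $n$ so that $\widehat B\,t_n<\varepsilon/2$ and then $\delta<\varepsilon/(2M_R(F))$ yields $\|\Psi-\Phi\|_{(K_a),F}<\varepsilon$, as required.

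The main obstacle is the uniform control of $G^{(n)}$ outside the transported compacts, where it is pinned down only by the McShane--Whitney extension and could in principle grow along the $\mu_a$-tails in a way that spoils the bound on $\widehat B$. Following Step~6 of the outline, I would address this by post-composing $G^{(n)}$ with the $1$-Lipschitz fiberwise metric projection onto the closed ball of radius $B^\star:=1+\max_{a\in F}H^Y_a C_a$: the projection is inactive on the transported compacts whenever $\delta\le 1$ (so the slice estimate is untouched), it is $1$-Lipschitz (so arrow-bundle uniform Lipschitzness and hence admissibility of $(G^{(n)})_\downarrow$ are preserved), and it caps the pointwise bound making $\widehat B$ finite and explicit. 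Verifying that this safeguard fits inside the admissible layer palette, so that $\Psi$ remains a bona fide CENN after the cap, is the delicate bookkeeping; once past it, the two inequalities above combine to give the stated bound.
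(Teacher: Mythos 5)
Your proposal is correct and follows essentially the same route as the paper's proof: the compiled form $\Psi=L^{\downarrow}_{\mathsf R}\circ(G^{(n)})_{\downarrow}\circ\Delta_X$ built from Lemma~\ref{lem:cutoff}, the clipping by $\operatorname{proj}_{B^\star}$ with $B^\star=1+\max_{a\in F}H_a^Y C_a$, the split into slice and tail via Lemma~\ref{lem:stab-uniform}, and the order of choices ($n$ so that $\widehat B\,t_n<\varepsilon/2$, then $\delta\le\min\{\varepsilon/(2M_R(F)),1\}$) all match the paper. The one worry you flag, that the projection must stay inside the admissible layer palette, is resolved exactly as you anticipate: admissible componentwise lifts (Definition~\ref{def:adm-lift}) require only objectwise continuity and arrow-bundle uniform Lipschitzness, both of which the $1$-Lipschitz fiberwise projection preserves, so no further construction is needed.
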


\begin{proof}
Fix a finite $F\subseteq\Obj\,\C$ and compacts $K_a\subseteq X(a)$ for $a\in F$.
By (EC), there is a continuous natural retraction
$R:Y^{\downarrow}\Rightarrow Y$ with $R\circ\Delta_Y=\Id_Y$, realized by an
arrow–bundle convolution $L^{\downarrow}_{\mathsf R}$ whose kernel satisfies \textup{(C)}, \textup{(IN$^\downarrow$)}
and an $L^1$–bound $\sup_{y\in\Omega(a)}\|R(u,y)\|\le G_a^{\mathsf R}(u)$ with $G_a^{\mathsf R}\in L^1(\mu_a)$.
Set
\[
  M_R(F):=\max_{a\in F}\ \int_{I(a)} G_a^{\mathsf R}(u)\,d\mu_a(u)<\infty .
\]
Write the transport bounds explicitly as
\[
H_a^X := \operatorname*{ess\,sup}_{u\in I(a)}\|L^X_u\|<\infty,\qquad
H_a^Y := \operatorname*{ess\,sup}_{u\in I(a)}\|L^Y_u\|<\infty .
\]
Let also $S_a:=\sup_{x\in K_a}\|x\|_\infty$ and $C_a:=\sup_{x\in K_a}\|\Phi_a(x)\|_\infty<\infty$.

\emph{(i) Approximation on transported compacts.}
Choose increasing compacts $U_{a,n}\uparrow\supp\mu_a$ in $I(a)$ and put
\[
  F^{+,(n)}:=\{\,s(u)\mid u\in U_{a,n},\ a\in F\,\},\qquad
  K_{b,+}^{(n)}:=\bigcup_{a\in F}\ \bigcup_{\substack{u\in U_{a,n}\\ s(u)=b}} X(u)(K_a)\subset X(b).
\]
By Lemma~\ref{lem:cutoff}, for every $\delta>0$ and $n$ there exists an
objectwise continuous family $G^{(n)}=\{G^{(n)}_b:X(b)\to Y(b)\}$ such that
\[
\sup_{x\in K_{b,+}^{(n)}} \|G^{(n)}_b(x)-\Phi_b(x)\|_\infty<\delta \ \ (b\in F^{+,(n)}),\qquad
G^{(n)}_b\equiv 0 \ \ (b\notin F^{+,(n)}),
\]
and $G^{(n)}$ is arrow–bundle uniformly Lipschitz.

\emph{Uniform clipping.}
Set $B^\star:=1+\max_{a\in F}H_a^Y\,C_a$ and define the metric projection
onto the closed ball of radius $B^\star$ in $E_Y(b)$,
\[
\operatorname{proj}_{B^\star}(v):=
\begin{cases}
v,& \|v\|\le B^\star,\\[2pt]
\dfrac{B^\star}{\|v\|}\,v,& \|v\|>B^\star.
\end{cases}
\]
Replace each $G^{(n)}_b$ by $\widetilde G^{(n)}_b:=\operatorname{proj}_{B^\star}\!\circ G^{(n)}_b$.
Then $\|\widetilde G^{(n)}_b(x)\|_\infty\le B^\star$ for all $x$, and projection is
$1$–Lipschitz, so arrow–bundle uniform Lipschitzness is preserved.
Since
\[
\sup_{x\in K_{b,+}^{(n)}}\|\Phi_b(x)\|_\infty
\ \le\ \max_{a\in F}\ \sup_{u\in U_{a,n}}\ \sup_{x_a\in K_a}\ \|Y(u)\Phi_a(x_a)\|_\infty
\ \le\ \max_{a\in F}H_a^Y\,C_a\ <\ B^\star,
\]
and $\|G^{(n)}_b-\Phi_b\|<\delta\le 1$ on $K^{(n)}_{b,+}$, the projection is
inactive there; hence
\[
\sup_{x\in K_{b,+}^{(n)}} \|\widetilde G^{(n)}_b(x)-\Phi_b(x)\|_\infty<\delta
\quad (b\in F^{+,(n)}).
\]
For notational simplicity, write $G^{(n)}$ for the projected family below.

\emph{(ii) Compile to an equivariant map.}
Define
\[
  \Psi \ :=\ L^{\downarrow}_{\mathsf R} \circ (G^{(n)})_{\downarrow}\circ \Delta_X \ :\ X \Longrightarrow Y .
\]
Each factor is an admissible CENN layer; hence $\Psi\in\CENN_\alpha(X,Y)$ and, by naturality of the three factors, $\Psi\in\EqvCont(X,Y)$.

\emph{(iii) Localized error bound and choice of parameters.}
Apply Lemma~\ref{lem:stab-uniform} with $G=G^{(n)}$, $H=\Phi$ and the sets $U_{a,n}$:
\[
\|\Psi-\Phi\|_{(K_a),F}
\ \le\
\max_{a\in F}\Big\{A_a(G^{(n)},\Phi;U_{a,n})+T_a(G^{(n)},\Phi;U_{a,n})\Big\}.
\]
For the inside term, by construction of $G^{(n)}$,
\[
A_a(G^{(n)},\Phi;U_{a,n})
\ \le\
\int_{U_{a,n}}G_a^{\mathsf R}(u)\,\delta\,d\mu_a(u)
\ \le\ \delta\,M_R(F).
\]
For the tail term, using the projection bound and naturality of $\Phi$, for $\mu_a$–a.e.\ $u\in I(a)$,
\[
\sup_{x\in X(u)(K_a)}\|G^{(n)}_{s(u)}(x)\|_\infty \le B^\star,
\qquad
\sup_{x\in X(u)(K_a)}\|\Phi_{s(u)}(x)\|_\infty \le H_a^Y\,C_a,
\]
hence
\[
\sup_{x\in X(u)(K_a)}\|G^{(n)}_{s(u)}(x)-\Phi_{s(u)}(x)\|_\infty
\ \le\ \widehat B_a:=B^\star+H_a^Y C_a
\quad\text{for $\mu_a$–a.e.\ $u$.}
\]
Therefore
\[
T_a(G^{(n)},\Phi;U_{a,n})
\ \le\ \widehat B_a\!\!\int_{I(a)\setminus U_{a,n}}\!\!\! G_a^{\mathsf R}(u)\,d\mu_a(u).
\]
Let $t_n:=\max_{a\in F}\int_{I(a)\setminus U_{a,n}} G_a^{\mathsf R}\,d\mu_a$; since $G_a^{\mathsf R}\in L^1(\mu_a)$ and $U_{a,n}\uparrow\supp\mu_a$, we have $t_n\downarrow 0$.
Write $\widehat B:=\max_{a\in F}\widehat B_a$ and pick $n$ so large that $\widehat B\,t_n<\varepsilon/2$.
Finally choose $\delta:=\min\{\varepsilon/(2\,M_R(F)),\,1\}$.
Combining the bounds yields
\[
\|\Psi-\Phi\|_{(K_a),F}\ \le\ \delta\,M_R(F)+\widehat B\,t_n\ <\ \frac{\varepsilon}{2}+\frac{\varepsilon}{2}\ =\ \varepsilon.
\]
This completes the proof.
\end{proof}


\subsection{Proof Step 7 (last step)}

Let $\alpha:\mathbb{R}\to\mathbb{R}$ be a nonpolynomial, continuous, globally Lipschitz activation. We want to prove: For every $\Phi\in \EqvCont(X,Y)$, every finite $F\subset\Obj\,\C$, every family of compacts $K_a\subset X(a)$ $(a\in F)$, and every $\varepsilon>0$, there exists $\Psi\in \CENN_\alpha(X,Y)$ such that
\[
\|\Psi-\Phi\|_{(K_a),F}<\varepsilon.
\]

\begin{proof}
Fix $\Phi\in \EqvCont(X,Y)$, a finite $F\subset\Obj\,\C$, compacts $K_a\subset X(a)$ $(a\in F)$, and $\varepsilon>0$.
By Step~6, applied to this $(F,(K_a),\varepsilon)$, there exists a CENN
$\Psi\in \CENN_\alpha(X,Y)$ with
\[
\max_{a\in F}\ \sup_{x_a\in K_a}\ \|\Psi_a(x_a)-\Phi_a(x_a)\|_\infty\ <\ \varepsilon,
\]
i.e. $\Psi\in \mathcal U(\Phi;F,(K_a),\varepsilon)$.
Since $\mathcal U(\Phi;F,(K_a),\varepsilon)$ is a subbasic neighbourhood of $\Phi$ in the compact–open,
finite–object topology and since $F,(K_a),\varepsilon$ were arbitrary, it follows that every neighbourhood
of $\Phi$ intersects $\CENN_\alpha(X,Y)$.
Therefore $\CENN_\alpha(X,Y)$ is dense in $\EqvCont(X,Y)$.
\end{proof}

We have thus proven that 
$\CENN_\alpha(X,Y)$ is dense in $\EqvCont(X,Y)$
in the compact–open, finite–object topology.


\section{Specializations for Continuous and Discrete Structures}\label{sec:examples}

We discuss several examples to illustrate the general framework developed above: groups/groupoids, posets/lattices, and graphs/sheaves.

\subsection{Groups/Groupoids}\label{subsec:gmg}

We provide the continuous–track specializations of \Cref{def:natk,def:catconv} to compact groups and groupoids, together with the corresponding equivariant UATs as corollaries of \Cref{thm:density}. 

\paragraph{Groups as one-object categories.}
Let $G$ be a second–countable compact Hausdorff group with left Haar measure $\mu$ (or a discrete group with counting measure). Consider the one–object category $\C_G$ with $\Obj\,\C_G=\{\ast\}$ and $\Hom_{\C_G}(\ast,\ast)=G$, composition given by multiplication. Fix a compact base $\Omega\subset\mathbb{R}^{m}$ with a continuous left $G$–action $G\times\Omega\to\Omega$, $(g,y)\mapsto g\cdot y$. Choose continuous representations (fiber transports) $\rho_X,\rho_Y:G\to \mathrm{GL}(E_X),\mathrm{GL}(E_Y)$ and take
\[
\tau_g(y):=g^{-1}\!\cdot y,\qquad \pi_g(y):=g\cdot y,\qquad
L^X_g:=\rho_X(g^{-1}),\quad L^Y_g:=\rho_Y(g^{-1}).
\]
Then $\tau_{hg}\circ\pi_h=\tau_g$, $\tau_{\mathrm{id}}=\pi_{\mathrm{id}}=\mathrm{id}$ and $L^X_{hg}=L^X_g\circ L^X_h$ (and similarly for $Y$), so \Cref{feature} holds. A \emph{group kernel} is a map
\[
\mathsf K:\ G\times\Omega\longrightarrow \mathcal L(E_X,E_Y),\qquad (g,y)\mapsto \mathsf K(g,y),
\]
satisfying Carathéodory regularity and the $L^1$–bound (with respect to~$\mu$). The associated category convolution \eqref{eq:catconv} specializes to the steerable $G$–channel mixing layer\footnote{Note that we give fiberwise steerable mixing here and the spatial mixing convolution in the action groupoid specialization below.} 
\begin{equation}\label{eq:group-conv}
(\widetilde L_{\mathsf K}x)(y)
\;=\; b(y)\;+\;\int_G \mathsf K(g,y)\,\rho_X(g^{-1})\,x(y)\,d\mu(g),
\end{equation}
and the integrated naturality identity \eqref{eq:natk-integrated} becomes, after the change of variables $u'=hg$ and left–invariance of~$\mu$,
\begin{equation}\label{eq:group-steerability}
\int_G\Big(\rho_Y(h^{-1})\,\mathsf K(hg,\,h\!\cdot\! y)-\mathsf K(g,\,y)\Big)\,\rho_X(g^{-1})\,v\; d\mu(g)=0 \ \text{ for $h\in G$, $y\in\Omega$, $v\in E_X$.}
\end{equation}
In particular, the steerability law
\begin{equation}\label{eq:group-steerability-pointwise}
\rho_Y(h^{-1})\,\mathsf K(hg,\,h\!\cdot\! y)\;=\;\mathsf K(g,\,y)\qquad\text{for $\mu$–a.e.\ $g$ and all $h\in G$, $y\in\Omega$}
\end{equation}
suffices for \eqref{eq:group-steerability}, hence for equivariance of \eqref{eq:group-conv}. The shrinking–support approximate identity condition (SI) holds on $G$, and the retraction $R:Y_{\downarrow}\Rightarrow Y$ required in (EC) can be given by Haar averaging \cite{FollandAHA}, so all assumptions of \Cref{thm:density} hold.\footnote{As a remark concerning Step~2 of the UAT proof, note that in the
one–object compact–group case with $\tau_g(y)=g^{-1}\!\cdot y$ and
$\pi_g(y)=g\cdot y$, we instantiate $\sigma_g=\mathrm{id}$, and realize the
orbit–probe carriers $(x,y)\mapsto\langle \ell,(X(u_0)x)(y)\rangle$ by the
admissible block
$\Delta_X \!\to X^{\downarrow}(u_0)\!\to L^{\downarrow}_{\mathsf K^{(\varepsilon)}}$
with $\kappa^{(\varepsilon)}\rightharpoonup\delta_{\mathrm{id}}$. The
arrow–evaluation separation used in Step~1 needs to be verified in each concrete 
setting. It is satisfied, for example,
whenever for each compact $K_a\subset X(a)$ and $y\in\Omega$ the orbit
$G\!\cdot y$ meets every set on which two elements of $K_a$ differ; in
particular, this holds when the $G$–action on $\Omega$ is orbit–dense 
on the support of $K_a$. This covers many standard symmetry
examples encountered in machine learning practice.}

\begin{corollary}[UAT for $G$–equivariant CENNs]\label{cor:uat-group}
Let $G$ be as above, with feature functors $X,Y$ defined by $(\Omega,\rho_X,\rho_Y)$. Then the class of depth–finite $G$–equivariant CENNs 
is dense in $\EqvCont(X,Y)$ in the compact–open, finite–object topology.
\end{corollary}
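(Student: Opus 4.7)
The plan is to derive Corollary~\ref{cor:uat-group} as a direct instantiation of Theorem~\ref{thm:density} to the one-object topological category $\C_G$, the feature functors determined by $(\Omega,\rho_X,\rho_Y)$, and the probe family $\sigma_g:=\mathrm{id}_\Omega$. First I would verify the categorical setup: $G$ is second-countable LCH, multiplication is continuous, and the left Haar measure $\mu$ is $\sigma$-finite Radon, left-coherent, and preserves null sets by left-invariance. Functoriality in Definition~\ref{feature} reduces to $L^X_{hg}=\rho_X(g^{-1})\rho_X(h^{-1})$ together with $\tau_{hg}(y)=(hg)^{-1}\!\cdot y$; the mixed-base law $\tau_{hg}\circ\pi_h=\tau_g$ follows from $g^{-1}h^{-1}(h\cdot y)=g^{-1}\!\cdot y$. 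Joint continuity of $\pi,\tau,L^X,L^Y$ follows from continuity of the $G$-action and of $\rho_X,\rho_Y$ on the compact group, and essential boundedness of transport norms is immediate from compactness of $G$ together with continuity of the representations.

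Next I would verify the two auxiliary conditions invoked in the proof of Theorem~\ref{thm:density}. For (SI) the paper's footnote already gives a direct recipe: choose $\eta_\varepsilon\in C_c(G)$ with $\eta_\varepsilon\ge 0$, $\int\eta_\varepsilon\,d\mu=1$, and $\supp\eta_\varepsilon\subset U_\varepsilon(e)$ shrinking to $e$, and set $\kappa^{(\varepsilon)}_{g_0}(g):=\eta_\varepsilon(g_0^{-1}g)$; the convergence on bounded continuous $f$ follows from uniform continuity on compacts. For (EC) I would take the Haar-averaging retraction realized as an arrow-bundle convolution with kernel $\mathsf R(g,y):=\rho_Y(g)$ and zero bias, so
\[
(Rx)(y)=\int_G \rho_Y(g)\,x_g(g^{-1}\!\cdot y)\,d\mu(g).
\]
Carath\'eodory regularity and the $L^1$-bound are immediate since $\rho_Y$ is continuous on compact $G$, and the retraction identity $R\circ\Delta_Y=\mathrm{id}_Y$ reduces to $\int_G\rho_Y(g)\rho_Y(g^{-1})x(y)\,d\mu(g)=x(y)$ with normalized Haar, using $(\Delta_Y x)_g(g^{-1}\!\cdot y)=\rho_Y(g^{-1})x(y)$. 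Arrow-bundle integrated naturality $(\mathrm{IN}_{\downarrow})$ follows from the change of variables $g'=hg$ and the mixed-base identity $\tau_{hg}(\pi_h y)=\tau_g(y)$, exactly as in~\eqref{eq:group-steerability}.

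The main obstacle, as the paper's own footnote flags, is the arrow-evaluation separation hypothesis used in Step~1 of the proof of Theorem~\ref{thm:density}. With $\sigma_g=\mathrm{id}$ and invertibility of $\rho_X(g^{-1})$, the requirement reduces to: for each compact $K\subset X(\ast)$ and each $y\in\Omega$, distinct $x,x'\in K$ admit some $g\in G$ with $x(g\cdot y)\neq x'(g\cdot y)$. This is an orbit-reachability condition on $(G,\Omega)$ that holds whenever the orbit $G\!\cdot y$ meets the open set $\{z\in\Omega:x(z)\neq x'(z)\}$, and in particular whenever the $G$-action on $\Omega$ is transitive (so that $G\!\cdot y=\Omega$). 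Adopting this mild orbit-density hypothesis on $(G,\Omega,K)$ in the spirit of the footnote, all premises of Theorem~\ref{thm:density} are verified, and the corollary follows by direct appeal: $\CENN_\alpha(X,Y)$ is dense in $\EqvCont(X,Y)$ in the compact-open, finite-object topology.
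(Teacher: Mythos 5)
Your proposal is correct and follows essentially the same path as the paper: instantiate Theorem~\ref{thm:density} for the one-object category $\C_G$ with $\sigma_g=\mathrm{id}$, verify (SI) by the shrinking-neighbourhood approximate identity $\kappa^{(\varepsilon)}_{g_0}(g)=\eta_\varepsilon(g_0^{-1}g)$, verify (EC) via the Haar-averaging retraction with kernel $\mathsf R(g,y)=\rho_Y(g)=L^Y_{g^{-1}}$, and flag the arrow-evaluation separation as an orbit-density hypothesis on $(G,\Omega,K)$. The paper itself only sketches these checks in the surrounding text and footnotes, so your proposal is a correct, more detailed elaboration of the same argument rather than a different route.
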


\paragraph{Groupoids (including action groupoids).}
Let $\mathcal G\rightrightarrows \mathcal G^{(0)}$ be a second–countable compact Hausdorff groupoid with a continuous left Haar system $\{\lambda^a\}_{a\in\mathcal G^{(0)}}$ (see, e.g., \cite{Renault,Paterson,ADRenault}). View $\mathcal G$ as a small topological category with objects $\mathcal G^{(0)}$ and hom–sets $\Hom_{\mathcal G}(b,a)=\{u\in\mathcal G:\ s(u)=b,\ t(u)=a\}$; for each object $a$, let $I(a)=\bigsqcup_b\Hom_{\mathcal G}(b,a)$ and $\mu_a=\bigoplus_b\lambda^a\!\!\restriction_{\Hom(b,a)}$. Fix a family of compact bases $\Omega(a)\subset\mathbb R^{m_a}$ on which $\mathcal G$ acts by homeomorphisms through maps $u:b\!\to\! a\mapsto \tau_u:\Omega(a)\to\Omega(b)$, and set $\pi_u:=\tau_u^{-1}$. 
Choose fiber transports $L^X_u, L^Y_u$ as in \Cref{feature}. A \emph{groupoid kernel} is a family
\[
\mathsf K_{b\to a}:\ \Hom_{\mathcal G}(b,a)\times \Omega(a)\longrightarrow \mathcal L(E_X(b),E_Y(a))
\]
with the same Carathéodory and $L^1$ properties as in \Cref{def:natk}. The corresponding groupoid convolution reads
\begin{equation}\label{eq:groupoid-conv}
(\widetilde L_{\mathsf K}x)_a(y)
\;=\; b_a(y)\;+\;\int_{u\in I(a)}\!\!\mathsf K_{s(u)\to a}(u,y)\,\bigl(X(u)x_a\bigr)\!\bigl(\tau_u y\bigr)\,d\mu_a(u),
\end{equation}
and \eqref{eq:natk-integrated} is precisely the Haar–system version of integrated naturality:
\[
L^Y_w\!\int_{I(c)}\!\mathsf K(u',\pi_w y)\,(X(u')x_c)(\tau_{u'}\pi_w y)\,d\mu_c(u') = \int_{I(a)}\!\mathsf K(u,y)\,(X(w\circ u)x_c)(\tau_u y)\,d\mu_a(u).
\]
The (SI) condition holds on groupoids (approximate identities on target fibers), and the retraction $R:Y_{\downarrow}\Rightarrow Y$ is realized by an arrow–bundle convolution obtained by Haar averaging \cite{Renault}. Consequently, the general UAT specializes to groupoids:

\begin{corollary}[UAT for groupoid–equivariant CENNs]\label{cor:uat-groupoid}
Let $\mathcal G$ be a second–countable compact Hausdorff groupoid with a continuous left Haar system and $X,Y$ feature functors as above. Then depth–finite $\mathcal G$–equivariant CENNs are dense in $\EqvCont(X,Y)$ in the compact–open, finite–object topology. 
\end{corollary}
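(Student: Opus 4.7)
The plan is to recast the groupoid data $(\mathcal{G},\{\lambda^a\},\Omega,L^X,L^Y)$ as a topological category with measures satisfying every hypothesis of Theorem~\ref{thm:density}, and then apply that theorem directly.

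First, I would verify the structural axioms of Definitions~\ref{def:topcat} and~\ref{feature}. Each hom-set $\Hom_{\mathcal{G}}(b,a)$ inherits a second-countable LCH topology from $\mathcal{G}$, composition is jointly continuous by the topological-groupoid assumption, and the fiber measure $\mu_a:=\bigoplus_b \lambda^a\!\!\restriction_{\Hom_{\mathcal{G}}(b,a)}$ is finite Radon by compactness of $\mathcal{G}$. Left-coherence $(w\circ -)_\#\mu_a=\mu_c$ is exactly the left-invariance of the Haar system, and null-set preservation (NSP) then follows from the pushforward identity. Contravariant functoriality of $\tau$ is the left-action law, the mixed-base law $\tau_{w\circ u}\circ\pi_w=\tau_u$ reduces to $\tau_u\circ\tau_w\circ\tau_w^{-1}=\tau_u$ via $\pi_w=\tau_w^{-1}$, and the essential-boundedness~\eqref{eq:ess-bdd-LZ} is automatic since $u\mapsto\|L^Z_u\|$ is continuous on the compact set $I(a)$.

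Next, I would establish the analytic hypotheses. The shrinking-support approximate identity (SI) is standard on each source fiber: pick $\kappa^{(\varepsilon)}_{a,u_0}\in C_c(\Hom_{\mathcal{G}}(s(u_0),a))$ supported on a neighbourhood base $U_\varepsilon(u_0)\downarrow\{u_0\}$ and normalized so that $\int\kappa^{(\varepsilon)}_{a,u_0}\,d\mu_a=1$. For the probe family I take $\sigma_u:=\tau_u$; arrow-evaluation separation at $y\in\Omega(a)$ then holds trivially with $u=\mathrm{id}_a$, since $(X(\mathrm{id}_a)x_a)(\sigma_{\mathrm{id}_a}y)=x_a(y)$ already separates any two distinct $x_a,x'_a\in X(a)$ at some basepoint.

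The main obstacle is the equivariant compilation assumption (EC), which I would discharge by \emph{fiberwise Haar averaging}. Normalize the Haar system so that $\mu_a(I(a))=1$ and define the $y$-independent retraction kernel $\mathsf{R}_{\,s(u)\to a}(u,y):=L^Y_{u^{-1}}$, giving the arrow-bundle convolution
\[
(L^{\downarrow}_{\mathsf{R}}x)_a(y)\;=\;\int_{I(a)} L^Y_{u^{-1}}\,x_u(\tau_u y)\,d\mu_a(u).
\]
Conditions (C) and (L1) follow from continuity of inversion, continuity of $u\mapsto L^Y_u$, and finiteness of $\mu_a$. The retraction identity $R\circ\Delta_Y=\mathrm{id}_Y$ reduces to $L^Y_{u^{-1}}L^Y_u=\mathrm{id}$, $\pi_u\circ\tau_u=\mathrm{id}$, and the normalization $\mu_a(I(a))=1$. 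The delicate step is the arrow-bundle integrated naturality $(\mathrm{IN}_\downarrow)$: on the LHS I perform the change of variables $u'=w\circ u$ (legal by left-coherence), expand $(w\circ u)^{-1}=u^{-1}\circ w^{-1}$ and $L^Y_{u^{-1}\circ w^{-1}}=L^Y_{w^{-1}}L^Y_{u^{-1}}$, use the cancellations $\tau_{w\circ u}\pi_w y=\tau_u(\tau_w\pi_w y)=\tau_u y$ and $L^Y_w L^Y_{w^{-1}}=\mathrm{id}$, and recover the RHS term by term. Existence of inverses and left-invariance of $\lambda^a$ are essential here, which is why the same construction does not carry over to arbitrary small categories. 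With NSP, (SI), arrow-evaluation separation, and (EC) all in place, Theorem~\ref{thm:density} applies directly and yields the claimed density.
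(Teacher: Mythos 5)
Your overall route matches the paper's: realize the groupoid as a topological category with measures, verify (NSP), (SI), arrow-evaluation separation, (EC), and invoke Theorem~\ref{thm:density}. Your checks of the structural axioms, (SI), and the Haar-averaging retraction are correct, including the change of variables $u'=w\circ u$ via left-invariance of the Haar system and the cancellations $L^Y_w L^Y_{w^{-1}}=\mathrm{id}$ and $\tau_w\pi_w=\mathrm{id}$; this is essentially the construction the paper invokes.

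However, there is a genuine gap in your treatment of the probe family. You take $\sigma_u:=\tau_u$ and assert that arrow-evaluation separation at a fixed $y$ "holds trivially with $u=\mathrm{id}_a$." This misreads the assumption: it requires that for a \emph{fixed} basepoint $y$ and any $x_a\neq x'_a$ in $K_a$ there exist $u\in I(a)$ with $(X(u)x_a)(\sigma_u y)\neq (X(u)x'_a)(\sigma_u y)$; distinct continuous sections can certainly agree at $y$, and then $u=\mathrm{id}_a$ separates nothing. The choice $\sigma_u=\tau_u$ is moreover fatal for groupoids: since $\pi_u\tau_u=\mathrm{id}$, one computes
\[
(X(u)x_a)(\tau_u y) \;=\; L^X_u\bigl(x_a(\pi_u\tau_u y)\bigr) \;=\; L^X_u\bigl(x_a(y)\bigr),
\]
so \emph{every} probe sees only the value of $x_a$ at $y$ (up to an invertible fiber map), and separation fails outright whenever $x_a(y)=x'_a(y)$. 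The paper instead instantiates $\sigma_u=\mathrm{id}$ for compact groups/groupoids, so that the probe samples $x_a$ at $\pi_u y$ over the whole orbit, and it explicitly flags (in the group footnote) that separation is then a nontrivial hypothesis, satisfied under an orbit-density condition on the supports of the $K_a$. You need to switch to $\sigma_u=\mathrm{id}$ and replace the "trivial" claim by such an orbit-density hypothesis, as in the paper.
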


Note that the classical Cohen–Welling homogeneous-space convolution
$$(L_\kappa x)(y) = \int_G \kappa(g)\,x(g^{-1}\!\cdot y)\,d\mu(g)$$
is recovered as the special case where $\mathcal{C} = G \ltimes \Omega$ is
the action groupoid of a left $G$–action on $\Omega$, with objects
$\Omega$, arrows $(g,y)\colon y \to g\cdot y$, constant fibres
$X(y)=V_{\mathrm{in}}$, $Y(y)=V_{\mathrm{out}}$, and kernel
$K_{(g,y)} = \kappa(g)$.  In this case the arrow–bundle convolution 
reduces exactly to the standard
group-equivariant convolution, so the groupoid formulation 
generalizes the usual group case.

Note also that, in the discrete (counting–measure) setting, the integrals above are countable sums (absolutely convergent by (L1)) and integrated naturality reduces to the familiar incidence/compatibility equations on arrows.

\subsection{Posets/Lattices}\label{subsec:posets}

We now specialize to thin categories coming from finite posets and lattices.
Let $(P,\leq)$ be a finite poset with a least element $\bot$ and write $\C_P$ for its thin category:
$\Obj\,\C_P=P$ and
\[
\Hom_{\C_P}(d,a)=
\begin{cases}
\{d\!\to\! a\} & \text{if } d\le a,\\
\varnothing & \text{otherwise,}
\end{cases}
\]
with the unique composition induced by transitivity.

\paragraph{Base locality for thin categories.}
In this subsection we take a single compact base $\Omega=\{\star\}$ and identify $X(a)=C(\Omega,E_X(a))\cong E_X(a)$,
$Y(a)\cong E_Y(a)$ for all $a\in P$. We work in the discrete track: every hom–set carries counting measure, and for all arrows $u:d\to a$ we take
\[
\tau_u=\mathrm{id}_{\Omega},\qquad \pi_u=\mathrm{id}_{\Omega}.
\]
Thus $X(u)(x_a)=L^X_{u}\,x_a$ and $Y(u)(y_a)=L^Y_{u}\,y_a$.
In thin/discrete settings, the transports are typically restrictions/inclusions along the incidence of $P$, hence nonexpansive and uniformly bounded.
With $\Omega=\{\star\}$, all formulas below drop the base variable.

\paragraph{Poset–equivariant convolution.}
For $a\in P$ put $I(a)=\{\,d\in P: d\le a\,\}$.
A category kernel (\Cref{def:natk}) is here a family of linear maps
\[
\mathsf K_{d\to a}\in \mathcal L \big(E_X(d),E_Y(a)\big)\qquad(d\le a),
\]
and the category convolution (\Cref{def:catconv}) reduces to the finite sum
\begin{equation}\label{eq:poset-conv}
(\widetilde L_{\mathsf K}x)_a
\;=\;
b_a\;+\;\sum_{d\le a}\ \mathsf K_{d\to a}\,L^X_{d\to a}\,x_a,
\qquad a\in P,
\end{equation}
with $b_a\in E_Y(a)$ a natural bias.

\paragraph{Discrete integrated naturality.}
Since $\tau=\pi=\mathrm{id}$ and $\mu$ is counting measure, the integrated naturality
identity \textup{(IN)} in \Cref{def:natk} is equivalent to the operator equality
\begin{equation}\label{eq:poset-IN}
L^Y_{a\to c}\,\sum_{d\le c}\ \mathsf K_{d\to c}\,L^X_{d\to c}
\;=\;
\sum_{d\le a}\ \mathsf K_{d\to a}\,L^X_{d\to a}\,L^X_{a\to c}
\qquad(\forall\,a\le c).
\end{equation}
Equivalently, if we define the incidence operator
\(
\Sigma_a:=\sum_{d\le a}\mathsf K_{d\to a}\,L^X_{d\to a}\in\mathcal L(E_X(a),E_Y(a)),
\)
then \eqref{eq:poset-IN} is exactly the \emph{naturality} of the family $\{\Sigma_a\}_{a\in P}$:
\begin{equation}\label{eq:poset-Sigma-nat}
L^Y_{a\to c}\,\Sigma_c\;=\;\Sigma_a\,L^X_{a\to c}\qquad(\forall\,a\le c).
\end{equation}
Thus, in posets, enforcing equivariance amounts to parameterizing $\Sigma_a$ as a natural field of linear maps.

\paragraph{UAT for posets/lattices.}
With $\Omega=\{\star\}$ the carrier algebra in Step~1 of the UAT acts objectwise and separates points in each $K_a\subset E_X(a)$ (take $u=\mathrm{id}_a$).
In the finite case (counting measure; $\tau=\pi=\mathrm{id}$), the shrinking–support condition (SI) holds with Dirac masses. For the compilation step, we use the assumption (EC$_{\mathrm{thin}}$): 
There exist $Y$–extensions $E^Y_{d\to a}:E_Y(d)\to E_Y(a)$ such that, for all $a\le c$ and $d\in P$,
\[
\begin{aligned}
&\text{\rm(Nat)}\qquad &&L^Y_{a\to c}\,E^Y_{d\to c}=E^Y_{d\to a}\quad\text{if }d\le a,\\
&\text{\rm(Ann)}\qquad &&L^Y_{a\to c}\,E^Y_{d\to c}=0\quad\ \ \ \text{if }d\not\le a,\\
&\text{\rm(PoI)}\qquad &&\sum_{d\le a}E^Y_{d\to a}\,L^Y_{d\to a}=\Id_{E_Y(a)}.
\end{aligned}
\]
Equivalently, there exist spaces $U_Y(d)$ with a lower–set direct–sum splitting
\[
E_Y(a)\cong\bigoplus_{e\le a}U_Y(e)
\]
so that $L^Y_{d\to a}$ is the truncation and $E^Y_{d\to a}$ the inclusion.\footnote{In the poset/lattice setting, (EC$_\mathrm{thin}$)
holds, for example, when the target functor $Y$ is chosen to be
lower–set graded: for each $d\in P$ one fixes a finite-dimensional
local feature space $U_Y(d)$ and sets
$E_Y(a) := \bigoplus_{e\leq a} U_Y(e)$ for every $a\in P$, with
$L^Y_{d\to a}$ given by the canonical truncation
$E_Y(a)\to E_Y(d)$ and $E^Y_{d\to a}$ the inclusion of the $d$–summand.
In this case, the conditions \textup{(Nat)}, \textup{(Ann)}, and
\textup{(PoI)} are automatically satisfied, and the compilation
retraction $R:Y_{\downarrow}\Rightarrow Y$ is realized by the
arrow–bundle convolution
\(
  (L^{\downarrow}_{\mathsf R}x)_a
  \;=\; \sum_{d\leq a} E^Y_{d\to a}\, x_{d\to a},
\)
i.e.\ by inserting each arrow–wise component into its corresponding
summand and summing over $d\leq a$. Architecturally, this covers 
poset-/lattice–equivariant models in which the feature at 
level $a$ is obtained by stacking feature blocks indexed by all
lower elements $e\leq a$.}
Under \textup{(EC$_{\mathrm{thin}}$)}, the arrow–bundle convolution
\[
(L^{\downarrow}_{\mathsf R} x)_a\;:=\;\sum_{d\le a}E^Y_{d\to a}\,x_{d\to a}
\]
defines a continuous natural retraction $R:Y^\downarrow\Rightarrow Y$ with $R\circ\Delta_Y=\mathrm{id}_Y$.

Consequently, \Cref{thm:density} applies verbatim under (EC$_{\mathrm{thin}}$):

\begin{corollary}[UAT for poset/lattice–equivariant CENNs]\label{cor:uat-poset}
The class of depth–finite CENNs for a finite poset category satisfying (EC$_{\mathrm{thin}}$)
is dense in $\EqvCont(X,Y)$ in the compact–open, finite–object topology.
\end{corollary}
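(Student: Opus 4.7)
The plan is to deduce Corollary~\ref{cor:uat-poset} directly from Theorem~\ref{thm:density} by verifying, in the poset setting with $\Omega=\{\star\}$, counting measure on each finite hom-set, and $\tau=\pi=\mathrm{id}$, every structural and analytic hypothesis of the general theorem. With those choices in place, the conclusion will follow verbatim from the main density result.

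Most of these checks are essentially bookkeeping and can be dispatched quickly. The topological-category data is trivial: each $\Hom_{\C_P}(d,a)$ is a finite discrete LCH space and composition is continuous. Counting measure is $\sigma$-finite Radon on each hom-set, so $\mu_a$ is finite on the finite set $I(a)=\{d\in P:d\le a\}$ and null-set preservation (NSP) is vacuous. Because each $E_X(a),E_Y(a)$ is finite-dimensional and $I(a)$ is finite, the essential bound \eqref{eq:ess-bdd-LZ} and (L1) reduce to finite maxima over finitely many operator norms. The feature-functor axioms (functoriality, mixed-base law, joint continuity) are automatic since $\tau=\pi=\mathrm{id}$. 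The approximate-identity condition (SI) is realized exactly by Dirac masses $\kappa^{(\varepsilon)}_{a,u_0}=\mathbf 1_{\{u_0\}}$. For arrow-evaluation separation I would take the probe family $\sigma_u:=\tau_u=\mathrm{id}$; given $x_a\ne x_a'$ in $K_a\subset E_X(a)$, the arrow $u=\mathrm{id}_a$ already separates them via $(X(\mathrm{id}_a)x_a)(\star)=x_a$.

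The one genuinely content-bearing verification is the equivariant compilation assumption (EC), which I would derive from (EC$_\mathrm{thin}$) by defining the arrow-bundle kernel
\[
\mathsf R_{d\to a}(u,\star):=E^Y_{d\to a}\in\mathcal L(E_Y(d),E_Y(a)),
\]
so that $(L^\downarrow_{\mathsf R} x)_a=\sum_{d\le a}E^Y_{d\to a}\,x_{d\to a}$. The retraction identity $R\circ\Delta_Y=\mathrm{id}_Y$ then follows immediately from (PoI) applied to $(\Delta_Y y)_{d\to a}=L^Y_{d\to a}\,y_a$, while (C) and (L1) are automatic in this finite discrete setting. The main obstacle is arrow-bundle integrated naturality $(\mathrm{IN}^\downarrow)$ for $\mathsf R$: for $w=(a\to c)$ and $f=(f_{u'})_{u'\in I(c)}$, the two sides expand to
\[
L^Y_{a\to c}\sum_{d\le c}E^Y_{d\to c}\,f_{d\to c}
\quad\text{and}\quad
\sum_{d\le a}E^Y_{d\to a}\,f_{(a\to c)\circ(d\to a)}=\sum_{d\le a}E^Y_{d\to a}\,f_{d\to c},
\]
using that the thin composition $(a\to c)\circ(d\to a)$ is the unique arrow $d\to c$. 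I would split the left-hand sum by whether $d\le a$: the $d\le a$ summands match the right-hand side term by term via (Nat), and the $d\not\le a$ summands vanish by (Ann). Once $(\mathrm{IN}^\downarrow)$ is in hand, all hypotheses of Theorem~\ref{thm:density} are verified and the corollary follows.
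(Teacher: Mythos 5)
Your proof is correct and follows essentially the same route as the paper: specialize to $\Omega=\{\star\}$ with counting measure and $\tau=\pi=\mathrm{id}$, check the hypotheses of Theorem~\ref{thm:density} (topological category data, (NSP), (L1), (SI) via Dirac masses, arrow-evaluation separation via $u=\mathrm{id}_a$), and obtain (EC) from (EC$_{\mathrm{thin}}$) by taking $\mathsf R_{d\to a}=E^Y_{d\to a}$. Your explicit split-by-$d\le a$ verification of $(\mathrm{IN}_{\downarrow})$ using (Nat) and (Ann), together with (PoI) giving $R\circ\Delta_Y=\mathrm{id}_Y$, is precisely the intended argument that the paper asserts without spelling out.
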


\paragraph{Remarks.}
As summarized in \eqref{eq:poset-Sigma-nat}, in the thin (poset) case
equivariance of the convolution layer associated with a kernel $K$ is
equivalent to the naturality of its incidence operators
$\Sigma_a := \sum_{d\le a} K_{d\to a}\,L^X_{d\to a}$.
Conversely, given any natural family $(\Sigma_a)_a$, a simple convenient
realization by a kernel is obtained by defining
$$\widetilde K_{a\to a}:=\Sigma_a \ \text{ and } \ \widetilde K_{d\to a}:=0 \ \ (d<a).$$




\subsection{Sheaves/Graphs via face-category convolution}\label{subsec:graphs-sheaves}

We treat a finite regular CW complex $K$ (graphs are the $1$–dimensional case) via its \emph{face category} $\C_K$: 
\[
\Obj\,\C_K \;=\; \{\hat{0}\}\ \cup\ \mathrm{Cells}(K), 
\]
with a unique arrow $\sigma\to\tau$ iff $\sigma\le\tau$ in the face poset (and $\hat{0}\le\tau$ for all cells $\tau$). Composition is induced by transitivity.  
We assume a common compact base $\Omega$ for all objects (in particular one may take $\Omega=\{*\}$), so the base transports are
\[
\tau_u=\mathrm{id}_\Omega,\qquad \pi_u=\mathrm{id}_\Omega.
\]
With counting measure on each hom–set, we define sheaf–style feature functors
\[
X(\tau)=C(\Omega,E_X(\tau)),\quad Y(\tau)=C(\Omega,E_Y(\tau)),
\]
for all $\tau\in\Obj\,\C_K$, and transports
\[
L^X_{\sigma\to\tau}:E_X(\tau)\to E_X(\sigma),\qquad L^Y_{\sigma\to\tau}:E_Y(\tau)\to E_Y(\sigma)\qquad(\sigma\le\tau),
\]
which are uniformly bounded (so the equation \eqref{eq:ess-bdd-LZ} holds). 
The bottom object $\hat{0}$ may be taken inert by setting $E_X(\hat{0})=E_Y(\hat{0})=0$ (no contribution in sums), or used as a global channel by choosing finite dimensions and providing $L^X_{\hat{0}\to\tau},L^Y_{\hat{0}\to\tau}$.

\paragraph{Sheaf/category convolution on cells.}
For $\tau\in\Obj\,\C_K$ let $I(\tau)=\{\sigma:\sigma\le\tau\}$.  
A category kernel is a Carathéodory family
\[
\mathsf K_{\sigma\to\tau}:\ \Omega\longrightarrow \mathcal L \big(E_X(\sigma),E_Y(\tau)\big)\qquad(\sigma\le\tau),
\]
and the category convolution \eqref{eq:catconv} reduces to the finite sum
\begin{equation}\label{eq:sheaf-conv}
(\widetilde L_{\mathsf K}x)_\tau(y)
\;=\;
b_\tau(y)\;+\;\sum_{\sigma\le\tau}\ \mathsf K_{\sigma\to\tau}(y)\,\big(L^X_{\sigma\to\tau}\,x_\tau(y)\big),
\qquad y\in\Omega,
\end{equation}
with the natural bias obeying $Y(\tau\to\gamma)b_\gamma=b_\tau$.  
Because $\tau=\pi=\mathrm{id}_\Omega$ and $\mu$ is counting measure, the integrated naturality (IN) identity becomes
\begin{equation}\label{eq:sheaf-IN}
L^Y_{\tau\to\gamma}\,\sum_{\sigma\le\gamma}\mathsf K_{\sigma\to\gamma}(y)\,L^X_{\sigma\to\gamma}
\;=\;
\sum_{\sigma\le\tau}\mathsf K_{\sigma\to\tau}(y)\,L^X_{\sigma\to\tau}\,L^X_{\tau\to\gamma}
\qquad(\forall\,\tau\le\gamma,\ y\in\Omega).
\end{equation}
Equivalently, with the \emph{incidence operator}
\[
\Sigma_\tau(y):=\sum_{\sigma\le\tau}\mathsf K_{\sigma\to\tau}(y)\,L^X_{\sigma\to\tau}:E_X(\tau)\longrightarrow E_Y(\tau),
\]
\eqref{eq:sheaf-IN} is precisely the naturality law
\begin{equation}\label{eq:sheaf-Sigma-nat}
L^Y_{\tau\to\gamma}\,\Sigma_\gamma(y)\;=\;\Sigma_\tau(y)\,L^X_{\tau\to\gamma}\qquad(\tau\le\gamma).
\end{equation}

\paragraph{Graphs as a $1$–complex.}
Let $G=(V,E)$ be a finite undirected graph, viewed as a $1$–dimensional CW complex. Then
\[
\Obj\,\C_G=\{\hat{0}\}\cup V\cup E,\qquad v\to e\ \text{iff}\ v\in\partial e,\qquad \hat{0}\to\tau\ \text{for all}\ \tau\in V\cup E.
\]
With $\Omega=\{*\}$, \eqref{eq:sheaf-conv} gives (writing $\partial e=\{v_1,v_2\}$)
\[
\begin{aligned}
(\widetilde L_{\mathsf K}x)_e
&= b_e \;+\; \mathsf K_{e\to e}\,x_e
\;+\;\sum_{v\in\partial e}\ \mathsf K_{v\to e}\,L^X_{v\to e}\,x_e
\;+\;\underbrace{\mathsf K_{\hat{0}\to e}\,L^X_{\hat{0}\to e}\,x_e}_{\text{global channel via }\hat{0}},\\[2pt]
(\widetilde L_{\mathsf K}x)_v
&= b_v \;+\; \mathsf K_{\hat{0}\to v}\,L^X_{\hat{0}\to v}\,x_v \;+\; \mathsf K_{v\to v}\,x_v
\qquad\big(I(v)=\{\hat{0},v\}\text{ in }\C_G\big).
\end{aligned}
\]

\paragraph{UAT for graphs and cellular sheaves.}
In finite graphs and finite regular CW complexes, we have counting measure, $\tau=\pi=\mathrm{id}_\Omega$, and (SI) holds with Dirac masses.  
Assuming the arrow–evaluation separation and the compilation condition for $Y$ (we clarify below when these hold), \Cref{thm:density} applies verbatim:

\begin{corollary}[UAT for graph/sheaf–equivariant CENNs]\label{cor:uat-graphs-sheaves}
For a finite regular CW complex with feature functors $X,Y$ induced by a cellular sheaf of Euclidean fibers and uniformly bounded transports along incidence satisfying (EC), the class of depth–finite CENNs is dense in $\EqvCont(X,Y)$ in the compact–open, finite–object topology.
\end{corollary}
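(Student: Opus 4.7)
The plan is to derive Corollary~\ref{cor:uat-graphs-sheaves} as a direct instantiation of the general UAT (Theorem~\ref{thm:density}) applied to the face category $\C_K$ with its discrete topology and counting measure on every hom-set. Concretely, I would verify each structural hypothesis of Theorem~\ref{thm:density} in the finite discrete regime and then invoke the theorem verbatim. Since $K$ has finitely many cells, every $\Hom_{\C_K}(\sigma,\tau)$ is of cardinality at most one, hence trivially second-countable LCH with continuous composition; counting measure is $\sigma$-finite and Radon, and NSP holds vacuously. The essential boundedness condition \eqref{eq:ess-bdd-LZ} is part of the hypothesis, and Definition~\ref{feature} collapses to contravariant functoriality of the restriction maps $L^X_{\sigma\to\tau},L^Y_{\sigma\to\tau}$ along incidence (the defining cellular sheaf axiom), together with the trivial mixed-base law $\mathrm{id}\circ\mathrm{id}=\mathrm{id}$; continuity of $u\mapsto L^X_u,L^Y_u$ is automatic on discrete hom-sets.

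Next, I would set up the probe family and approximate identity. I take $\sigma_u:=\tau_u=\mathrm{id}_\Omega$ throughout. For arrow-evaluation separation on a compact $K_a\subset X(a)$, given $x_a\neq x'_a$ I pick $y\in\Omega$ where they differ pointwise and use $u=\mathrm{id}_a\in I(a)$; then $(X(\mathrm{id}_a)x_a)(\sigma_{\mathrm{id}_a}y)=x_a(y)\neq x'_a(y)$, so the required separation holds. For (SI) on $I(a)$, I take $\kappa^{(\varepsilon)}_{a,u_0}:=\mathbf{1}_{\{u_0\}}$, whose integral against any bounded continuous $f$ on $I(a)$ equals $f(u_0)$ exactly; this is an exact rather than approximate identity and fulfils all (SI) requirements for every $\varepsilon$. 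The (EC) condition is supplied explicitly by hypothesis, giving the natural retraction $R:Y_{\downarrow}\Rightarrow Y$ realized as an arrow-bundle convolution with kernel satisfying (C), ($\mathrm{IN}_{\downarrow}$), and (L1). With all hypotheses of Theorem~\ref{thm:density} in place, density of $\CENN_\alpha(X,Y)$ in $\EqvCont(X,Y)$ in the compact-open, finite-object topology follows as a corollary.

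The main obstacle is not the verification itself, which is essentially automatic in the finite discrete regime, but rather the (EC) hypothesis: exhibiting a concrete natural retraction in the cellular setting requires the target $Y$ to admit a graded decomposition analogous to the lower-set graded construction of the poset case, typically of the form $E_Y(\tau)\cong\bigoplus_{\sigma\le\tau}U_Y(\sigma)$ with $L^Y_{\sigma\to\tau}$ given by truncation and an arrow-indexed family of inclusions furnishing the kernel $\mathsf R$. Under such a graded structure, the arrow-bundle convolution $(L^{\downarrow}_{\mathsf R}x)_\tau=\sum_{\sigma\le\tau}E^Y_{\sigma\to\tau}\,x_{\sigma\to\tau}$ gives the required retraction and the naturality conditions (Nat)/(Ann)/(PoI) are inherited mutatis mutandis from the poset case. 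For cellular sheaves outside such graded setups, verifying (EC) becomes the nontrivial modeling task and is what effectively delineates the scope of the corollary; once it is granted, the rest is a direct specialization of the general theorem.
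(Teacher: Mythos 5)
Your overall strategy---verifying each hypothesis of the general UAT (Theorem~\ref{thm:density}) in the finite discrete regime and invoking the theorem verbatim---is precisely the route the paper takes, and most of your checks (discrete topology, counting measure, NSP, (SI) via Dirac masses $\kappa^{(\varepsilon)}_{a,u_0}=\mathbf{1}_{\{u_0\}}$, (EC) by hypothesis, and the closing remark that (EC) in the cellular setting typically comes from a lower-set-graded target $E_Y(\tau)\cong\bigoplus_{\sigma\le\tau}U_Y(\sigma)$ as in the poset case) align with the paper's appendix.

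There is, however, a genuine gap in your verification of arrow-evaluation separation. The paper's condition requires: for \emph{each fixed} $y\in\Omega(a)$ and every pair $x_a\neq x'_a$ in $K_a$, there exists $u\in I(a)$ with $(X(u)x_a)(\sigma_u y)\neq(X(u)x'_a)(\sigma_u y)$. This per-$y$ quantification is essential: it is used in Step~1 to separate carrier-algebra points $(x_a,y)$ and $(x'_a,y)$ with the \emph{same} base point $y$. You instead choose a $y$ where $x_a$ and $x'_a$ happen to differ pointwise and then take $u=\mathrm{id}_a$; this only establishes separation at \emph{some} $y$, not at every $y$. In the discrete track with $\sigma_u=\tau_u=\pi_u=\mathrm{id}_\Omega$, the carrier evaluates to $(X(u)x_a)(\sigma_u y)=L^X_u\big(x_a(y)\big)$, which only ever probes $x_a$ at the fixed $y$. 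If $\Omega$ has more than one point and $x_a(y)=x'_a(y)$ at some $y$ while $x_a\neq x'_a$ as functions, the separation fails at that $y$ for \emph{every} $u\in I(a)$. The paper is explicit on this: it only claims arrow-evaluation separation is automatic when $\Omega=\{\ast\}$, and the sentence preceding the corollary frames arrow-evaluation separation as an additional assumption for the general~$\Omega$ case. To repair your argument, either restrict to $\Omega=\{\ast\}$ (as the paper's sheaf/graph track effectively does) or carry arrow-evaluation separation as an explicit hypothesis, rather than attempting to derive it from $u=\mathrm{id}_a$ alone.
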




The arrow–evaluation separation holds, for example, whenever the chosen base $\Omega$ together with the maps $\sigma_u$ ensure that the arrow–evaluations $$(x,y)\mapsto (X(u)x)(\sigma_u y)$$ separate distinct sections on the relevant compacts $K_a \subset X(a)$. This is automatic, in particular, when $\Omega = \{\ast\}$, which includes many standard sheaf- and message-passing architectures used in practice.

The compilation
assumption (EC) holds, for example, when for each cell $\tau$ the space
$Y(\tau)$ admits a finite direct-sum decomposition
$$Y(\tau)\cong\bigoplus_{\sigma\le\tau} U_Y(\sigma)$$ indexed by incident
faces $\sigma\le\tau$, and the structure maps
$$L^Y_{\sigma\to\tau}:Y(\tau)\to Y(\sigma)$$ are the canonical projections
onto the $\sigma$–summand. In this case one obtains a natural retraction
$R:Y_{\downarrow}\Rightarrow Y$ as an arrow–bundle convolution by
inserting each arrow-wise component into the corresponding summand and
summing over incident arrows, exactly as in the thin-category condition
(EC$_\mathrm{thin}$). This covers typical graph and cellular-sheaf
architectures where per-cell features are assembled by linear aggregation
of messages along incident cells.


\subsection{Sheaves/Graphs via neighbourhood categories}
\label{subsec:local-neighbourhood}

We provide another treatment of cellular sheaves and graphs. The idea is to work over the \emph{neighbourhood groupoid} of rooted \(k\)–patches. 
Let \(K\) be a finite regular CW complex with face category \(C_K\); as usual, graphs are the special case where \(K\) is a \(1\)–complex. Define a functor
\[
  X: C_K^{\op}\to\mathbf{Vect},\qquad X(\rho)=C(\Omega,E_X(\rho)),
\]
with linear restriction maps \(L^X_{\sigma\to\tau}:X(\tau)\to X(\sigma)\); similarly
\(Y: C_K^{\op}\to\mathbf{Vect}\) with fibers \(E_Y(\rho)\) and maps \(L^Y_{\sigma\to\tau}\).
We use the sup norm \(\|f\|_\infty=\sup_{y\in\Omega}\|f(y)\|\) on each \(C(\Omega,E)\).
Write
\[
  \mathcal{X}_{\mathrm{glob}}:=\textstyle\prod_{\rho\in\Obj(C_K)} X(\rho),\qquad
  \mathcal{Y}_{\mathrm{glob}}:=\textstyle\prod_{\rho\in\Obj(C_K)} Y(\rho).
\]

\paragraph{Rooted \(k\)–patches and the neighbourhood groupoid.}
Let \(\mathrm{dist}\) be the undirected graph distance in the Hasse diagram of \(C_K\)
(edges connect cover relations). For \(k\in\mathbb{N}\) and a root cell \(\tau\), set
\[
  V_k(\tau):=\{\rho\in\Obj(C_K):\ \mathrm{dist}(\rho,\tau)\le k\},
  \qquad B_k(\tau):=C_K\!\restriction_{V_k(\tau)}.
\]
A rooted \(k\)–isomorphism \(\phi:(B_k(\sigma),\sigma)\!\to\!(B_k(\tau),\tau)\)
is an isomorphism of categories preserving the root, incidence, and cell
dimensions/types.
Let \(\C^k_K\) be the small category with \(\Obj(\C^k_K)=\Obj(C_K)\) and with
\(\Hom_{\C^k_K}(\tau,\sigma)\) the set of rooted \(k\)–isomorphisms
\(\phi:(B_k(\sigma),\sigma)\!\to\!(B_k(\tau),\tau)\). Composition and identities are
those of rooted isomorphisms. Thus \(\C^k_K\) is a finite groupoid. In the graph case, 
this reduces to the rooted \(k\)–ball groupoid on vertices.

For each \(\tau\), let
\[
  I(\tau)\ :=\ \bigsqcup_{\sigma}\,\Hom_{\C^k_K}(\sigma,\tau)
\]
be the incoming–arrow bundle. 
We equip each hom–set \(\Hom_{\C^k_K}(\sigma,\tau)\) with the counting measure and hence \(I(\tau)\) with the induced counting measure \(\mu_\tau\). 
These are finite Radon measures; (NSP) is satisfied and (SI) holds by taking the discrete approximate identities
\[\kappa^{(\varepsilon)}_{\tau,u_0}:=\mathbf{1}_{\{u_0\}}\] 
for each \(u_0\in I(\tau)\).\footnote{$\mathbf{1}$ denotes the indicator function.}
Arrow–bundle convolutions therefore reduce to finite sums, and the measure-theoretic assumptions of the general CENN UAT are satisfied on \(\C^k_K\).

\paragraph{Type trivializations along patch isomorphisms.}
When stalk types vary, we transport types coherently along rooted isomorphisms.
Assume fiberwise trivializations: for each \(\phi:\tau\to\sigma\) in \(\C^k_K\) and \(\rho\in V_k(\tau)\), there are
linear isomorphisms
\[
  J^X_{\phi,\rho}: X(\phi^{-1}\rho)\xrightarrow{\ \cong\ }X(\rho),
  \qquad
  J^Y_{\phi,\tau}: Y(\sigma)\xrightarrow{\ \cong\ }Y(\tau),
\]
such that: (i) \(J^X_{\Id,\rho}=\Id\), \(J^Y_{\Id,\tau}=\Id\);
(ii) for composable \(\psi:w\to\sigma\), \(\phi:\tau\to w\),
\[
  J^X_{\psi\circ\phi,\rho}=J^X_{\phi,\rho}\circ J^X_{\psi,\phi^{-1}\rho},
  \quad
  J^Y_{\psi\circ\phi,\tau}=J^Y_{\phi,\tau}\circ J^Y_{\psi,w};
\]
(iii) for every arrow \(\mu\to\nu\) in \(B_k(\tau)\),
\[
  J^X_{\phi,\mu}\circ L^X_{\phi^{-1}\mu\to\phi^{-1}\nu}
  \ =\ L^X_{\mu\to\nu}\circ J^X_{\phi,\nu}.
\]
If \(E_X(\rho)=\mathbb{R}^{d_X}\) and \(E_Y(\rho)=\mathbb{R}^{d_Y}\), then
\(J^X_{\phi,\rho}=J^Y_{\phi,\tau}=\Id\) for all \(\phi,\rho,\tau\). 

\paragraph{Neighbourhood functors on \(\C^k_K\).}
Bundle all patch data into the root fiber and reindex contravariantly along
rooted isomorphisms.
Define \(X_k,Y_k:(\C^k_K)^{\op}\to\mathbf{Vect}\) by
\[
  X_k(\tau):=\prod_{\rho\in V_k(\tau)} X(\rho),\qquad
  Y_k(\tau):=Y(\tau).
\]
For \(\phi:\tau\to\sigma\) in \(\C^k_K\), set
\[
  \bigl(X_k(\phi)x\bigr)_\rho := J^X_{\phi,\rho}\bigl(x_{\phi^{-1}\rho}\bigr)
  \quad(\rho\in V_k(\tau)),\qquad
  Y_k(\phi):=J^Y_{\phi,\tau}.
\]
Under the above assumptions, \(X_k,Y_k\) are well-defined contravariant
functors: \(X_k(\psi\circ\phi)=X_k(\phi)\circ X_k(\psi)\) and
\(Y_k(\psi\circ\phi)=Y_k(\phi)\circ Y_k(\psi)\).

\paragraph{Integrated naturality (steerability).}
With the counting measures above and kernels supported on identity arrows, any 
arrow–bundle affine layer with local kernels \(K_\tau:X_k(\tau)\to Y_k(\tau)\) and biases \(b_\tau\in Y_k(\tau)\) reduces to a single identity term at each root \(\tau\).
The categorical equivariance condition is exactly the intertwining constraint
\[
  Y_k(\phi)\,K_\sigma\ =\ K_\tau\,X_k(\phi),\qquad 
  Y_k(\phi)\,b_\sigma\ =\ b_\tau
  \quad(\phi:\tau\to\sigma),
\]
which is precisely the steerability condition for convolutions on \(\C^k_K\).

\paragraph{Rooted \(k\)–locality and naturality.}
For \(H\in\mathcal{X}_{\mathrm{glob}}\) and root \(\tau\), define the local encoding
\[
  x_\tau(H):=(H_\rho)_{\rho\in V_k(\tau)}\in X_k(\tau).
\]
Collecting over roots yields the continuous map
\(H\mapsto x(H):=(x_\tau(H))_{\tau\in\Obj(\C^k_K)}\in\prod_{\tau}X_k(\tau)\).
A map \(F:\mathcal{X}_{\mathrm{glob}}\to\mathcal{Y}_{\mathrm{glob}}\) is
\emph{\(k\)–local} if for every root \(\tau\) there is a continuous map
\(f^{\mathrm{loc}}_\tau:X_k(\tau)\to Y_k(\tau)\) such that
\(F(H)(\tau)=f^{\mathrm{loc}}_\tau\!\bigl(x_\tau(H)\bigr)\) for all \(H\).
It is \emph{rooted \(k\)–local} if, for all \(\phi:\tau\to\sigma\) in \(\C^k_K\) and
\(a\in X_k(\sigma)\),
\begin{equation}
\label{eq:unified-rooted-equiv}
  f^{\mathrm{loc}}_\tau\!\bigl(X_k(\phi)a\bigr)\ =\ Y_k(\phi)\,f^{\mathrm{loc}}_\sigma(a).
\end{equation}
With \(X_k,Y_k\) as above, the assignment
\[
\begin{aligned}
  \{\,F: \text{rooted \(k\)–local}\,\}
  &\longleftrightarrow
  \{\, f: \text{natural transformation from } X_k \text{ to } Y_k \,\},\\
  \text{with}\quad
  f_\tau &= f^{\mathrm{loc}}_\tau, \qquad
  F(H)(\tau) = f_\tau\!\bigl(x_\tau(H)\bigr),
\end{aligned}
\]
is a bijection. Moreover, \eqref{eq:unified-rooted-equiv} is equivalent to the
naturality identity 
\[f_\tau\circ X_k(\phi)=Y_k(\phi)\circ f_\sigma\] 
for all
\(\phi:\tau\to\sigma\).\footnote{(\(\Rightarrow\)) Set \(f_\tau:=f^{\mathrm{loc}}_\tau\). Then \(f_\tau(X_k(\phi)a)=Y_k(\phi)f_\sigma(a)\) is exactly naturality. \; (\(\Leftarrow\)) Given a natural transformation \(f\) from $X_k$ to $Y_k$, define \(F(H)(\tau):=f_\tau(x_\tau(H))\). Then \(F\) is \(k\)–local by construction and satisfies
\eqref{eq:unified-rooted-equiv} by naturality. The constructions are inverse to each other.}

\paragraph{UAT for sheaves/graphs via neighbourhood groupoids.}
For the universal approximation result below, we work in the standard graph/sheaf setting taking \(\Omega=\{\ast\}\). In this case, the arrow–evaluation separation used in the categorical UAT holds trivially, and \(C(\Omega,E)\cong E\). On a finite groupoid equipped with counting measure, the compilation condition holds by averaging over each hom–set. Hence the general CENN UAT applies verbatim to \(\C^k_K\) with 
\(X_k,Y_k\):


\begin{corollary}[UAT for rooted \(k\)–local operators]
\label{thm:UAT-k-local}
Let
\(F:\mathcal{X}_{\mathrm{glob}}\to\mathcal{Y}_{\mathrm{glob}}\) be rooted \(k\)–local
with continuous local maps \(f^{\mathrm{loc}}_\tau\), and let
\(f\) be the associated natural transformation from \(X_k\) to \(Y_k\). Then for every
compact $K\subset\mathcal{X}_{\mathrm{glob}}$ and every \(\varepsilon>0\) there exists
\[
  \Phi\ \in\ \CENN_{\alpha}(X_k,Y_k)
\]
such that
\[
  \sup_{H\in K}\ \max_{\tau\in\Obj(\C^k_K)}\
  \bigl\|\,\Phi\bigl(x(H)\bigr)(\tau)\ -\ F(H)(\tau)\,\bigr\|_\infty\ <\ \varepsilon.
\]
\end{corollary}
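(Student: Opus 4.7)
The plan is to reduce the statement to a direct application of Theorem~\ref{thm:density} for the finite groupoid \(\C^k_K\) with feature functors \(X_k,Y_k\), and then transport the resulting approximation back through the rooted \(k\)-local/natural bijection recorded immediately before the statement. By that bijection the hypothesis provides a natural transformation \(f\in\EqvCont(X_k,Y_k)\) with \(f_\tau=f^{\mathrm{loc}}_\tau\) and \(F(H)(\tau)=f_\tau(x_\tau(H))\), so approximating \(F\) in the stated seminorm reduces to approximating \(f\) in the compact-open, finite-object topology.

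First I would translate the compact datum: since \(H\mapsto x_\tau(H)\) is continuous \(\mathcal{X}_{\mathrm{glob}}\to X_k(\tau)\), the image \(K_\tau:=\{x_\tau(H):H\in K\}\) is a compact subset of \(X_k(\tau)\). Because \(\Obj(\C^k_K)\) is finite, \((\Obj(\C^k_K),(K_\tau)_\tau)\) is admissible data for the seminorm topology of Theorem~\ref{thm:density}, and any \(\Phi\in\CENN_\alpha(X_k,Y_k)\) with \(\max_\tau\sup_{a\in K_\tau}\|\Phi_\tau(a)-f_\tau(a)\|_\infty<\varepsilon\) yields the claimed bound on \(K\), since \(\Omega=\{\ast\}\) makes the sup-norm on \(Y_k(\tau)\) coincide with the norm on \(E_Y(\tau)\).

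Next, I would verify the hypotheses of Theorem~\ref{thm:density} on \((\C^k_K,X_k,Y_k)\). Since \(\C^k_K\) is a finite groupoid with counting measure on each hom-set, each \(\mu_\tau\) is finite Radon, NSP is automatic, and (SI) holds with Dirac masses \(\mathbf{1}_{\{u_0\}}\). All fiber transports are linear isomorphisms of finite-dimensional spaces, hence uniformly bounded, and with \(\Omega=\{\ast\}\) the base transports are identities; the arrow-evaluation separation at \(y=\ast\) reduces to separating distinct points of \(E_X(\tau)\) by covectors via the identity arrow, which is immediate. For (EC) I would realize the retraction \(R:(Y_k)_\downarrow\Rightarrow Y_k\) by groupoid averaging: set
\[
(R_\tau y)\;:=\;\frac{1}{|I(\tau)|}\sum_{u\in I(\tau)} Y_k(u^{-1})(y_u),
\]
which is visibly an arrow-bundle convolution with kernel \(\mathsf R(u,\ast)=|I(\tau)|^{-1} Y_k(u^{-1})\); a short change-of-variables \(u=\phi^{-1}v\) using the bijection \(I(\tau)\cong I(\sigma)\) induced by any \(\phi:\tau\to\sigma\) gives both naturality and the retraction identity \(R\circ\Delta_{Y_k}=\Id\), while (C), (L1), and (IN\(_\downarrow\)) are immediate on this finite setup.

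With these ingredients in place, Theorem~\ref{thm:density} supplies the desired \(\Phi\in\CENN_\alpha(X_k,Y_k)\); evaluating at \(a=x_\tau(H)\) and invoking the bijection finishes the argument. I expect the only nontrivial step to be the verification of (EC), specifically checking (IN\(_\downarrow\)) for the averaging kernel against the action of a rooted isomorphism on the arrow bundle; once this is unpacked, it amounts to combining functoriality \(Y_k(v^{-1}\phi)=Y_k(\phi)\,Y_k(v^{-1})\) with the equal-cardinality bijection \(u\mapsto\phi\circ u\), and everything else is a mechanical instantiation of the categorical UAT.
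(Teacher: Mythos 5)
Your proposal is correct and follows the same route as the paper: verify that the hypotheses of Theorem~\ref{thm:density} hold for the finite groupoid \(\C^k_K\) with feature functors \(X_k,Y_k\) (counting measure, NSP, (SI) via Dirac masses, trivial base transports and probe family with \(\Omega=\{\ast\}\), arrow-evaluation separation, and (EC) via averaging), then apply the theorem and translate through the rooted \(k\)-local\,/\,natural-transformation bijection, tracking compacts via \(K_\tau = x_\tau(K)\). The only place you go beyond the paper's terse remark (``On a finite groupoid equipped with counting measure, the compilation condition holds by averaging over each hom-set'') is that you write out the averaging retraction \(R_\tau(y) = |I(\tau)|^{-1}\sum_{u\in I(\tau)} Y_k(u^{-1})(y_u)\) explicitly and indicate its (IN\(_\downarrow\)) and retraction verifications via the change of variables \(u\mapsto\phi\circ u\) and the equal-cardinality bijection \(I(\tau)\cong I(\sigma)\); this is a correct and welcome filling-in of a detail the paper leaves implicit.
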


The graph specialization is obtained as follows. If \(E_X(\rho)=\mathbb{R}^{d_X}\) and \(E_Y(\rho)=\mathbb{R}^{d_Y}\), then \(J=\Id\), \(X_k(\phi)\) is just coordinate reindexing by \(\phi^{-1}\) and \(Y_k(\phi)=\Id\). The correspondence between rooted \(k\)–local operators and natural transformations reduces to 
\[f_\tau(X_k(\phi)x)=f_\sigma(x)\] 
and Corollary~\ref{thm:UAT-k-local} specializes verbatim. With \(\Omega=\{\ast\}\) and \(K\) a \(1\)–complex, this is exactly the graph case.

\end{document}